\newcommand{\rb}{\rangle}
\newcommand{\lb}{\langle}
\renewcommand{\Pr}{\mathbf{Pr}}
\newcommand{\R}{\mathbb{R}}
\newcommand{\E}{\mathbb{E}}
\newcommand{\N}{\mathcal{N}}
\newtheorem{theorem}{Theorem}[section]
\newtheorem{lemma}{Lemma}[section]
\newtheorem{proposition}{Proposition}[section]
\begin{document}

\title{Deep Haar Scattering Networks}

\author{{%%%% First author details
\sc Xiuyuan Cheng}$^*$,\\[2pt]
Applied Mathematics Program, Yale University, CT, USA\\
xiuyuan.cheng@yale.edu\\[2pt]
%%%%%%% Second author details
{\sc Xu Chen}\\[2pt]
Electrical Engineering Department,
Princeton University, NJ, USA\\
{ xuchen@princeton.edu }\\[2pt]
%%%%%%%
{\sc and}\\[2pt]
%%%%%%% Third author details
{\sc St\'ephane Mallat} \\[2pt]
Computer Science Department, \'Ecole Normale Sup\'erieure, Paris, France\\
{stephane.mallat@ens.fr}}

\maketitle

%%%%%%%%%%%%%
\begin{abstract}
{
An orthogonal Haar scattering transform is a deep network,
computed with a hierarchy of
additions, subtractions and absolute values, over pairs of coefficients.
It provides a simple mathematical model for unsupervised deep network learning.
It implements non-linear contractions, which are optimized for classification,
with an unsupervised pair matching algorithm, of polynomial complexity.
A structured Haar scattering over graph data computes permutation invariant
representations of groups of connected points in the graph. If the graph connectivity
is unknown, unsupervised Haar pair learning can provide a consistent estimation of
connected dyadic groups of points. Classification results are given on image
data bases, defined on regular grids or graphs, with a connectivity which may
be known or unknown.
}
{ %Insert keyword text here.
deep learning, neural network, scattering transform, Haar wavelet, classification, 
images, graphs
}
%%%% If classification number provided then
\\
2000 Math Subject Classification: 68Q32, 68T45, 68Q25, 68T05
% 68T05: Learning and adaptive systems 
% 68T10  	Pattern recognition, speech recognition
% 68T45  	Machine vision and scene understanding
% 68Q32  Computational learning theory
% 68W40  Analysis of algorithms
% 68Q25  Analysis of algorithms and problem complexity
% 91E40  	Memory and learning (91Exx	Mathematical psychology)
\end{abstract}

%%%%%%%%%%%%%%%%%%%%%%%%%%%%%%%%%%%%%%%%%%%%%

%%%%%%%%%%%%%%%%%%%%%%%%%%%%%%%%%%%%%%%%%%%%%
\section{ Introduction}

Deep neural networks appear to provide scalable learning architectures for high-dimensional learning, with impressive results on many different type of data and signals \cite{bengio2013representation}. Despite their efficiency, there is still little understanding on the properties of these architectures.  Deep neural networks alternate pointwise  linear operators, whose coefficients are optimized with training examples, with pointwise non-linearities. To obtain good classification results, strong constraints are imposed on the network architecture on the support of these linear operators \cite{LeCun}. These constraints are usually derived from an experimental trial and error processes.

Section \ref{sec:2} introduces a simple deep Haar scattering architecture, which only computes the sum of pairs of coefficients, and the absolute value of their difference. The architecture preserves some important properties of deep networks, while reducing the computational complexity and simplifying their mathematical analysis. Through this architecture, we shall address major questions concerning invariance properties, learning complexity, consistency, and the specialization of such architectures.

Convolution networks are particular classes of deep networks, which compute translation invariant descriptors of signals defined over uniform grids \cite{LeCun, sermanet2013overfeat}. Scattering networks were introduced as convolution networks computed with iterated wavelet transforms, to obtain invariants which are stable to deformations \cite{mallat2012group}. 
With appropriate architecture constraints on Haar scattering networks, 
Section \ref{sec:3} defines locally displacement invariant representations of signals defined on general graphs. In social, sensor or transportation 
networks, high dimensional data vectors are supported on 
a graph \cite{shuman2013emerging}. 
In most cases, propagation phenomena require to define translation invariant representations for classification.  We show that an appropriate configuration of an orthogonal Haar scattering defines such a translation invariant representation on a graph. It is computed with a product of Haar wavelet transforms on the graph, and is thus closely related to non-orthogonal translation invariant scattering transforms \cite{mallat2012group}.

The connectivity of graph data is often unknown. In social or financial networks, we typically have information on individual agents, without knowing the interactions and hence
connectivity between agents. Building invariant representations on such graphs requires to estimate the graph connectivity. Such information can be inferred from unlabeled data, by  analyzing the joint variability of signals defined on the unknown graph. This paper studies unsupervised learning strategies, which optimize deep network configurations, without class label information.

Most deep neural networks  are fighting the curse of dimensionality by reducing the variance of the input data with contractive non-linearities \cite{rifai2011contractive, bengio2013representation}. The danger of such contractions is to nearly collapse together vectors which belong to different classes. 
Learning must
preserve discriminability despite this variance reduction resulting from contractions.  
Hierarchical unsupervised
architectures have been shown to provide efficient learning 
strategies \cite{anselmi2013unsupervised}. 
We show that unsupervised learning can optimize
an average discriminability by computing sparse features.
Sparse unsupervised learning, which is usually NP hard, is reduced to a pair matching problem for Haar scattering. It can thus be computed with a polynomial complexity algorithm. For Haar scattering on graphs, it
recovers a hierarchical connectivity of groups of vertices.
Under appropriate assumptions, we prove that pairing problems avoid the curse of dimensionality. It can recover an exact connectivity with arbitrary high probability
if the training size grows almost linearly with the signal dimension, 
as opposed to exponentially. 

Haar scattering classification architectures are numerically tested
over image databases defined on uniform grids or irregular graphs, 
whose geometries are either known or estimated by unsupervised learning. 
Results are compared with state of the art unsupervised and supervised
classification algorithms, applied to the same data, with 
known or unknown geometry.
All computations can be reproduced with a software available at {\it{www.di.ens.fr/data/scattering/haar}}.

%%%%%%%%%%%%%%%%%%%%%%%%%%%%%%%%%%%%%%%%%%%%%

\section{Free Orthogonal Haar Scattering}\label{sec:2}

\subsection{Orthogonal Haar filter contractions}
\label{freesec}

We progressively introduce orthogonal Haar scattering 
by specializing a general deep neural network. We 
explain the architecture constraints and the resulting
contractive properties.

The input network is a positive
$d$-dimensional signal $x \in (\R^+)^d$,  which we write $S_0 x  = x$. 
We denote by $S_j x$ the network layer at the depth $j$.  A deep neural network computes the next network layer by applying a linear operator $H_j$ to $S_{j} x$ followed by a non-linear operator. 
Particular deep network architectures impose that $H_j$ preserves distances, up to a constant normalization factor $\lambda$ \cite{ngiam2010tiled}:
\[
\|H_j y - H_j y'\| = \lambda\, \|y -y'\|~.
\]
The network is contractive if it applies a pointwise contraction $\rho$ to each value of the output vector $H_j S_j x$. 
This means that for any $(a,b) \in \R^2$
$|\rho(a) - \rho(b)| \leq |a-b|$. Rectifications and sigmoids are examples
of such contractions. We use an absolute value $\rho(a)=|a|$ because
it preserves the amplitude, and it yields a permutation invariance which
will be studied. For any vector $y = (y(n))_n$, the pointwise absolute value is written $|y| = (|y(n)|)_n$. The next network layer is thus:
\begin{equation}
\label{permasnbsdfu}
S_{j+1} x = | H_j S_j x|~. 
\end{equation}
This transform is iterated up to a maximum depth $J \leq \log_2(d)$ to compute the network output $S_J x$.

We shall further impose that each layer $S_j x$ has the same dimension as $x$, 
and hence that $H_j$ is an orthogonal operator in $\R^d$, up to the  scaling factor $\lambda$.
Geometrically, $S_{j+1} x$ is thus obtained by rotating $S_j x$ with $H_j$, 
and by contracting each of its coordinate with the absolute
value. The geometry of this contraction is thus defined by
the choice of the operator $H_j$ which adjusts
the one-dimensional directions along which the contraction is performed. 

An orthogonal Haar scattering is implemented with an orthogonal Haar filter $H_j$. 
The vector $H_j y$ regroups the coefficients of $y \in \R^d$ into $d/2$ pairs
and computes their sums and differences. The rotation $H_j$ is thus factorized
into $d/2$ rotations by $\pi/4$ in $\R^2$, and multiplications by $2^{1/2}$.
The transformation of each coordinate pair $(\alpha,\beta) \in \R^2$ is:
\[
(\alpha,\beta) ~\longrightarrow
(\alpha+\beta\,,\,\alpha-\beta\,).
\]
The operator $|H_j|$
applies an absolute value to each output
coordinate, which 
has no effect on $\alpha+\beta$ if $\alpha \geq 0$ and $\beta \geq 0$,
but it removes the sign of their difference:
\begin{equation}
\label{permasn}
(\alpha,\beta) ~\longrightarrow
~(\alpha+\beta\,,\,|\alpha-\beta|)~. 
\end{equation}
Observe that this non-linear operator
defines a permutation invariant
representation of $(\alpha,\beta)$. Indeed,
the output values are not modified by  a permutation of $\alpha$ and $\beta$, and 
the two values of $\alpha$, $\beta$ are recovered without order, by 
\begin{equation}\label{eq:recover}
\max(\alpha,\beta) = \frac 1 2 \big(\alpha+\beta+|\alpha-\beta| \big)~~\mbox{and}~~
\min(\alpha,\beta) = \frac 1 2 \big(\alpha+\beta-|\alpha-\beta| \big)~.
\end{equation}
The operator $|H_j|$ can thus also be interpreted as a calculation of $d/2$ permutation
invariant representations of pairs of coefficients.

\begin{figure}
\centering
\includegraphics[width=0.65\linewidth]{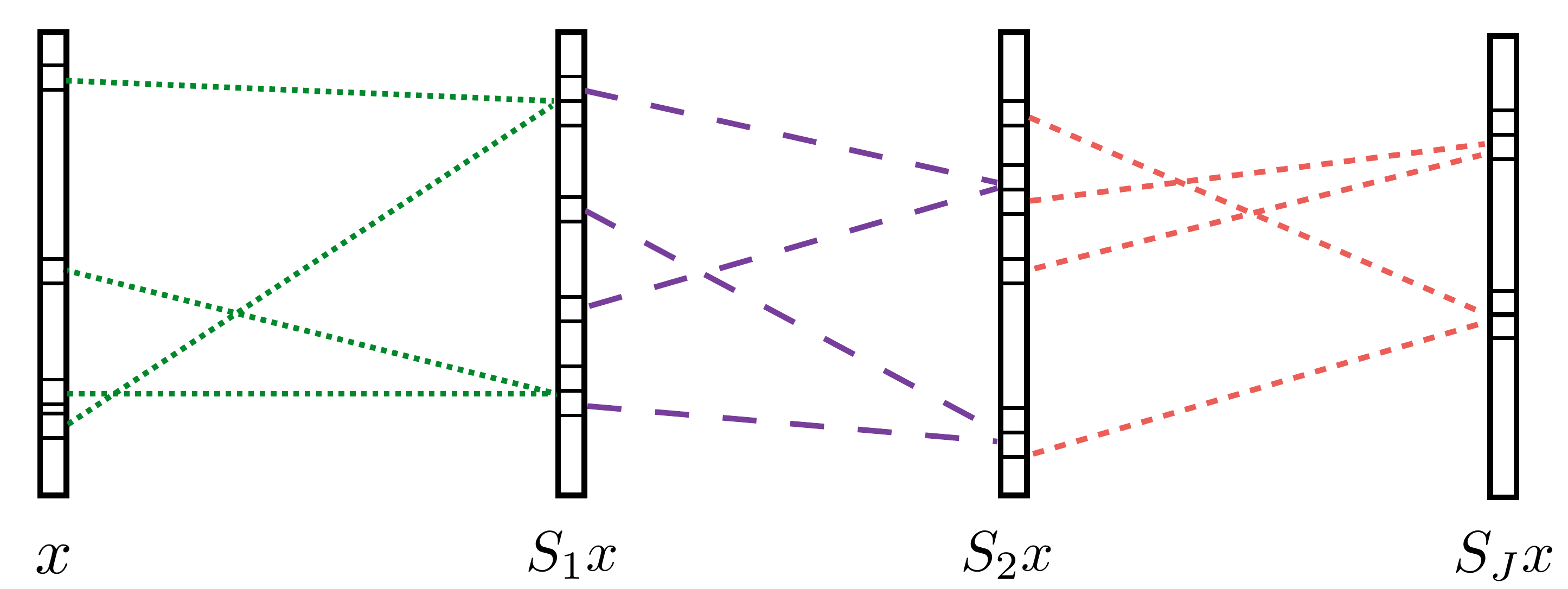}
\caption{ 
\small  A free Haar scattering network computes a layer $S_{j+1}x$ 
by pairing the coefficients of the previous layer $S_j x$, and storing the
sum of coefficients in each pair and the amplitude of their difference.
}
\label{fig:1}
\end{figure}

Applying $|H_j|$ to $S_j x$ computes the next layer $S_{j+1} x = |H_j S_j x|$, obtained by regrouping the coefficients of $S_j x \in \R^d$ into $d/2$ pairs of indices written
$\pi_j = \{\pi_j(2n),\pi_j(2n+1)\}_{0 \le n < d/2}$:
\begin{equation}
\label{eqn1}
S_{j+1} x (2n) = S_j x(\pi_j(2n)) + S_j x(\pi_j(2n+1))~,
\end{equation}
\begin{equation}
\label{eqn2}
S_{j+1} x(2n+1) = |S_j x(\pi_j(2n)) - S_j x(\pi_j(2n+1))|~.
\end{equation}
The pairing $\pi_j$  specifies which index 
$\pi_j (2n+1)$ is paired
with $\pi_j(2n)$, but the ordering index $n$ is not important. It specifies
the storing position in $S_{j+1} x$ of the transformed values.
For classification applications, $\pi_j$ will be optimized with training examples. 
This deep network computation is illustrated in Figure \ref{fig:1}. The network output $S_J x$ is calculated with $J d/2$ additions, subtractions and absolute values.
Each coefficient of $S_J x$ is calculated by cascading $J$ permutation invariant
operators over pairs, and thus defines an invariant over a group of $2^J$ coefficients.
The network depth $J$ thus corresponds to an invariance scale $2^J$.

Since the network is computed by iterating orthogonal linear operators, up to a normalization, and a contractive absolute value, the following theorem proves that it defines a contractive transform, which preserves the norm, up to a normalization. It also proves that an orthogonal Haar scattering transform $S_J x$ is obtained by
applying an orthogonal matrix to $x$, which depends upon $x$ and $J$. 

\begin{theorem}\label{prop:norm-preserve}
For any $J \geq 0$, and any $(x,x') \in \R^{2d}$
\begin{equation}
\label{nsdf8}
\|S_J x - S_J x' \| \leq 2^{J/2} \|x - x' \|~.
\end{equation}
Moreover $S_J x = 2^{J/2} \, M_{x,J}\, x$ where $M_{x,J}$ 
is an orthogonal matrix which depends on $x$ and $J$, and
\begin{equation}
\label{nsdf0}
\|S_J x\| = 2^{J/2} \|x \|~.
\end{equation}
\end{theorem}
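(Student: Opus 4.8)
The plan is to prove all three claims simultaneously by induction on $J$, starting from $S_0 x = x$ and peeling off one layer $S_{j+1}x = |H_j S_j x|$ at a time. Everything reduces to two elementary facts about the two operators composing a layer. First, the Haar filter: since each pair transforms as $(\alpha,\beta)\mapsto(\alpha+\beta,\alpha-\beta)$ and $(\alpha+\beta)^2+(\alpha-\beta)^2 = 2(\alpha^2+\beta^2)$, summing over the $d/2$ pairs gives $\|H_j y\| = 2^{1/2}\|y\|$; equivalently $H_j = 2^{1/2}O_j$ with $O_j$ a fixed orthogonal matrix (a direct sum of $\pi/4$ rotations). Second, the absolute value: componentwise $|a|^2 = a^2$ yields $\||y|\| = \|y\|$, while the scalar inequality $\bigl||a|-|b|\bigr|\le|a-b|$ yields the contraction $\||y|-|y'|\|\le\|y-y'\|$.

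With these in hand, I would first dispatch the contraction bound (\ref{nsdf8}). Using that the absolute value is contractive and that $H_j$ is linear with scaling $2^{1/2}$,
\[
\|S_{j+1}x - S_{j+1}x'\| \le \|H_j S_j x - H_j S_j x'\| = 2^{1/2}\,\|S_j x - S_j x'\|,
\]
so iterating from $S_0 x = x$ down to depth $J$ accumulates the factor $2^{J/2}$. The norm identity (\ref{nsdf0}) follows by the same one-line induction with equality, $\|S_{j+1}x\| = \|H_j S_j x\| = 2^{1/2}\|S_j x\|$, since the absolute value is norm-preserving.

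The only step requiring a genuine idea is the orthogonal factorization $S_J x = 2^{J/2}M_{x,J}x$. The key observation is that a pointwise absolute value is itself an orthogonal sign-flip that depends on its input: writing $D_y = \mathrm{diag}(\mathrm{sign}(y(n)))$, one has $|y| = D_y y$, and since $D_y$ is diagonal with entries in $\{-1,+1\}$ it is orthogonal. Each layer therefore reads
\[
S_{j+1}x = D_{H_j S_j x}\,H_j S_j x = 2^{1/2}\,G_j\,S_j x, \qquad G_j = D_{H_j S_j x}\,O_j,
\]
where $G_j$ is a product of two orthogonal matrices and hence orthogonal, depending on $x$ only through the sign pattern of $H_j S_j x$. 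Composing from $j=0$ to $J-1$ gives $S_J x = 2^{J/2}M_{x,J}x$ with $M_{x,J} = G_{J-1}\cdots G_0$ orthogonal and dependent on $x$ and $J$; this also re-derives (\ref{nsdf0}) directly, since orthogonal matrices preserve the norm.

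The one delicate point, which I expect to be the main obstacle, is the treatment of coordinates where $H_j S_j x$ vanishes and the sign is ambiguous. This is ultimately harmless: any choice of $\pm1$ at such coordinates leaves $|0|=0$ unchanged and keeps $D_y$ orthogonal, so $M_{x,J}$ is well defined as an orthogonal matrix even though it is not unique. Beyond this caveat I anticipate no real difficulty, as the whole argument rests on the two one-line facts about $H_j$ and the absolute value together with the single structural insight that the pointwise absolute value acts as a data-dependent orthogonal sign matrix.
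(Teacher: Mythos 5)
Your proof is correct and follows essentially the same route as the paper's: the contraction bound by iterating $\|S_{j+1}x-S_{j+1}x'\|\le 2^{1/2}\|S_jx-S_jx'\|$, and the factorization of each layer as $2^{1/2}$ times a data-dependent diagonal sign matrix composed with the fixed orthogonal part of $H_j$, whose product yields the orthogonal $M_{x,J}$. Your explicit handling of the sign ambiguity at coordinates where $H_jS_jx$ vanishes is a small refinement the paper leaves implicit, but it is not a different argument.
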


\begin{proof}
Since $S_{j+1} x = |H_j S_j x|$ where 
$H_j$ is an orthogonal operator multiplied by $2^{1/2}$, 
\[
\|S_{j+1} x - S_{j+1} x' \| \leq \|H_j S_j x - H_j S_j x' \| = 2^{1/2} \|S_j x - S_j x'\|.
\]
Since $S_0 x = x$, equation (\ref{nsdf8}) is verified by induction on $j$.
We can also rewrite
\[
S_{j+1} x = |H_j S_j x| = E_{j,x} H_j x,
\]
where $E_{j,x}$ is a diagonal matrix where the diagonal entries are $\pm 1$, 
with a sign which depend on $S_j x$. Since $2^{-1/2} H_j$ is orthogonal,  
$2^{-1/2} E_{j,x} H_j$ is also orthogonal
so $M_{x,J} = 2^{-J/2} \prod_{j=1}^J E_{j,x}H_j$ is orthogonal, and depends on $x$ and $J$. It results that $\|S_J x \| = 2^{J/2}\,\|x\|$.
\end{proof}

\subsection{Complete representation with bagging}

A single Haar scattering transform looses information since it applies orthogonal operators followed by an absolute value which looses the information of the sign. However, the following theorem proves that $x$ can be recovered from 
$2^J$ distinct orthogonal Haar scattering transforms, computed
with different pairings $\pi_j$ at each layer.

\begin{theorem}
\label{thm:multi-layer-recovery2}
There exist $2^J$ different orthogonal Haar scattering transforms such that 
almost all $x \in {\mathbb R^d}$ can be reconstructed from the coefficients of 
these $2^J$ transforms.
\end{theorem}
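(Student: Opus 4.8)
The plan is to argue by induction on the depth $J$, exploiting the recursive structure of the network: since $S_1 x = |H_0 x|$ and the layers $|H_{J-1}|\cdots|H_1|$ act on $y = S_1 x$ exactly as a depth-$(J-1)$ Haar scattering, every depth-$J$ transform of $x$ is a depth-$(J-1)$ transform of $y$ prefixed by a first-layer pairing $\pi_0$. This lets me trade one unit of depth for a factor of two in the number of transforms. The base case $J=0$ is trivial, since $S_0 x = x$ is recovered from a single ($2^0$) transform.

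For the inductive step, suppose $2^{J-1}$ depth-$(J-1)$ transforms reconstruct almost every signal in $\R^d$. I would fix two distinct first-layer pairings, i.e.\ two perfect matchings $M_1, M_2$ of the $d$ indices, and form two groups of $2^{J-1}$ depth-$J$ transforms: the $a$-th group uses first layer $M_a$ and, on its deeper layers, the $2^{J-1}$ transforms furnished by the inductive hypothesis. Applying that hypothesis to $y^{(a)} = |H_0^{M_a} x|$ recovers $y^{(1)}$ and $y^{(2)}$, and hence, through (\ref{eq:recover}), the unordered pair $\{x_i,x_j\}$ for every edge $\{i,j\}$ of $M_1$ and of $M_2$. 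The total count is $2\cdot 2^{J-1} = 2^J$ transforms, all distinct (they differ across groups in the first layer and within a group in the deeper layers).

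It then remains to reconstruct the \emph{ordered} vector $x$ from these two families of unordered pairs. I would choose $M_1,M_2$ edge-disjoint so that $M_1\cup M_2$ is a disjoint union of cycles of length at least $4$; for instance let $M_1$ pair $\{2n,2n+1\}$ and $M_2$ pair $\{2n+1,2n+2\}$ cyclically, yielding a single $d$-cycle (possible whenever $d\ge 4$ is even). Then each index $i$ has one $M_1$-neighbor $j$ and a distinct $M_2$-neighbor $k$, so we know both $\{x_i,x_j\}$ and $\{x_i,x_k\}$; for $x$ outside the null set where $x_j = x_k$, the intersection $\{x_i,x_j\}\cap\{x_i,x_k\} = \{x_i\}$ is a singleton, which recovers $x_i$, and doing this at every index reconstructs $x$.

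The main obstacle is the measure-zero bookkeeping and its propagation through the recursion. The intersection step fails only on the null set where two incident coordinates coincide, and the inductive recovery of $y^{(a)}$ fails only on a null set of $y^{(a)}$; since $x\mapsto y^{(a)}$ is, off a null set, locally of the form $2^{1/2}$ times an orthogonal map (the $E_{j,x}H_j$ factorization of Theorem \ref{prop:norm-preserve}), preimages of null sets under it are null. The delicate point is to verify that the finite union of all these exceptional sets, accumulated over the $J$ levels of the recursion, remains of Lebesgue measure zero, so that reconstruction holds for almost all $x$. A secondary check is that the chosen deeper pairings stay admissible (each a valid pairing with $2^J\le d$) and that the two-matching construction genuinely requires $d\ge 4$, the case $d=2$ being degenerate since then only one matching exists.
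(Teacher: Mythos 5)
Your proposal is correct and follows essentially the paper's own route: your two cyclic first-layer matchings, whose union is a single $d$-cycle, are exactly an instance of the \emph{interlaced pairings} of Lemma \ref{lemma:interlacing}, and your bottom-up induction on $J$ with two groups of $2^{J-1}$ transforms unrolls to precisely the paper's family of $2^J$ hierarchical pairings $(\pi_1^{\epsilon_1},\dots,\pi_J^{\epsilon_J})$, inverted one layer at a time. The only differences are cosmetic: your singleton-intersection step at each vertex of the cycle is a local variant of the lemma's propagation argument (excluding the null set where second-neighbor values coincide, rather than assuming more than two distinct values), and your pullback-of-null-sets argument through the piecewise-orthogonal maps $x \mapsto |H_j x|$ spells out the measure-zero bookkeeping that the paper only asserts.
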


This theorem is proved by observing that a Haar scattering transform is computed with permutation invariants operators over pairs. Inverting these operators allows to recover values of signal pairs but not their locations. However, recombining these values on enough overlapping sets allows one to recover their locations and hence the original signal $x$. This is proved on the following lemma applied to {\it interlaced pairings}.
We say that two pairings  $\pi^{0} = \{ \pi^{0}(2n),  \pi^{0}(2n+1) \}_{0 \le n < d/2}$  and $\pi^{1} = \{ \pi^{1}(2n),  \pi^{1}(2n+1) \}_{0 \le n < d/2}$  are interlaced if there exists no strict subset $\Omega$  of $\{1,...,d\}$ such that  $\pi^0$ and $\pi^{1}$ are pairing elements within $\Omega$. The following lemma shows that a single-layer scattering operator is invertible with two interlaced pairings.

\begin{lemma}\label{lemma:interlacing}
If two pairings $\pi^0$ and $\pi^1$ of $\{1, ..., d\}$ are interlaced then any $x \in \R^d$ whose coordinates have more than $2$ different values can be recovered from 
the values of $S_1 x$ computed with $\pi^0$ and the values of $S_1 x$ computed with $\pi^1$.
\end{lemma}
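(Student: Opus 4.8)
The plan is to translate the statement into graph-combinatorial terms and then solve a reconstruction problem on a single cycle. First I would associate to the pairings $\pi^0$ and $\pi^1$ their perfect matchings $M_0$ and $M_1$ on the vertex set $\{1,\dots,d\}$ and consider the union graph $G = M_0 \cup M_1$. Every vertex lies in exactly one pair of each pairing, so every vertex of $G$ has degree $2$; hence $G$ is a disjoint union of cycles whose edges alternate between $M_0$ and $M_1$, since a pair common to both pairings would form an isolated two-vertex component. The interlacing hypothesis says exactly that $G$ has no proper nonempty subset on which both pairings restrict to pairings, i.e. that $G$ is connected. Therefore $G$ is a single cycle $v_0 - v_1 - \cdots - v_{d-1} - v_0$ of even length $d$, with edges alternating between the two matchings.

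Next I would identify the available data. From $S_1 x$ computed with one pairing, the inversion formulas (\ref{eq:recover}) applied to each output pair (\ref{eqn1})--(\ref{eqn2}) recover the two input values without their order. Thus knowing $S_1 x$ for $\pi^0$ and for $\pi^1$ is equivalent to knowing, for every edge of the cycle $G$, the unordered pair of values of $x$ at its two endpoints. Writing $a_i = x(v_i)$, the task becomes: given the unordered pairs $\{a_i, a_{i+1}\}$ for all $i$ (indices mod $d$), recover the ordered tuple $(a_0,\dots,a_{d-1})$. Existence is automatic, since the true $x$ satisfies all constraints, so the entire content is \emph{uniqueness} of the tuple consistent with all edge constraints when $x$ has more than two distinct values.

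For uniqueness I would compare two consistent tuples $a$ and $\tilde a$ and set $c_i = 0$ if $\tilde a_i = a_i$ and $c_i = 1$ otherwise. The key local fact is that consistency on a single edge $\{v_i,v_{i+1}\}$ forces $c_i = c_{i+1}$: if $a_i = a_{i+1}$ then $\{\tilde a_i, \tilde a_{i+1}\}$ must be this repeated value, so $c_i = c_{i+1} = 0$; if $a_i \neq a_{i+1}$ then $\{\tilde a_i, \tilde a_{i+1}\} = \{a_i, a_{i+1}\}$ leaves only two options, agreeing on both endpoints or swapping them, and both give $c_i = c_{i+1}$. Since $G$ is connected, $c$ is constant on the cycle. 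If $c \equiv 0$ then $\tilde a = a$ and we are done. If $c \equiv 1$, every edge must have distinct endpoint values (an equal edge would force $c=0$), and combining the swap relations $\tilde a_i = a_{i+1}$ on two consecutive edges yields $a_{i-1} = a_{i+1}$ for all $i$; going around the even cycle, $a$ alternates between exactly two values, contradicting the hypothesis. Hence $c \equiv 0$ and $x$ is recovered.

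The main obstacle, and where the hypothesis is used, is this last step: I must confirm that the only way two consistent reconstructions differ is the global two-periodic alternation, and that requiring more than two distinct coordinate values excludes precisely this. The reduction of interlacing to a single alternating cycle is the other delicate point, since connectivity is exactly what forbids a boundary in the indicator $c$ and thereby forces it to be globally constant; without it one could reassemble independent components and lose uniqueness.
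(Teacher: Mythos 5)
Your proof is correct, and it reaches the same obstruction as the paper (the two-valued alternating signal) by a genuinely more global route. The paper argues locally and constructively: it takes a triplet $n_1,n_2,n_3$ with $(n_1,n_2)\in\pi^0$ and $(n_1,n_3)\in\pi^1$, uses (\ref{eq:recover}) to get each pair of values without order, notes the only ambiguous configuration is $x(n_2)=x(n_3)\neq x(n_1)$, then invokes interlacing to pair $n_2$ with a fresh index $n_4$ and asserts that this extension process determines all coordinates unless $x$ takes only two values --- the closure of that greedy extension is left informal (``this interlacing argument can be used to extend\dots''). You instead make the underlying combinatorics explicit: the union of the two perfect matchings is $2$-regular, interlacing is exactly connectivity of this union (a subset closed under both pairings is a union of its components), hence the union is a single even alternating cycle, and recovery reduces to uniqueness of a tuple given the unordered endpoint pairs along the cycle. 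Your disagreement indicator $c$, forced to be constant across every edge and therefore globally constant by connectivity, together with the observation that $c\equiv 1$ forces the swap relations $\tilde a_i=a_{i+1}$, $\tilde a_{i+1}=a_i$ and hence two-periodicity of $a$ around the even cycle, pinpoints exactly where the hypothesis of more than two distinct values enters. What your route buys is rigor at the global step the paper elides, and a direct injectivity statement (the true $x$ is the unique consistent tuple) rather than a bookkeeping of which indices are ``specified so far''; what the paper's route buys is an explicitly algorithmic reconstruction, index by index, in the spirit in which the lemma is then applied recursively in the proof of Theorem \ref{thm:multi-layer-recovery2}. Both proofs are sound; yours is the cleaner formalization.
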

\begin{proof}
Let us consider a triplet ${n_1, n_2, n_3}$ where
$(n_1, n_2)$  is a pair in $\pi^0$ and
$(n_1, n_3)$  is a pair in $\pi^1$. From $S_1 x$ computed with $\pi^0$ we get
\[
x(n_1) + x(n_2), \quad |x(n_1) - x(n_2)|
\]
and we saw in (\ref{eq:recover}) that  it 
determines the values of  $\{ x(n_1), x(n_2)\}$ up to a permutations. 
Similarly, $\{ x(n_1), x(n_3)\}$ are determined up to a permutation
by $S_1 x$ computed with $\pi^1$.
Then unless $x(n_1) \neq x(n_2)$ and $x(n_2) = x(n_3)$ the three values $x(n_1), x(n_2), x(n_3)$ are recovered. The interlacing condition implies that  $\pi^1$ pairs $n_2$ to an index $n_4$ which can not be $n_3$ or $n_1$. Thus, the four values of $x(n_1), x(n_2), x(n_3), x(x_4)$ are specified unless $x(n_4) = x(n_1) \neq x(n_2) = x(n_3)$.  This interlacing argument can be used to extend  to $\{1,\dots, d\}$ the set of all indices $n_i$ for which $x(n_i)$ is specified, unless $x$ takes only two values.
\end{proof}

\begin{proof}[Proof of Theorem \ref{thm:multi-layer-recovery2}]
Suppose that  the $2^J$ Haar scatterings are associated to the $J$ hierarchical pairings $(\pi_1^{\epsilon_1}, ..., \pi_J^{\epsilon_J} )$ where $\epsilon_j \in \{0,1\}$, where for each $j$, $\pi_j^{0}$ and $\pi_j^{1}$ are two interlaced pairings of $d$ elements.  The sequence $(\epsilon_1, ..., \epsilon_J)$  is a binary vector taking $2^J$ different values. 

The constraint on the signal $x$ is that each of the intermediate scattering coefficients takes more than $2$ distinct values, which holds for $x \in \R^d$ except for a union of hyperplanes which has zero measure. Thus for almost every $x \in \R^d$, the theorem follows from applying Lemma \ref{lemma:interlacing} recursively to the $j$-th level scattering coefficients for $ J-1 \ge j \ge 0 $.
\end{proof}

Lemma \ref{lemma:interlacing} proves that only two pairings is sufficient to  invert one Haar scattering layer. The argument proving that $2^J$ pairings are sufficient to invert $J$ layers is quite brute-force. It is conjectured that the number of pairings 
needed to obtain a complete representation for almost all $x \in \R^d$ does not need
to grow exponentially in $J$ but rather linearly.
Theorem \ref{thm:multi-layer-recovery2}  suggests to define a signal 
representations by aggregating different Haar orthogonal scattering transforms. We shall see that this bagging strategy is indeed improving classification accuracy.

\subsection{Sparse unsupervised learning with adaptive contractions}\label{subsec:unsup-learn}

A free orthogonal Haar scattering transform of depth $J$ is computed with a pairing at each of the $J$ network layers, which may be chosen freely.  
We now explain how to optimize these pairings from $N$ unlabeled examples $\{x_i \}_{1 \leq i \leq N}$. As previously explained, an orthogonal Haar scattering is strongly contractive. Each linear Haar operator rotates the signal space,  and the absolute value suppresses the sign of each difference, and hence projects coefficients over a smaller domain.  Optimizing the network thus amounts to find the best directions along which to perform the space compression. 

Contractions reduce the space volume and hence the variance of scattering vectors but it may also collapse together examples which belong to different classes. To maximize the ``average discriminability'' among signal examples, we shall thus maximize the variance of the scattering transform over the training set. 
Following \cite{mallat2013deepscat}, we show that it yields
a representation whose coefficients are sparsely excited. 

The network layers are
optimized with a greedy layerwise strategy similar to many deep unsupervised learning algorithms \cite{hinton2006fast, bengio2013representation},  which consists 
in optimizing the network parameters layer per layer, as the depth  $j$ increases. 
Let us suppose that Haar scattering operators $H_\ell$ are computed for $1 \leq \ell < j$. One can thus compute $S_j x$ for any $x \in \R^d$. We now explain how to optimize $H_j$ to maximize the variance of the next layer $S_{j+1} x$.  The non-normalized empirical variance of $S_j$ over the training set $\{x_i\}_i$ is
\[
\sigma^2 (S_{j} x) = \sum_i \| S_{j} x_i \|^2 - \Big\| \sum_i S_{j} x_i \Big\|^2~.
\]
The following proposition, adapted from  \cite{mallat2013deepscat},
proves that the scattering variance decreases as the depth increases, up to a factor $2$. It gives a condition on $H_j$ to maximize the variance of the next layer. 

\begin{proposition}
For any $j \geq 0$ and $x \in \R^d$, 
$\sigma^2 ( 2^{-(j+1)/2} S_{j+1} x) \leq \sigma^2( 2^{-j/2}  S_j x)$. 
Maximizing $\sigma^2 (S_{j+1} x)$ given $S_j x$ is equivalent to finding $H_j$ which minimizes
\begin{equation}
\label{l1energy0}
\Big\| \sum_i H_j S_j x_i \Big\|^2 = 
\sum_n \Big( \sum_i | H_j S_j x_i(n)| \Big)^2 ~.
\end{equation}
\end{proposition}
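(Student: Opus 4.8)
The plan is to reduce both claims to two elementary facts about the one-step map $S_j \mapsto S_{j+1} = |H_j S_j \cdot|$: that the scaled energy term in the variance is exactly preserved, and that the scaled centering term can only grow. No induction is needed since the statement concerns a single layer.

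I would begin by recording the key identity for the energy term. Because $H_j$ is orthogonal up to the factor $2^{1/2}$ and the pointwise absolute value preserves norms, $\|S_{j+1} x_i\|^2 = \|H_j S_j x_i\|^2 = 2\|S_j x_i\|^2$ for every training example, so $\sum_i \|S_{j+1} x_i\|^2 = 2\sum_i \|S_j x_i\|^2$. After rescaling by $2^{-(j+1)/2}$, this first term of $\sigma^2(2^{-(j+1)/2} S_{j+1} x)$ therefore coincides \emph{exactly} with the corresponding term of $\sigma^2(2^{-j/2} S_j x)$, and the whole comparison collapses to the behaviour of the centering term $\|\sum_i S_{j+1} x_i\|^2$.

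The crux is then to establish the one-sided bound $\|\sum_i S_{j+1} x_i\|^2 \geq 2\|\sum_i S_j x_i\|^2$. I would argue coordinatewise: writing $m = \sum_i S_j x_i$, linearity gives $\sum_i H_j S_j x_i = H_j m$, while each coordinate satisfies $\sum_i S_{j+1} x_i(n) = \sum_i |H_j S_j x_i(n)| \geq |\sum_i H_j S_j x_i(n)| = |(H_j m)(n)|$ by the triangle inequality, the point being that the left-hand side is a sum of nonnegative terms and hence dominates $|(H_j m)(n)|$. Squaring, summing over $n$, and invoking $\|H_j m\|^2 = 2\|m\|^2$ from orthogonality yields the claimed bound. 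Substituting the energy identity and this inequality into $\sigma^2(2^{-(j+1)/2} S_{j+1} x) = 2^{-(j+1)}(\sum_i \|S_{j+1} x_i\|^2 - \|\sum_i S_{j+1} x_i\|^2)$ gives the monotonicity, with the factor $2$ in the invariance-scale normalization exactly absorbing the doubling of the energy term.

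For the variational characterization I would observe that, by the same energy identity, $\sum_i \|S_{j+1} x_i\|^2 = 2\sum_i \|S_j x_i\|^2$ is independent of the pairing $\pi_j$ (equivalently of $H_j$), depending only on $S_j x$. Hence, at fixed $S_j x$, maximizing $\sigma^2(S_{j+1} x)$ over $H_j$ is equivalent to minimizing the centering term $\|\sum_i S_{j+1} x_i\|^2 = \sum_n (\sum_i |H_j S_j x_i(n)|)^2$, which is the stated $\ell^1$-type energy. I expect the only genuine subtlety to be notational rather than mathematical: the left-hand side of the displayed expression must be read with the pointwise absolute value applied \emph{before} summing, so that it equals $\|\sum_i |H_j S_j x_i|\|^2 = \|\sum_i S_{j+1} x_i\|^2$; without the absolute value, linearity would collapse it to the $H_j$-independent constant $2\|\sum_i S_j x_i\|^2$. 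Once this reading is fixed, both the monotonicity bound and the minimization criterion follow immediately from the two facts above.
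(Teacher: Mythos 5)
Your proposal is correct and takes essentially the same route as the paper's proof: both rest on the energy conservation $\sum_i \|S_{j+1} x_i\|^2 = 2\sum_i \|S_j x_i\|^2$ (making the energy term independent of $H_j$, which immediately gives the variational characterization) combined with the coordinatewise triangle inequality $\sum_i |H_j S_j x_i(n)| \geq \big|\big(H_j \sum_i S_j x_i\big)(n)\big|$ and orthogonality to bound the centering term, yielding $\sigma^2(S_{j+1}x) \leq 2\sigma^2(S_j x)$. Your notational remark is also on target: the left-hand side of the paper's display must indeed be read as $\big\|\sum_i |H_j S_j x_i|\big\|^2 = \big\|\sum_i S_{j+1} x_i\big\|^2$, which is exactly how the paper's own proof uses it.
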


\begin{proof}
Since  $S_{j+1} x= | H_j S_j x|$ and $\| H_j S_j x \| = 2^{1/2} \|S_j x \|$, we have
\begin{eqnarray*}
\sigma^2 (S_{j+1} x) 
&=& \sum_i \| S_{j + 1} x_i \|^2 - \Big\| \sum_i S_{j+1} x_i \Big\|^2\\
&=& 2 \sum_{i=1}^N \| S_{j} x_i \|^2 - \Big\| \sum_{i=1}^N |H_j S_{j} x_i| \Big\|^2.
\end{eqnarray*}
Optimizing $\sigma^2 (S_{j+1} x)$ is thus equivalent to minimizing (\ref{l1energy0}). Moreover,
\begin{eqnarray*}
\sigma^2 (S_{j+1} x) 
&=& 2 \sum_{i=1}^N \| S_{j} x_i \|^2 
    - \Big\| H_j \sum_{i=1}^N  S_{j} x_i \Big\|^2
   + \Big\|  \sum_{i=1}^N  H_j S_{j} x_i \Big\|^2 
   - \Big\| \sum_{i=1}^N |H_j S_{j} x_i| \Big\|^2~\\
& = &2 \sum_{i=1}^N \| S_{j} x_i \|^2 
     -  2 \|\sum_{i=1}^N  S_{j} x_i \|^2
    + \left(  \Big\|  \sum_{i=1}^N  H_j S_{j} x_i \Big\|^2 
   - \Big\| \sum_{i=1}^N |H_j S_{j} x_i| \Big\|^2 \right)~\\
&\le & 2 \sum_{i=1}^N \| S_{j} x_i \|^2 
     -  2 \|\sum_{i=1}^N  S_{j} x_i \|^2 = 2 \sigma^2(S_j x),
\end{eqnarray*}
which proves the first claim of the proposition.
\end{proof}

This propocsition relies on the energy conservation $\|H_j y \| = 2^{1/2} \|y \|$.  Because of the contraction of the absolute value, it proves that the variance of the normalized scattering $2^{-j/2} S_j x$ decreases as $j$ increases.  Moreover the maximization
of $\sigma^2(S_{j+1} x)$ amounts to minimize a mixed $\bf l^1$ and $\bf l^2$ norm
on $H_j S_j x_i (n)$, where the sparsity $\bf l^1$ norm is along the realization index $i$ where as the
$\bf l^2$ norm is along the feature index $n$ of the scattering vector. 

Minimizing the first $\bf l^1$ norm for $n$ fixed tends
to produce a coefficient
indexed by $n$ which is sparsely excited across the examples indexed by $i$. 
It implies that this feature is discriminative among all examples. On the contrary,
the $\bf l^2$ norm along the index $n$ has a tendency to produce $\bf l^1$ sparsity norms
which have a uniformly small amplitude. The resulting ``features'' indexed by $n$ are thus uniformly sparse. 

Because $H_j$ preserves the norm, the total energy of coefficients is conserved:
\[
\sum_n \sum_i |H_j S_j x_i(n)|^2 = 2 \sum_i \|S_j x_i \|^2.
\]
It results that 
a sparse representation along the index $i$ implies
that $H_j S_j x_i (n)$ is also sparse along $n$. The same type of result is
thus obtained by replacing the 
mixed $\bf l^1$ and $\bf l^2$ norm (\ref{l1energy0}) 
by a simpler $\bf l^1$ sparsity norm along both the $i$ and $n$ variables
\begin{equation}
\label{l1energy}
\sum_n \sum_i  | H_j S_j x_i(n) | ~.
\end{equation}
This sparsity norm is often used by sparse autoencoders for unsupervised
learning of deep networks \cite{bengio2013representation}.
Numerical results in Section \ref{sec:exp} verify 
that both norms have very close classification performances.

For Haar operators $H_j$, 
the $\bf l^1$ norm leads to a simpler interpretation of the
result. Indeed a Haar filtering is defined by a pairing $\pi_j$ of $d$ integers $\{1,...,d\}$. 
Optimizing $H_j$  amounts to optimize $\pi_j$, and hence minimize
\[
\sum_n \sum_i |H_j S_j x_i(n)| = 
\sum_n \Big(S_j x_i (\pi_j(2n))+S_j x_i (\pi_j(2n+1)) + |S_j x_i (\pi_j(2n))-S_j x_i (\pi_j(2n+1))| 
\Big).
\]
But $\sum_n (S_j x (\pi_j(2n))+S_j x (\pi_j(2n+1))) = \sum_n S_j x(n)$ does not  depend upon the pairing $\pi_j$.  
Minimizing the $\bf l^1$ norm (\ref{l1energy}) is thus equivalent to minimizing
\begin{equation}
\label{l1energy2}
\sum_n \sum_i 
|S_j x_i (\pi_j(2n))-S_j x_i (\pi_j(2n+1))| .
\end{equation}
It minimizes the average variation within pairs, and thus tries to
regroup pairs having close values.

Finding a linear operator $H_j$ which minimizes (\ref{l1energy0}) or (\ref{l1energy}) is a ``dictionary learning''
problem which is in general an NP hard problem. For a Haar dictionary, we 
show that it is equivalent to a pair matching problem and can thus be solved with
$O(d^3)$ operations. For both optimization norms, it amounts to finding
a pairing $\pi_j$ which minimizes an additive cost
\begin{equation}
C(\pi_j) = \sum_n C(\pi_j(2n),\pi_j(2n+1)),
\end{equation}
where $C(\pi_j(2n), \pi_j(2n+1)) = \sum_i |H_j S_j x_i (n)|$ for
(\ref{l1energy}) and 
$C(\pi_j(2n), \pi_j(2n+1)) = \Big(\sum_i |H_j S_j x_i (n)|\Big)^2$ 
for (\ref{l1energy0}). 
This linear pairing cost is minimized exactly by the Blossom algorithm with $O(d^3)$ operations. Greedy method obtains a $1/2$-approximation in $O (d^2)$ time \cite{match-fast-square}. Randomized approximation similar to \cite{jones2011randomized} could also be adapted to achieve a complexity of $O ( d \log d)$ for very large size problems.

Theorem \ref{thm:multi-layer-recovery2} proves that several 
Haar scattering transforms are necessary to obtain a complete signal representation. We learn $T$ Haar scattering transforms by dividing the training set
$\{x_i \}_i$ in $T$ non-overlapping subsets. A different Haar scattering transform 
is optimized for each training subset. 
Next section describes a supervised classifier applied to the resulting
bag of $T$ Haar scattering transforms. 

\subsection{Supervised feature selection and classification}
\label{supsec}

Strong invariants are computed by the supervised classifier which essentially
computes adapted linear combinations of Haar scattering coefficients.
Bagging $T$ orthogonal Haar scattering representations defines a set
of $T \,d$ scattering coefficients. A supervised dimension reduction is first
performed by selecting a subset of scattering coefficients.
It is implemented with an orthogonal least square forward selection algorithm \cite{chen1991orthogonal}. The final supervised classification is implemented with a Gaussian kernel SVM classifier applied to this reduced set of coefficients.

We select $K$ scattering coefficient to discriminate each class $c$ from all other classes, and decorrelate these features before applying the SVM classifier.
Discriminating a class $c$ from all other classes amounts to approximating 
the indicator function
\[
f_c(x) = 
\left\{
\begin{array}{ll}
1 & \mbox{if $x$ belongs to class $c$}\\
0 & \mbox{otherwise}
\end{array}
\right.~.
\]

Let us denote by $\Phi x = \{ \phi_p x \}_{p \leq T d}$ the dictionary of $T\,d$ scattering coefficients to which is added the constant $\phi_0 x = 1$.  An orthogonal least square linearly approximates $f_c(x)$ with a sparse
subset of $K$ scattering coefficients $\{ \phi_{p_k} \}_{k \leq K}$ which are 
greedily selected
one at a time. To avoid correlations between selected features, it includes
a Gram-Schmidt orthogonalization which 
decorrelates the scattering dictionary 
relatively to previously selected features. We denote by
$\Phi^k x = \{ \tilde \phi^k_{p} x \}_p$ the scattering dictionary, which was
orthogonalized and hence decorrelated relatively to the
first $k$ selected scattering features. 
For $k = 0$, we have $\Phi^0 x = \Phi x$. At the $k+1$ iteration, we select
$\phi^k_{p_{k}} x \in \Phi^k x$ which yields the minimum linear mean-square error over
training samples:
\begin{equation}
\label{error}
\sum_i \Big( f_c (x_i) - \sum_{\ell = 0}^{k} \alpha_\ell \, \phi^\ell_{p_{\ell}} x_i \Big)^2
\end{equation}
Because of the orthonormalization step, the linear regression coefficients are
\[
\alpha_\ell = \sum_i f_c(x_i) \, \phi^\ell_{p_{\ell}} x_i 
\]
and
\[
\sum_i \Big( f_c (x_i) - \sum_{\ell = 0}^{k} \alpha_\ell \, \phi^\ell_{p_{\ell}} x_i \Big)^2 
= \sum_i |f_c (x_i)|^2 - \sum_{\ell=0}^{k} \alpha_\ell^2~.
\]
The error (\ref{error}) is thus minimized by choosing $\phi^k_{p_{k+1}} x$ 
having a maximum correlation:
\[
\alpha_{k} = \sum_i f_c (x_i)\, \phi^k_{p_{k}} x_i = \arg \max_{p}\Big( \sum_i f_c (x_i)\, \phi^k_{p} x_i 
\Big)~.
\]
The scattering dictionary is then updated by orthogonalizing each of its element
relatively to the selected scattering feature $\phi^k_{p_{k} x}$:
\[
\phi^{k+1}_{p} x = \phi^{k}_{p} x -  \Big(\sum_i \phi^k_p x_i\, \phi^k_{p_{k}} x_i \Big)\, 
\phi^k_{p_{k}} x~.
\]

This orthogonal least square regression greedily selects the $K$ decorrelated scattering  features $\{ \phi^k_{p_k} x \}_{0\leq k < K}$ for each class $c$. 
For a total of $C$ classes, the union of all these features defines  a dictionary of size $M = K\, C$. They are linear combinations of the original Haar scattering coefficients $\{ \phi_p x\}_p$. In the context of a deep neural network,
this dimension reduction can be interpreted as a last fully connected network layer, which takes in input $T\,d$ scattering coefficients
and outputs a vector of size $M$.  The parameter $M$ optimizes the  bias versus variance trade-off. It may be set a priori or 
adjusted by cross validation in order to yield a minimum classification error at the output of the Gaussian kernel SVM classifier. 

% comment on that the gamma cannot be too large
A Gaussian kernel SVM classifier is applied to the $M$-dimensional orthogonalized scattering feature vectors. The Euclidean norm of this vector is normalized to $1$.
In the applications of Section 
\ref{sec:exp}, $M$ is set to $10^3$ and hence remains large.
Since the feature vectors lies on a high-dimensional unit sphere,
the standard deviation $\sigma$ of the 
Gaussian kernel SVM must be of the order of $1$. Indeed, a Gaussian kernel SVM
performs its classification by 
fitting separating hyperplane over different
balls of radius of radius $\sigma$. If $\sigma \ll 1$ then the number 
balls covering
the unit sphere grows like $\sigma^{-M}$. Since $M$ is large,
$\sigma$ must remain in the order of $1$ to insure 
that there are enough training samples to fit a hyperplane in each ball.

\section{Orthogonal Haar Scattering on Graphs}\label{sec:3}

Signals such as images are sampled on uniform grids.
Many classification problems are  translation invariant, which motivates
the calculation of translation invariant representations.
A translation invariant representation
can be computed by averaging signal samples, but it removes too much 
information. Wavelet scattering operators \cite{mallat2012group} are calculated
by cascading wavelet transforms and absolute values. Each wavelet transform
computes multiscale signal variations on the grid. It yields
a large vector of coefficients, whose spatial averaging defines a rich set of 
translation invariant coefficients.

Data vectors may be defined on non-uniform graphs
\cite{shuman2013emerging}, for example in social, financial or transportation
networks. A graph displacement moves data samples on the grid but is not 
equivalent to a uniform grid translation. Orthogonal Haar scattering
transforms on graphs are computed from local multiscale signal variations
on the graph. The calculation of displacement invariant features is left to the
final supervised classifier, which adapts the averaging to 
the classification problem.
Section \ref{stransdsec} introduces this Har scattering on a graph as a particular
case of orthogonal Haar scattering. 
Section \ref{Haarwave} proves that an orthogonal Haar scattering
on graphs can be written as a product of wavelet transforms, as usual
wavelet scattering operators. When the graph connectivity is unknown,
unsupervised learning can
calculate a Haar scattering on the unknown graph and estimate the graph
connectivity. 
The consistency of such estimations is studied in  
Section \ref{subsec:learn-connect}.

\subsection{Structured orthogonal Haar scattering}
\label{stransdsec}

\begin{figure}
\begin{center}
\includegraphics[width=0.65\linewidth]{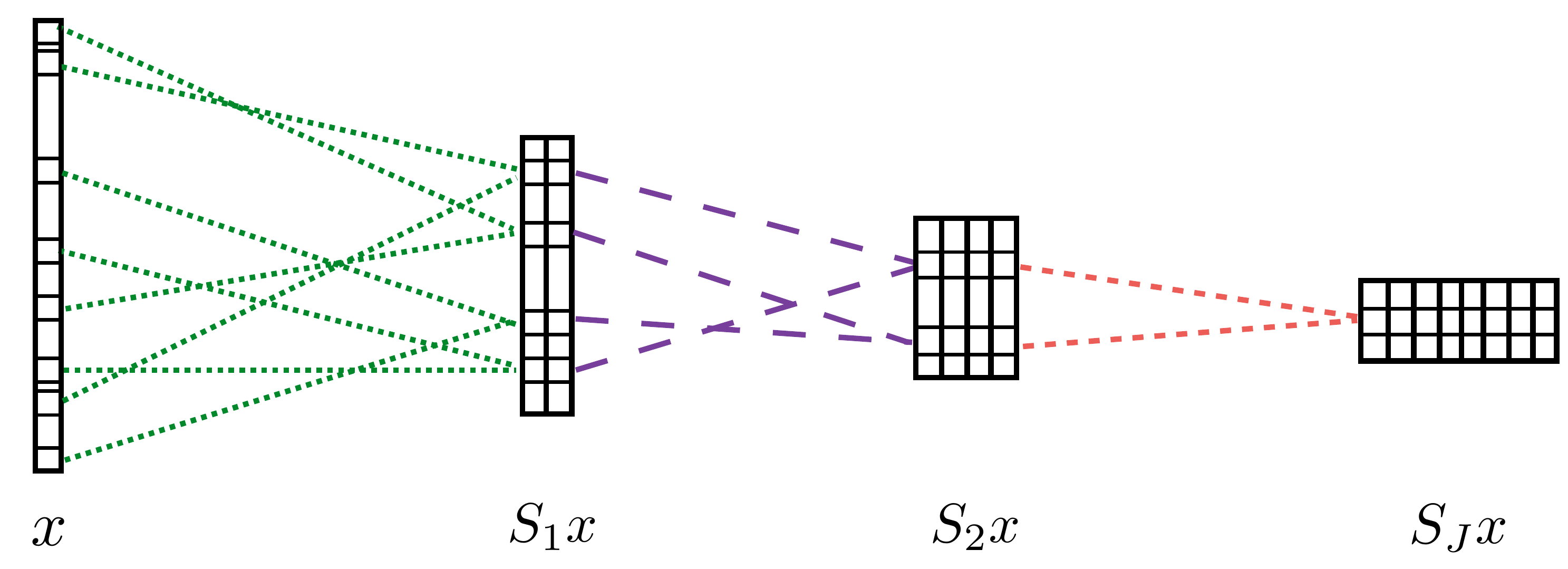}
\caption{
\small A structured Haar scattering on a graph 
computes each layer $S_{j+1}x$ by pairing the rows of the previous
layer $S_j x$. For each pair of rows, it stores
their sum and the absolute values of their difference, in a twice bigger row.
}
\label{fig:1-2}
\end{center}
\end{figure}

\begin{figure}[h!]
    \centering
    \begin{subfigure}[b]{0.55\textwidth}
        \centering
        \includegraphics[width=1\linewidth]{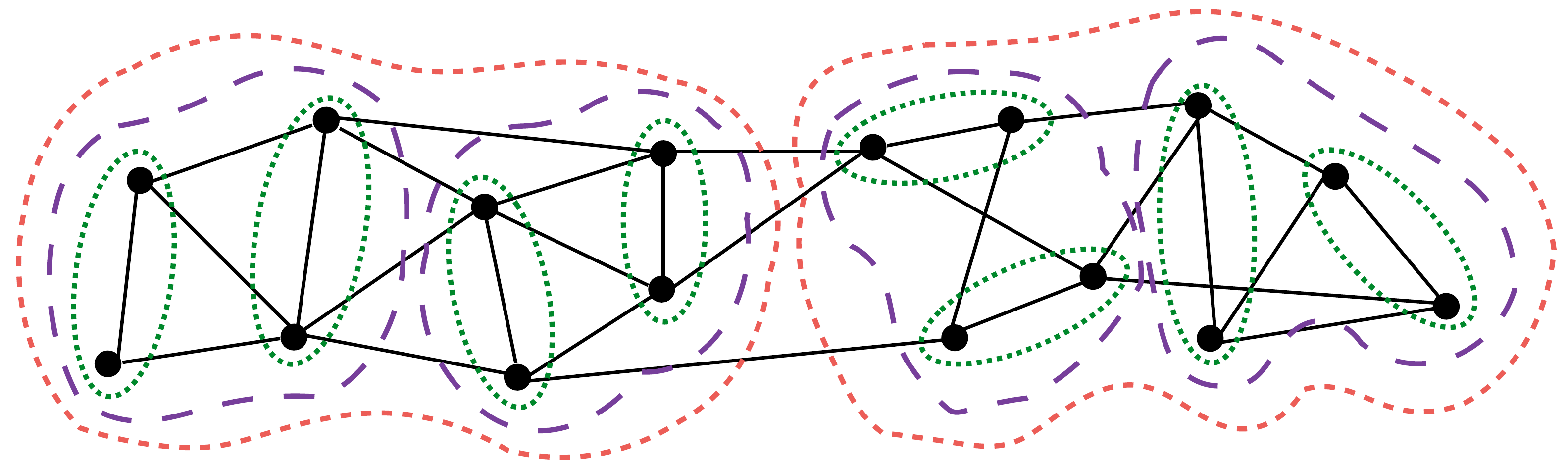} 
         \caption{ 
	\small
	}
    \end{subfigure}%
   
    \begin{subfigure}[b]{0.55\textwidth}
        \centering
         \includegraphics[width=1\linewidth]{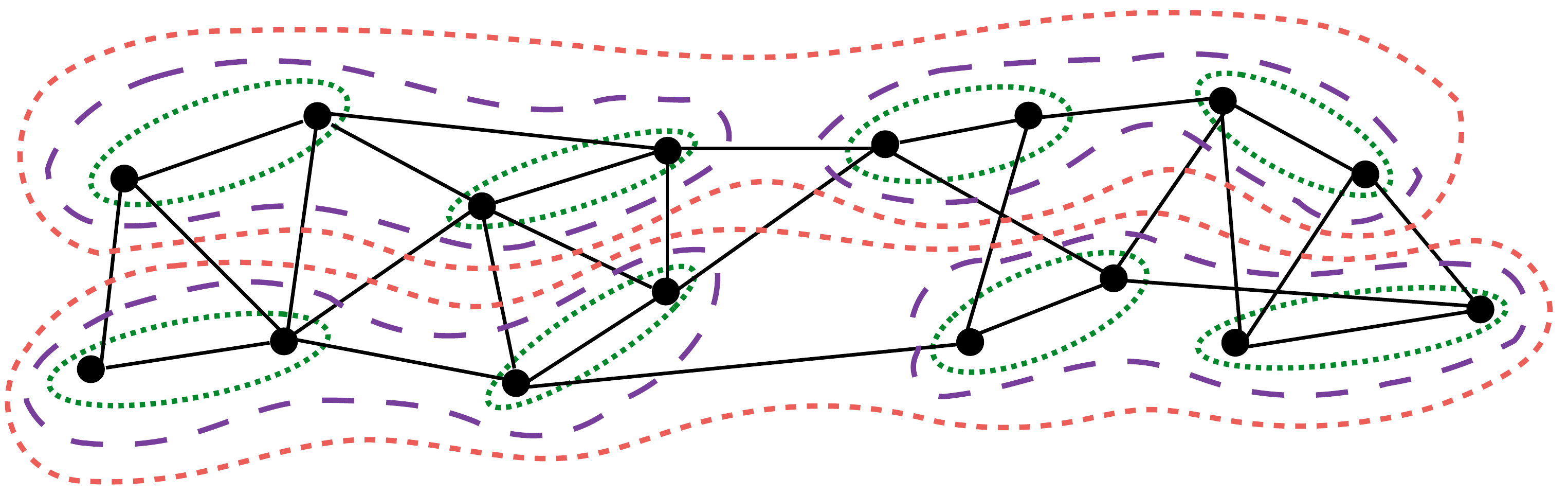} 
         \caption{ 
	\small
	}
    \end{subfigure}

    \begin{subfigure}[c]{0.30\textwidth}
        \centering
         \includegraphics[width=1\linewidth]{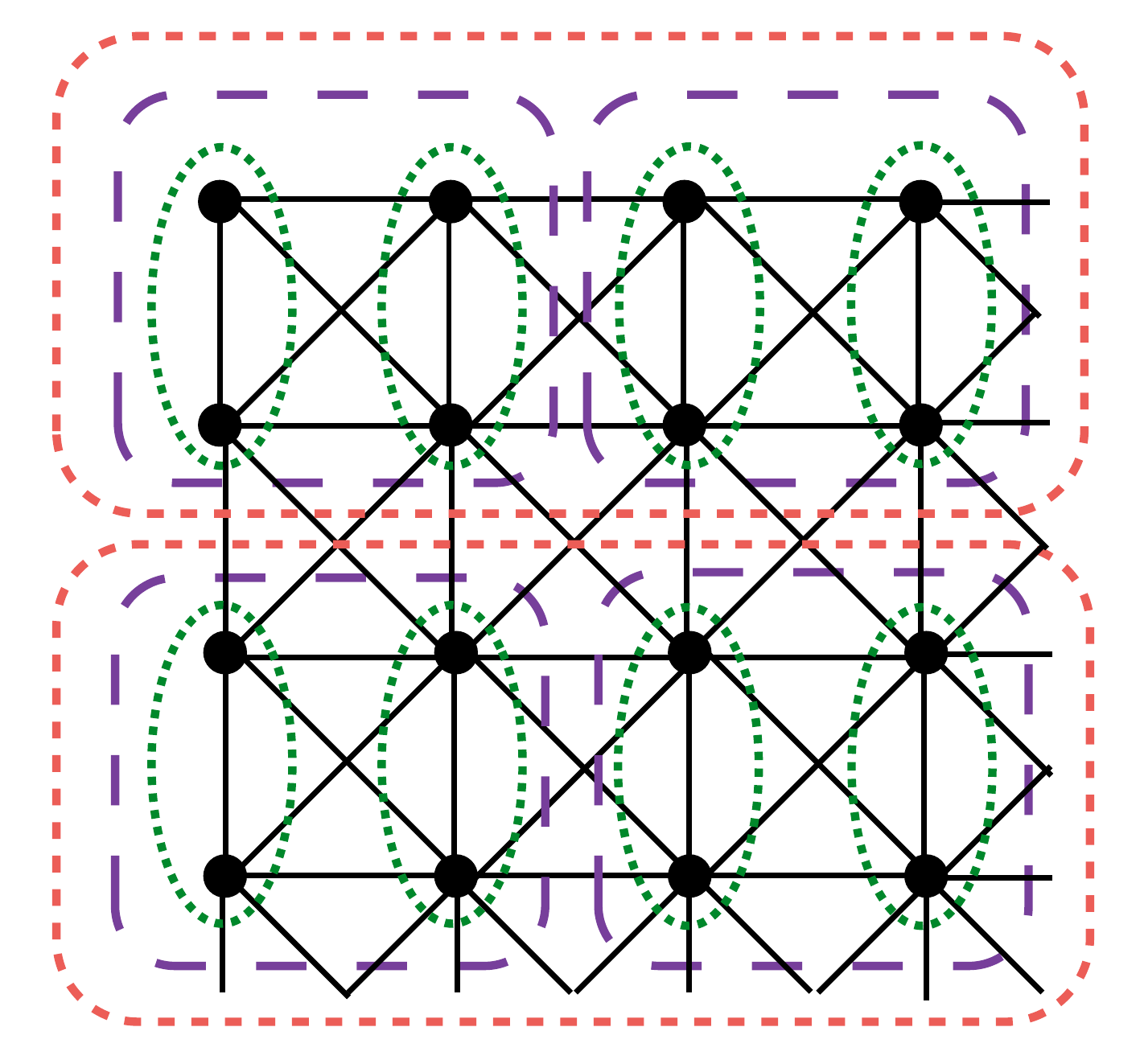} 
         \caption{ 
	\small
	}
    \end{subfigure}
 \caption{\label{fig:2}
\small (a,b): Two different examples of
hierarchical partitions of a graph into connected sets $V_{j,n}$ of size $2^j$, 
for $j=1$ (green), $j=2$ (purple) and $j=3$ (red).
(c) Hierarchical partitions on a square image grid. 
 }
\end{figure}

The free orthogonal Haar scattering transform of Section \ref{sec:2} freely associates any two elements of an internal network layer $S_j x$. A Haar scattering on a graph is constructed by pairing elements according to their position in the
graph, which requires to structure the pairing and the network layers.
We denote by $V$ the set of $d$ vertices of this graph, and assume that $d$ is a power of 2.
The vector $S_j x$ of size $d$ is structured as a two-dimensional  array  $S_j x (n,q)$ of size $2^{-j} d \times 2^j$. 
For each $j \geq 0$, we shall see that $n\in \{1,...,2^{-j}d\}$ is 
a ``spatial'' index of a set of $V_{j,n}$ of $2^j$ graph vertices. 
The $2^j$ parameters $q$ are indexing different
permutation invariant coefficients computed from the values of $x$
in  $V_{j,n}$.
The input network layer is $S_0 x (n,0) = x(n)$. 
We compute $S_{j+1} x$ by pairing the $2^{-j} d$ rows of
$S_{j} x$. The row pairing
\begin{equation}
\label{pairing}
\pi_j = \Big\{ (\pi_j(2n),\pi_j(2n+1)) \Big\}_{0\leq n < 2^{-j} d}. 
\end{equation}
is pairing each $(\pi_j(2n),q)$ with $(\pi_j(2n+1),q)$ for $0\leq q < 2^j$. 
It imposes a row structure on the free pairing of Section \ref{freesec}.
Applying the absolute Haar filter (\ref{permasn}) to each pair gives
\begin{equation}
\label{eqn12}
S_{j+1} x (n,2q) = S_j x(\pi_j(2n),q) + S_j x(\pi_j(2n+1),q)
\end{equation}
and
\begin{equation}
\label{eqn22}
S_{j+1} x(n,2q+1) = |S_j x(\pi_j(2n),q) - S_j x(\pi_j(2n+1),q)|~.
\end{equation}
Applying these equation for $j \leq J$ defines a
structured Haar network illustrated in Figure \ref{fig:1-2}.
If we remove the absolute value from (\ref{eqn22}) then these equations
iterate linear Haar filters and define an orthogonal Walsh transform
\cite{Coifman}. The absolute value completely modifies the properties of
this transform but Section \ref{Haarwave} proves that it can be still be written
as a product of orthogonal Haar wavelet transforms, 
alternating with absolute value non-linearities.

The following proposition proves that this structured Haar scattering is a transformation
on a hierarchical grouping of the graph vertices, derived 
from the row pairing (\ref{pairing}).
Let $V_{0,n} = \{n\}$ for $n \in V$. For any 
$j \geq 0$ and $n \in \{1,...,2^{-j-1}d \}$, we define
\begin{equation}
\label{regroup}
V_{j+1,n} = V_{j,\pi_j(2n)} \cup V_{j,\pi_j(2n+1)}.
\end{equation}
We verify by induction on $j$ that it defines a partition
$V = \cup_n V_{j,n}$, where each $V_{j,n}$ is a set of $2^j$ vertices.

\begin{proposition}\label{thm:VJN}
The coefficients $\{ S_J x(n,q) \}_{0 \le q < 2^j}$ are computed by applying a Hadamard matrix
to the restriction of $x$ to $V_{J,n}$. This Hadamard matrix depends on $x$, $J$ and $n$. 
\end{proposition}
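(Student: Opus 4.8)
The plan is to prove a slightly strengthened statement by induction on $J$: for every $J \ge 0$ and every spatial index $n$, there is a matrix $\mathbf{H}_{x,J,n}$ of size $2^J \times 2^J$, with entries in $\{-1,+1\}$ and pairwise orthogonal rows (so that $\mathbf{H}_{x,J,n}\mathbf{H}_{x,J,n}^T = 2^J\,\mathrm{Id}$), such that the vector $\{S_J x(n,q)\}_{0\le q<2^J}$ equals $\mathbf{H}_{x,J,n}$ applied to the restriction of $x$ to $V_{J,n}$ (a vector of $2^J$ entries). Carrying the orthogonality relation along the induction is what makes the step close, so I would fold it into the hypothesis rather than try to recover it at the end. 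The base case $J=0$ is immediate, since $V_{0,n}=\{n\}$, $S_0x(n,0)=x(n)$, and $\mathbf{H}_{x,0,n}=[1]$.

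For the inductive step I would fix $n$ and write $m_0=\pi_j(2n)$, $m_1=\pi_j(2n+1)$, so that $V_{j+1,n}=V_{j,m_0}\cup V_{j,m_1}$ is a disjoint union of two sets of size $2^j$. Let $A_0=\mathbf{H}_{x,j,m_0}$ and $A_1=\mathbf{H}_{x,j,m_1}$ be the Hadamard matrices given by the hypothesis, so $S_jx(m_0,\cdot)=A_0\,(x|_{V_{j,m_0}})$ and $S_jx(m_1,\cdot)=A_1\,(x|_{V_{j,m_1}})$. Substituting these into the defining recursions (\ref{eqn12}) and (\ref{eqn22}), the even coefficient $S_{j+1}x(n,2q)$ becomes the $q$-th row of $A_0$ acting on $x|_{V_{j,m_0}}$ plus the $q$-th row of $A_1$ acting on $x|_{V_{j,m_1}}$, hence a $\{-1,+1\}$ combination of $x|_{V_{j+1,n}}$. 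The odd coefficient is $|S_jx(m_0,q)-S_jx(m_1,q)|=\epsilon_q\big(S_jx(m_0,q)-S_jx(m_1,q)\big)$, where $\epsilon_q=\pm1$ is the sign of the difference; this is where, and the only place where, the dependence on $x$ enters. Multiplying through by $\epsilon_q$ keeps the coefficients in $\{-1,+1\}$, so after ordering the columns of $V_{j+1,n}$ as $(V_{j,m_0},V_{j,m_1})$ and stacking the even rows above the odd rows, the level-$(j+1)$ matrix takes the Sylvester-type block form
\begin{equation*}
\mathbf{H}_{x,j+1,n}=\begin{pmatrix} A_0 & A_1 \\ D A_0 & -D A_1 \end{pmatrix},
\qquad D=\mathrm{diag}(\epsilon_0,\dots,\epsilon_{2^j-1}).
\end{equation*}

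It then remains to check that this matrix is again Hadamard. Its entries lie in $\{-1,+1\}$ by the preceding paragraph, so only orthogonality of the rows must be verified. Computing $\mathbf{H}_{x,j+1,n}\mathbf{H}_{x,j+1,n}^T$ blockwise and using the inductive identities $A_0A_0^T=A_1A_1^T=2^j\,\mathrm{Id}$ together with $D^2=\mathrm{Id}$, the two diagonal blocks both collapse to $2^j\,\mathrm{Id}+2^j\,\mathrm{Id}=2^{j+1}\,\mathrm{Id}$, while the off-diagonal blocks reduce to $2^jD-2^jD=0$ and its transpose. Hence $\mathbf{H}_{x,j+1,n}\mathbf{H}_{x,j+1,n}^T=2^{j+1}\,\mathrm{Id}$, which completes the induction, and by construction the matrix depends on $x$, $j+1$ and $n$ only through the pairings and the signs $\epsilon_q$. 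I expect the one real subtlety to be the bookkeeping showing that the absolute value does not destroy orthogonality: a priori the sign diagonal $D$ could couple distinct rows, but because it enters symmetrically in the lower block its contribution cancels in the off-diagonal blocks and squares to the identity in the diagonal block, which is exactly the cancellation that drives the Sylvester doubling. A minor point to flag is that the index range in the statement should read $0\le q<2^J$, matching the $2^J$ coefficients attached to $V_{J,n}$.
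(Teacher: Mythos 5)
Your proof is correct, and it takes a genuinely more self-contained route than the paper. The paper splits the work in two: it imports orthogonality wholesale from Theorem \ref{prop:norm-preserve} (which factors $S_J x = 2^{J/2} M_{x,J}\, x$ with $M_{x,J}$ orthogonal, via the diagonal sign matrices $E_{j,x}$), and then only verifies inductively that each $x(v)$, $v \in V_{J,n}$, enters exactly once with coefficient $\pm 1$, so the restricted matrix is Hadamard. You instead carry the orthogonality through the induction yourself, via the explicit Sylvester-type doubling $\bigl(\begin{smallmatrix} A_0 & A_1 \\ D A_0 & -D A_1 \end{smallmatrix}\bigr)$ with the sign diagonal $D = \mathrm{diag}(\epsilon_0,\dots,\epsilon_{2^j-1})$ absorbing the absolute values; your blockwise check $A_0 A_0^T = A_1 A_1^T = 2^j\,\mathrm{Id}$, $D^2 = \mathrm{Id}$ is exactly the cancellation that the paper obtains implicitly from $E_{j,x}$ being a sign matrix. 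What the paper's route buys is brevity, since the orthogonality is already proved globally; what yours buys is an explicit recursive formula for the Hadamard matrix, a transparent localization of the $x$-dependence in $D$, and it quietly closes a small gap the paper glosses over: Theorem \ref{prop:norm-preserve} gives orthogonality of the full $d \times d$ matrix, and passing to the $2^J \times 2^J$ block attached to $V_{J,n}$ still requires observing that the rows for index $n$ are supported on $V_{J,n}$ (so that they restrict to an orthogonal system of the right size), whereas your construction produces the block matrix directly. Two trivia to note: stacking even rows above odd rows permutes the natural interleaved ordering of $q$, which is harmless since row permutations preserve the Hadamard property (you state the ordering, so this is fine); and when a difference vanishes, $\epsilon_q$ may be fixed arbitrarily in $\{\pm 1\}$ without affecting anything. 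Your flag that the statement's index range should read $0 \le q < 2^J$ is also correct; it is a typo in the paper.
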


\begin{proof}
Theorem \ref{prop:norm-preserve} proves that  $\{ S_{J}x(n,q) \}_{0 \le q <2^J} $ is computed by applying am orthogonal 
transform to $x$. To prove that it is a Hadamard matrix, 
it is sufficient to show that its  entries are $\pm 1$. 
We verify by induction on $j\leq J$ that $S_j x(n,q)$ only depends
on restriction of $x$ to $V_{j,n}$,  by applying
(\ref{eqn22}) and (\ref{eqn12}) together with (\ref{regroup}).
We also verify 
that each $x(v)$ for $v \in V_{j,n}$ appears exactly once in the calculation, 
with an addition or a subtraction.
Because of the absolute value, the addition or subtraction which are $1$ and
$-1$ in the Hadamard matrix, which therefore depends upon $x$, $J$ and $n$
\end{proof}

An orthogonal Haar scattering on a graph can thus be interpreted as an
adaptive Hadamard transform over groups of vertices, which outputs positive 
coefficients. Walsh matrices are particular cases of Hadamard matrices.
The induction (\ref{regroup}) defines sets $V_{j,n}$ with connected
nodes in the graph if for 
all $j$ and $n$, each pair $(\pi_j(2n),\pi_j(2n+1))$ regroups two 
sets $V_{j,\pi_j(2n)}$ and $V_{j,\pi_j(2n+1)}$ which are connected. It means that 
at least one element of $V_{j,\pi_j(2n)}$ is connected to one element of $V_{j,\pi_j(2n+1)}$. 
There are many possible connected dyadic partitions of any given graph. Figure \ref{fig:2}(a,b) shows two different examples of connected graph partitions.

For images sampled on a square grid, a pixel is connected with $8$ neighbors.
A structured Haar scattering can be computed by pairing 
neighbor image pixels, alternatively along rows and columns as the depth $j$
increases. When $j$ is even, each $V_{j,n}$ 
is then a square group of $2^j$ pixels, as illustrated in 
Figure \ref{fig:2}(c).
Shifting such a partition defines a new partition.
Neighbor pixels can also be grouped in the diagonal direction which amounts
to rotate the sets $V_{j,n}$ by $\pi/4$ to define a new dyadic partition. 
Each of these partitions define a different structured Haar scattering.
Section \ref{sec:exp} applies these
structured Haar image scattering to image classification.

\subsection{Scattering order}
\label{ordersec}

Scattering coefficients have very different properties depending upon the number of absolute values which are used to compute them.  A scattering coefficient of order $m$ is a coefficient computed by cascading $m$ absolute values.
Their amplitude have a fast decay as the order $m$ increases, and
their locations are specified by the following proposition.  

\begin{proposition}
\label{musdfords}
If $q = 0$ then $S_j x(n,q)$ is a coefficient of order $0$.
Otherwise, $S_j x(n,q)$ is a coefficient of order $m \leq j$ if
there exists $0 \leq j_1 < ...<j_m < j$ such that
\begin{equation}
\label{coeffssd}
q = \sum_{k=1}^m 2^{j-j_k}~.
\end{equation}
There are $ {j  \choose m}  2^{-j} d$ coefficients of order $m$ in $S_j x$.
\end{proposition}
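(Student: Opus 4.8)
The plan is to track how the scattering order evolves with the column index $q$ as the depth increases, to show it is governed by a simple recursion that does not see the spatial index $n$, and then to read off both the location formula and the count as combinatorial consequences. The whole statement really reduces to the single observation that the order of $S_j x(n,q)$ is the Hamming weight of $q$.

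First I would argue, by induction on $j$, that the order of $S_j x(n,q)$ depends only on $q$ and equals the number of $1$'s in the $j$-bit binary expansion of $q$. The base case $j=0$ is immediate, since only $q=0$ occurs and it has order $0$. For the inductive step, read off the two update rules (\ref{eqn12}) and (\ref{eqn22}): by hypothesis both parents $S_j x(\pi_j(2n),q)$ and $S_j x(\pi_j(2n+1),q)$ have the same order $m(j,q)$ (depending only on $q$). The even column $S_{j+1}x(n,2q)$ is obtained from them by an addition, introducing no new absolute value, whereas the odd column $S_{j+1}x(n,2q+1)$ wraps their difference in exactly one further absolute value. Hence the order is still a function of the column index alone and satisfies
\[
m(j+1,2q) = m(j,q), \qquad m(j+1,2q+1) = m(j,q)+1 .
\]
Since appending the bit $0$ (even branch) preserves the Hamming weight and appending the bit $1$ (odd branch) raises it by one, this is precisely the recursion for the number of $1$'s in the expansion of $q$. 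In particular $q=0$ is the unique column of order $0$.

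Next I would translate this Hamming-weight description into the stated location formula. A column $q$ has weight $m$ exactly when its binary expansion carries $1$'s in $m$ distinct positions; recording those positions as $0 \le j_1 < \cdots < j_m < j$ and reading them through the left-shift that each successive pairing applies to the column index yields $q = \sum_{k=1}^m 2^{j-j_k}$, up to the convention that identifies each absolute value with the layer at which it is taken. This assignment from strictly increasing $m$-tuples to weight-$m$ columns is a bijection, so every coefficient of order $m$ arises in this way and from a unique choice of the $j_k$.

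Finally, the count is an immediate corollary of the same bijection: the number of $q \in \{0,\dots,2^j-1\}$ of Hamming weight $m$ is the number of ways to place the $m$ occupied bit positions among $j$, namely $\binom{j}{m}$. As $S_j x$ is an array of size $2^{-j}d \times 2^j$, each such column is shared by $2^{-j}d$ spatial indices $n$, all of the same order, giving $\binom{j}{m}\,2^{-j}d$ coefficients of order $m$ in $S_j x$. The only delicate point in the whole argument is the first step: establishing cleanly that the order is a function of $q$ alone and setting up the recursion with the correct bookkeeping of which pairing step contributes each bit of $q$; once $m(j+1,2q)=m(j,q)$ and $m(j+1,2q+1)=m(j,q)+1$ are in hand, the identification with the Hamming weight, the location formula, and the binomial count are all routine.
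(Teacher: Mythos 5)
Your proof is correct and follows essentially the same route as the paper: an induction on $j$ using the update rules (\ref{eqn12}) and (\ref{eqn22}), with the even column $2q$ preserving the order and the odd column $2q+1$ incrementing it, followed by the binomial count over admissible $q$ multiplied by the $2^{-j}d$ spatial indices. Your Hamming-weight packaging of the column index is a clean restatement of the paper's recursion (and your hedge about the indexing convention is warranted, since the stated bound $0 \leq j_1 < \cdots < j_m < j$ has a harmless off-by-one relative to $q < 2^j$ that does not affect the count $\binom{j}{m}$), but it is the same argument.
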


\begin{proof}
This proposition is proved by induction on $j$.
For $j=0$ 
all coefficients are of order $0$ since $S_0 x(n,0) = x(n)$. 
If $S_j x(n,q)$ is of order $m$ then 
(\ref{eqn12}) and (\ref{eqn22}) 
imply that $S_{j+1} x(n,2q)$ is of order $m$ and $S_{j+1} x(n,2q+1)$ is of order $m+1$. It results that (\ref{coeffssd}) is valid for $j+1$ if is valid for $j$.

The number of coefficients $S_j x(n,q)$ of order $m$ corresponds to the number of
choices for $q$ and hence for $0 \leq j_1 < ...<j_m < j$,
which is $ {j  \choose m}$. This must be multiplied by the number of indices
$n$ which is $2^{-j}d$. 
\end{proof}

The amplitude of scattering coefficients typically decreases exponentially when the scattering order $m$ increases, because of the contraction produced by the absolute value. High order scattering coefficients can thus be neglected. 
This is illustrated by considering a vector $x$ of independent Gaussian random variables of variance $1$. The value of $S_j x(n,q)$ only depends upon the values of $x$ in $V_{j,n}$. 
Since $V_{j,n}$ does not intersect with
$V_{j,n'}$ if $n \neq n'$, we derive that $S_j (n,q)$ and $S_j(n',q)$ are independent.
They have same mean and same variance because $x$ is identically 
distributed. Scattering coefficients
are iteratively 
computed by adding pairs of such coefficients, or by computing the absolute value of
their difference. Adding
two independent random variables multiplies their variance by $2$. Subtracting two
independent random variables of same mean and variance yields a new random
variable whose mean is zero and whose variance 
is multiplied by $2$. Taking the absolute value reduces the variance by a factor
which depends upon its probability distribution. If this distribution is Gaussian then
this factor is $1 - 2/\pi$. 
If we suppose that this distribution remains approximately Gaussian, then applying $m$ absolute values reduces the variance by approximately $(1-2/\pi)^m$. 
Since there are $J \choose{m}$ coefficients of order $m$, their total normalized variance  $\sigma^2_{m,J}$ is approximated by ${J \choose{m}} (1-2/\pi)^m$. 
Table \ref{table:GaussianVar} shows that ${J \choose {m}} (1 - 2 / \pi)^m$ is 
indeed of the same order of magnitude as the value
$\sigma_{m,J}^2$ computed numerically.  This variance becomes much smaller for $m > 4$. This observation remains valid for large classes of signals $x$. Scattering coefficients of order $m > 4$ usually have a negligible energy and are thus removed in classification applications.

\begin{table}[h!]
\begin{center}
\begin{tabular}{c|c c c c c}
\hline
      $m$           &       1        &      2        &       3                           &        4                                 &       5  \\
\hline
 $\sigma^2_{m,J} $  
			&   1.8    &   1.4   &  $5.8\times 10^{-1}$ &    $1.2 \times 10^{-1} $ &  $1.2 \times 10^{-2} $ \\
$(1-\frac{2}{\pi})^m\cdot {J \choose m} $ 
                       &   1.8    &    1.3  &   $4.8 \times 10^{-1}$ &  $8.7 \times 10^{-2}$ &   $ 6.3 \times 10^{-3} $ \\
\hline
\end{tabular}
\end{center}\vspace{-0.25cm}
\caption{ \label{table:GaussianVar}
\small  $\sigma_{m,J}^2$ is the normalized variance of all order $m$ coefficients in $S_J x$, computed for a Gaussian white noise $x$ with $J = 5$. It is decays
approximately like $ (1-\frac{2}{\pi})^m\cdot {J \choose m} $.
}
\vspace{-0.25cm}
\end{table}

\subsection{Scattering with orthogonal Haar wavelet bases}
\label{Haarwave}

We now prove that scattering coefficients of order $m$ are obtained by cascading $m$ orthogonal Haar wavelet transforms defined on the graph. Haar wavelets
can easily be constructed on graphs 
\cite{  gavish2010multiscale, rustamov2013wavelets}.
Section \ref{stransdsec} shows that a Haar scattering on a graph
is constructed over dyadic partitions $\{ V_{j,n} \}_n$ of $V$, which are obtained by progressively aggregating vertices by pairing
$V_{j+1,n} = V_{j,\pi_j(2n)} \cup V_{j,\pi_j(2n+1)}$.
We denote by $1_{V_{j,n}}(v)$ the indicator function of $V_{j,n}$ in $V$. A Haar wavelet computes the difference between the sum of signal values over two aggregated sets: 
\begin{equation}\label{eq:psi-jn}
\psi_{j+1,n} = 1_{V_{j,\pi_j(2n)}} - 1_{V_{j,\pi_j(2n+1)}}~. 
\end{equation}
Inner products between signals defined on $V$ are written
\[
\lb x , x' \rb = \sum_{v \in V} x(v)\, x'(v). 
\]
For any $2^J < d$, 
\begin{equation}\label{eq:Haar-wavelet-j}
\{1_{V_{J,n}} \}_{0 \leq n < 2^{-J } d} \cup \{\psi_{j,n}\}_{0 \leq n < 2^{-j} d , 0 \leq j < J}
\end{equation}
is a family of $d$ orthogonal Haar wavelets which define an
orthogonal basis of $\R^d$.
The following theorem proves that order $m+1$ coefficients are obtained
by computing the orthogonal Haar wavelet transform of coefficients of order $m$. The proof is in Appendix \ref{app:1}.

\begin{theorem}\label{prop:order-m-coeff}
Let $q = \sum_{k=1}^{m} 2^{j-j_k}$ with $j_1< ... < j_m\leq j$. 
If $j_{m+1} > j_m$ then for each $n \leq 2^{-j-1} d$
\begin{equation}
\label{propsdfnsd}
S_j x( n,  q + 2^{j-j_{m+1}}  )
 = \sum_{ p \atop{V_{j_{m+1}, p}\subset V_{j, n}} }
	| \lb \overline{S}_{j_{m}}  x (\cdot,  2^{j_m-j} q ) , \psi_{j_{m + 1},  p}   \rb |.
\end{equation}
with
\[
\overline{S}_{j_m} x ( . , q') = \sum_{n=0}^{2^{-j_m}d-1} S_{j_m} x(n, q')\, 1_{V_{j_m,n}}.
\]
\end{theorem}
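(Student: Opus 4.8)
The plan is to follow the recursion defining the structured scattering, cutting the path from level $j_m$ up to level $j$ into three segments and identifying the middle one with an absolute orthogonal Haar wavelet coefficient. Throughout I set $q' = 2^{j_m-j}q = \sum_{k=1}^m 2^{j_m-j_k}$ and assume $j_m < j_{m+1} \le j$ (needed for $q+2^{j-j_{m+1}}$ to be a legitimate order-$(m{+}1)$ index $<2^j$).

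First I would establish a \emph{sum-branch lemma}: for any base level $\ell_0$, any coefficient index $c$, and any $\ell \ge \ell_0$,
\[
S_\ell x(n, 2^{\ell-\ell_0} c) = \sum_{p\,:\,V_{\ell_0,p}\subset V_{\ell,n}} S_{\ell_0} x(p, c).
\]
This follows by induction on $\ell$ from the sum rule (\ref{eqn12}), which doubles the coefficient index, together with the disjoint union $V_{\ell+1,n} = V_{\ell,\pi_\ell(2n)} \cup V_{\ell,\pi_\ell(2n+1)}$ of (\ref{regroup}): summing over the two children merges exactly the two families of level-$\ell_0$ cells. Since $\overline{S}_{j_m}x(\cdot,q')$ is constant equal to $S_{j_m}x(p,q')$ on each cell $V_{j_m,p}$ (of cardinality $2^{j_m}$), this lemma also provides the dictionary translating accumulated sums into inner products against indicators,
\[
\sum_{p\,:\,V_{j_m,p}\subset V_{\ell,n}} S_{j_m}x(p,q') = 2^{-j_m}\,\langle \overline{S}_{j_m}x(\cdot,q'),\,1_{V_{\ell,n}}\rangle .
\]

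Next I would check the index bookkeeping and peel off the top segment. The target index is produced from $q'$ by sum branches $j_m\to j_{m+1}-1$ (index $\mapsto 2^{j_{m+1}-1-j_m}q' =: Q$), one absolute-difference branch at $j_{m+1}$ (index $\mapsto 2Q+1$), then sum branches $j_{m+1}\to j$; a direct computation gives $2^{j-j_{m+1}}(2Q+1) = 2^{j-j_m}q' + 2^{j-j_{m+1}} = q + 2^{j-j_{m+1}}$, as required. Applying the sum-branch lemma on $j_{m+1}\to j$ yields
\[
S_j x\big(n,\,q+2^{j-j_{m+1}}\big) = \sum_{p\,:\,V_{j_{m+1},p}\subset V_{j,n}} S_{j_{m+1}}x\big(p,\,2Q+1\big),
\]
which already produces the outer sum of the statement, since $\psi_{j_{m+1},p}$ is supported exactly on $V_{j_{m+1},p}$ by (\ref{eq:psi-jn}). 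Finally I would expand each summand: the single difference rule (\ref{eqn22}) gives $S_{j_{m+1}}x(p,2Q+1)=|S_{j_{m+1}-1}x(\pi_{j_{m+1}-1}(2p),Q)-S_{j_{m+1}-1}x(\pi_{j_{m+1}-1}(2p+1),Q)|$, and the sum-branch lemma on the bottom segment $j_m\to j_{m+1}-1$ rewrites each inner value as $2^{-j_m}\langle \overline{S}_{j_m}x(\cdot,q'),1_{V_{j_{m+1}-1,\cdot}}\rangle$; their difference is, by the definition (\ref{eq:psi-jn}) of $\psi_{j_{m+1},p}$, precisely $2^{-j_m}\langle \overline{S}_{j_m}x(\cdot,q'),\psi_{j_{m+1},p}\rangle$. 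Taking absolute values and summing over $p$ gives the claimed identity.

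The main obstacle is not a hard estimate but the simultaneous bookkeeping: verifying that the scattering index evolves as $2Q$ on sum branches and $2Q+1$ on the one difference branch, that $\psi_{j_{m+1},p}$ is supported on $V_{j_{m+1},p}$ so the outer sum ranges over the correct cells, and that the normalization constant $2^{-j_m}$ produced by the unnormalized indicators in $\overline{S}_{j_m}$ is carried consistently (one should confirm whether the clean statement intends normalized wavelets, which absorbs this factor). The one genuinely conceptual step is recognizing the isolated difference transition at level $j_{m+1}$ as an absolute Haar wavelet coefficient of $\overline{S}_{j_m}x(\cdot,q')$; once the sum-branch lemma is in place, the rest is the mechanical identification of nested sums with indicator inner products.
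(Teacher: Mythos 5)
Your proof is correct and is essentially the paper's own argument: the paper's appendix proof performs the same three-segment decomposition---sum branches from $j_m$ to $j_{m+1}-1$ (your sum-branch lemma, which it asserts without proof as being ``calculated by $(j_{m+1}-1-j_m)$ times additions''), a single absolute-difference branch at $j_{m+1}$ identified with the Haar wavelet coefficient via (\ref{eq:psi-jn}), and sum branches from $j_{m+1}$ up to $j$ producing the outer sum over $p$ with $V_{j_{m+1},p}\subset V_{j,n}$---with the identical index bookkeeping written in terms of $\kappa=2^{-j}q$. The normalization concern you flag is in fact a genuine slip in the paper rather than a misunderstanding on your part: with the unnormalized indicators in the stated definitions of $\overline{S}_{j_m}$ and $\psi_{j+1,n}$, each inner product against the piecewise-constant extension picks up a cell-size factor (e.g.\ $\langle \overline{S}_j x(\cdot,q), 1_{V_{j+1,n}}\rangle = 2^{j}\,S_{j+1}x(n,2q)$, exact only for $j=0$, yet the paper's proof writes this as an equality), so (\ref{propsdfnsd}) as printed holds only up to the factor $2^{-j_m}$ you carry explicitly---as a sanity check, $d=8$, $j=3$, $j_1=1$, $j_2=2$ gives a right-hand side twice the scattering coefficient---and your version (or the same statement with indicators normalized by cell size) is the correct one.
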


If $q = \sum_{k=1}^{m} 2^{j-j_k}$ and $j_{m+1} > j_m$ then
$S_{j_m} x(n,2^{j_m-j}q)$ are coefficients of order $m$ whereas
$S_j x(n,q+ 2^{j-j_{m+1}}  )$ is a coefficient
of order $m+1$. Equation (\ref{propsdfnsd}) proves that a coefficient
of order $m+1$ is obtained by calculating the wavelet transform of
scattering coefficients of order $m$, and summing their absolute values.
A coefficient of order $m+1$ 
thus measures the averaged variations of the $m$-th order scattering coefficients 
on neighborhoods of size $2^{j_{m+1}}$ in the graph. For example, if $x$ is constant in a $V_{j,n}$ then $S_\ell x(n,q) = 0$ if $\ell \le j$ and $q \neq 0$.

%%%%%%%%%%%%%%%%
\subsection{Learning graph connectivity by variation minimization}
\label{subsec:learn-connect}
% learning algorithm minimizes variation on the graph

In many problems the graph connectivity is unknown. 
Learning a connected dyadic partitions is easier than learning the full connectivity of a graph, which is typically an NP complete problem.  Section \ref{subsec:unsup-learn} introduces a  polynomial complexity  algorithm,
which learns pairings in orthogonal Haar scattering networks. 
For a Haar scattering on a graph, we show that this algorithms amounts to computing dyadic partitions where scattering coefficients have a minimum total variation. 
The consistency of this pairing algorithm is studied over particular
Gaussian stationary processes, and we show that there is no curse of 
dimensionality.

Section \ref{subsec:unsup-learn} introduces two criteria 
to optimize the pairing $\pi_j$ of a free orthogonal Haar 
scattering, from a training set $\{x_i \}_{i\leq N}$. 
We concentrate on the $\bf l^1$ 
norm minimization, which has a simpler expression. 
For a Haar scattering on a graph, the
$\bf l^1$ minimization (\ref{l1energy2}) computes 
a row pairing $\pi_j$ which minimizes
\begin{equation}
\label{msdifnsdf}
\sum_{i=1}^N \, \sum_{n=0}^{d 2^{-(j+1)}}\, \sum_{q=0}^{2^j-1} 
|S_j x_i (\pi_j(2n),q) - S_j x_i (\pi_j(2n+1),q)|.
\end{equation}
This optimal pairing regroups vertex sets $V_{j,\pi_j(2n)}$ and $V_{j,\pi_j(2n+1)}$ whose scattering coefficients have a minimum total variation.

Suppose that the $N$ training samples $x_i$ are independent realizations of a random vector $x$. To guarantee that this pairing finds connected sets we must make sure that the total variation minimization favors regrouping neighborhood points, which means that $x$ has some form of regularity on the graph. 
We also need $N$ to be sufficiently large
so that this minimization finds connected sets with high probability, despite statistical fluctuations. Avoiding the curse of dimensionality means that $N$ should not grow exponentially with the signal dimension $d$. 

To attack this problem mathematically, we consider  a very particular case, where signals are defined on a ring graph, and are thus $d$ periodic. Two indices $n$ and $n'$ are connected if $|n-n'| = 1\mod d$. We study the optimization of the first network layer for $j = 0$, where $S_0 x(n,q) = x(n)$. The minimization of (\ref{msdifnsdf}) amounts to  compute a pairing $\pi$ which minimizes
\begin{equation}
\label{msdifnsdf2}
\sum_{i=1}^N \left(\sum_{n=0}^{d/2-1} |x_i (\pi(2n)) - x_i (\pi(2n+1))| \right).
\end{equation}
This pairing is connected if and only if for all $n$, $|\pi(2n)- \pi(2n+1)| = 1 \mod d$.

\begin{figure}[h!]
\centering
\includegraphics[width= 0.45 \textwidth ]{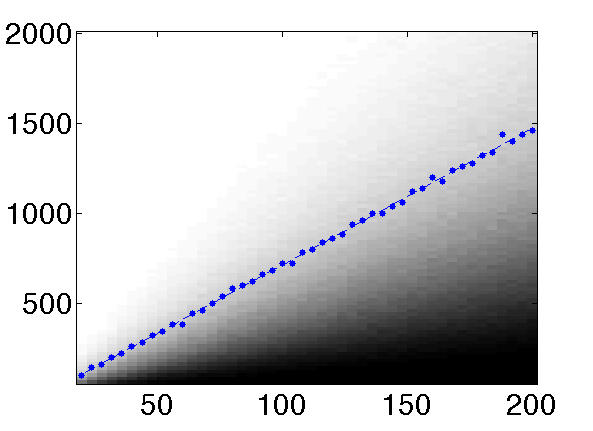}
\caption{ \small Each image pixel gives the probability that
the  total variation minimization (\ref{msdifnsdf2}) finds pairs which
are all connected, when $x$ is a Gaussian stationary vector. It is computed as
a function of the dimension $d$ of the vector (horizontal axis) and of the
number $N$ of training samples (vertical axis). Black and white 
points are probabilities respectively equal to $0$ and $1$. 
The blue dotted line corresponds to a probability $0.8$.
}
\label{Gaussian}
\end{figure}

The regularity and statistical fluctuations of $x(n)$ are controlled by supposing that $x$ is a circular stationary Gaussian process. The stationarity implies that its covariance matrix ${\rm Cov} (x(n),x(m)) = \Sigma(n,m)$ depends on the
distance between points $\Sigma(n,m) = \rho((n-m) \mod d)$. The average regularity depends upon the decay of the correlation $\rho(u)$.  We denote by $\|\Sigma \|_{op}$ the sup operator norm of $\Sigma$.  The following theorem proves that the training size $N$ must grow like  $d\, \log d$ in order to compute an optimal pairing with a high probability. The constant is inversely proportional to a normalized ``correlation gap,'' which depends upon
the difference between the correlation of neighborhood points and more far away points.
It is defined by
\begin{equation}
\quad
\Delta = \left(  \sqrt{ 1- \frac{
{ \max_{ n \geq 2} \rho(n)}}{\rho(0)}}  -  \sqrt{ 1 - \frac{\rho(1)}{\rho(0)}}    
\right)^2.
\label{eq:decay-rho}
\end{equation}

\begin{theorem}\label{thm:n>dlogd}
Given a circular stationary Gaussian process with $\Delta > 0$, the pairing which minimizes the empirical total variation (\ref{msdifnsdf2}) has probability larger than $1 - \epsilon$ to be connected if
\begin{equation}
\label{theoansdf}
N >\frac{\pi^3 \|\Sigma \|_{op}}{2 \Delta}\, d\Big(3\log d - \log{\varepsilon} \Big).
\end{equation}
\end{theorem}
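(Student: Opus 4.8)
The plan is to prove the stronger statement that, with probability at least $1-\varepsilon$, one fixed connected pairing $M_0=\{(2n,2n{+}1)\}_n$ has strictly smaller empirical cost than \emph{every} non-connected pairing. Since the objective (\ref{msdifnsdf2}) is additive in the pairs, $L(\pi)=\sum_{(a,b)\in\pi} c(a,b)$ with per-pair cost $c(a,b)=\sum_{i=1}^N |x_i(a)-x_i(b)|$, this immediately implies that no non-connected pairing can be a minimizer, hence the minimizer is connected. The whole argument therefore reduces to comparing sums of the random pair costs $c(a,b)$.

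First I would compute the expected pair cost. Since $x$ is centered circular stationary Gaussian, $x(a)-x(b)$ is centered Gaussian with variance $2(\rho(0)-\rho(u))$, where $u$ is the cyclic distance between $a$ and $b$, so that
\[
\E\,|x(a)-x(b)| \;=\; \sqrt{\tfrac{2}{\pi}}\,\sqrt{2(\rho(0)-\rho(u))}\;=\;\frac{2}{\sqrt{\pi}}\,\sqrt{\rho(0)-\rho(u)}.
\]
This is strictly increasing in $\rho(0)-\rho(u)$, hence minimized at $u=1$ exactly when $\rho(1)>\max_{u\ge 2}\rho(u)$, which is precisely the hypothesis $\Delta>0$. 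Reading off the algebraic form of $\Delta$ in (\ref{eq:decay-rho}), the gap between the expected cost of any chord ($u\ge 2$) and a neighbor pair ($u=1$) is at least
\[
\delta \;=\; \frac{2}{\sqrt{\pi}}\,\sqrt{\rho(0)\,\Delta}, \qquad \delta^2=\frac{4}{\pi}\,\rho(0)\,\Delta .
\]
Thus in expectation $M_0$ beats any non-connected pairing by at least $N\delta$ for each chord it contains.

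Next I would convert this expected margin into a tail bound by Gaussian concentration. For a fixed competitor $\pi$ the difference $L(\pi)-L(M_0)=\sum_{i=1}^N F_\pi(x_i)$ is a sum of $N$ i.i.d. copies of a Lipschitz function $F_\pi$ of a Gaussian vector; writing $x_i=\Sigma^{1/2}z_i$ with $z_i$ standard Gaussian, the Gaussian concentration inequality for Lipschitz functions bounds the lower tail of $L(\pi)-L(M_0)$ below its mean by a sub-Gaussian tail with proxy of order $N\,\mathrm{Lip}(F_\pi)^2\,\|\Sigma\|_{op}$. Each term $|x(a)-x(b)|$ is $\sqrt 2$-Lipschitz and the pairs of a matching are disjoint, so $\mathrm{Lip}(F_\pi)$ is of order $\sqrt d$; combining this with the mean lower bound $N\delta$ and $\delta^2=\tfrac{4}{\pi}\rho(0)\Delta$ yields a per-competitor probability of the form $\exp\!\big(-\Omega(\Delta\,N/(\|\Sigma\|_{op}\,d))\big)$. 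This is the origin of both the factor $d$ and the constant involving $\pi$ and $\|\Sigma\|_{op}$ in (\ref{theoansdf}).

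The hard part is the union bound. The number of pairings is $(d-1)!!$, so a naive union over all competitors would cost about $\tfrac d2\log d$ in the exponent and force $N$ to grow like $d^2\log d$. To recover the advertised $d\,(3\log d-\log\varepsilon)$ rate I would not test $M_0$ against every pairing, but reduce the competitors that must be excluded to a polynomial family by an uncrossing (exchange) argument on the ring: starting from any non-connected pairing one can perform cost-comparisons against pairings that are successively closer to a connected configuration, so that it suffices to control a degree-three family of elementary configurations of vertices on the cycle, producing the factor $d^3$ and hence the $3\log d$ term. The step I expect to demand the most care is precisely to organize this reduction so that each retained full-pairing comparison still carries the margin $N\delta$ against a fluctuation of scale $\sqrt{d\,\|\Sigma\|_{op}}$; once the reduction is in place, tracking the explicit constant $\pi^3\|\Sigma\|_{op}/(2\Delta)$ is routine bookkeeping of the factor $\sqrt{2/\pi}$, the Gaussian tail constant, and the operator norm of $\Sigma$.
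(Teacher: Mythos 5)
Your first two steps match the paper's proof: the expected per-pair cost $\alpha_u = \sqrt{2/\pi}\,\sqrt{2(\rho(0)-\rho(u))}$, the identification of the gap $\bar\alpha_2-\alpha_1 = \frac{2}{\sqrt{\pi}}\sqrt{\rho(0)\,\Delta}$ with the hypothesis $\Delta>0$, and the Gaussian--Lipschitz concentration with Lipschitz constant of order $\sqrt{d\,\|\Sigma\|_{op}}$ for the full objective (the paper's Lemma on $f=N^{1/2}d^{-1/2}F$) are all exactly what the paper does, and your constant bookkeeping would reproduce (\ref{theoansdf}). The genuine gap is at the pivotal step, the union bound, and it is twofold. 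First, your premise is wrong: a naive union over all $(d-1)!!$ pairings does \emph{not} cost $\frac d2\log d$ in the exponent, because that prices every competitor with the worst-case margin $N\delta$ of a single chord, whereas — as you yourself note — a pairing with $m$ non-neighbor pairs carries margin at least $mN\delta$. Stratifying $\Pi_d=\bigcup_m \Pi_d^{(m)}$ by the number $m$ of non-neighbor pairs, the concentration exponent scales like $C_\rho\, m^2 N/d$ while the entropy $|\Pi_d^{(m)}|\le d^{2m}/(2m)!$ contributes only $2m\log d$; once $C_\rho N/d > \log d$ the quadratic exponent dominates, each stratum contributes at most $d^2\exp\{-C_\rho N/d\}$, and the total failure probability is at most $(\frac{d^3}{2}+1)\exp\{-C_\rho N/d\}\le \exp\{-C_\rho N/d + 3\log d\}$. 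This is precisely the paper's argument: the factor $d^3$ and the $3\log d$ come from this stratified global union bound, with no reduction of the competitor class whatsoever.

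Second, the replacement you propose — an uncrossing/exchange reduction to a ``degree-three family of elementary configurations'' — is not only unnecessary but unjustified under the theorem's hypotheses. The only assumption is $\Delta>0$, which compares $\rho(1)$ to $\max_{n\ge 2}\rho(n)$ but imposes no monotonicity of $\rho$, hence none of $\alpha_u$ in $u\ge 2$. Consequently a neighbor-creating local exchange need not improve the expected cost: for a chord $(a,a+k)$ with $k\ge 4$ next to a neighbor pair $(a{+}1,a{+}2)$, the exchange to $(a,a{+}1),(a{+}2,a{+}k)$ changes the expectation by $\alpha_{k-2}-\alpha_k$, which can be positive when $\rho(k)>\rho(k-2)$, and the alternative re-pairing $(a,a{+}2),(a{+}1,a{+}k)$ creates no neighbor pair at all. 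So you cannot, configuration by configuration, designate an improving swap with a uniform expected margin, and the claim that excluding a polynomial family of comparisons excludes every non-connected pairing is exactly the step that fails — or at best would require additional hypotheses on $\rho$ that the theorem does not make. The fix is simply to delete the reduction and run the union bound over all pairings stratified by $m$, with $\delta_m=\frac12(\bar\alpha_2-\alpha_1)m$, as the paper does.
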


The proof is based on the Gaussian concentration inequality for Lipschitz function \cite{maurey1991, pisier1985} and is left to Appendix  \ref{app2} . Figure \ref{Gaussian} displays numerical results obtained with a Gaussian stationary process of dimension $d$ where  $ \rho(1)/\rho(0)  =  0.44$ and $\max_{ n \geq 2} \rho(n)/
\rho(0) = 0.06 $.  The gray level image gives the probability that a pairing is connected when computing this pairing by minimizing the total variation (\ref{msdifnsdf2}), as a function of the dimension $d$ and of the number $N$ of training samples. The black and white points correspond to probabilities $0$ and $1$ respectively. In this example, we see that the optimization gives a connected pairing with probability $1-\epsilon$ for $N$ increasing almost linearly with $d$, which is illustrated by the nearly straight line of dotted points corresponding to $\epsilon = 0.2$. The Theorem gives an upper bound which grows like $d \, \log d$, though the constant involved is not tight.

For layer $j> 1$,  $S_j x(n,q)$ is no longer a Gaussian random vector
due to the absolute value non-linearity. However, the  result can be extended using a Talagrand-type concentration argument instead of the Gaussian concentration. Numerical experiments presented in Section \ref{sec:exp} show that this approach does recover the connectivity of high dimensional images with a probability close to $100 \%$ for $j \leq 3$, and that the probability decreases as $j$ increases. 
This seems to be due to the fact that the absolute value contractions reduce
the correlation gap $\Delta$ 
between connected coefficients and more far away coefficients when $j$
increases.

\section{Numerical classification experiments}\label{sec:exp}

Haar scattering representations are tested on classification problems, over
images sampled on a regular grid or an irregular graph. 
We consider the cases where the grid or the graph geometry is known a priori, 
or discovered by unsupervised learning. 
The efficiency of
free and structured Haar scattering architectures 
are compared with state of the art classification results obtained
by deep neural network learning, when the graph geometry is known or unknown.
Although computations are reduced to additions and subtractions, we show a Haar scattering can get state art results when the graph
geometry is unknown even over complex image data bases. For images 
over a known uniform sampling grid, we show 
that the simplifications of a Haar scattering produces an error about
$20\%$ larger than state of the art unsupervised learning algorithms.

% parameter overview
A Haar scattering classification involves few parameters which are reviewed. The scattering scale $2^J \leq d$ is the permutation
invariance scale. Scattering coefficients are computed up to the a maximum order 
$m$, which is set to $4$ in all experiments. Indeed, higher order scattering coefficient have a negligible relative energy, which is below $1\%$, as explained 
in Section \ref{ordersec}.
The unsupervised learning algorithm computes $T$ 
different Haar scattering transforms by subdividing the training set in
$T$ subsets. Increasing $T$ decreases
the classification error but it increases computations.
The error decay becomes negligible for $T \geq 40$. 
The supervised dimension reduction selects a final 
set of $M$ orthogonalized scattering coefficients. We set  $M = 1000$ in all numerical experiments.

\subsection{Classification of image digits in MNIST}
\label{original}

\begin{figure}[t]
\begin{center}
\begin{subfigure}[b]{0.95\linewidth}
\includegraphics[width=0.09\linewidth]
{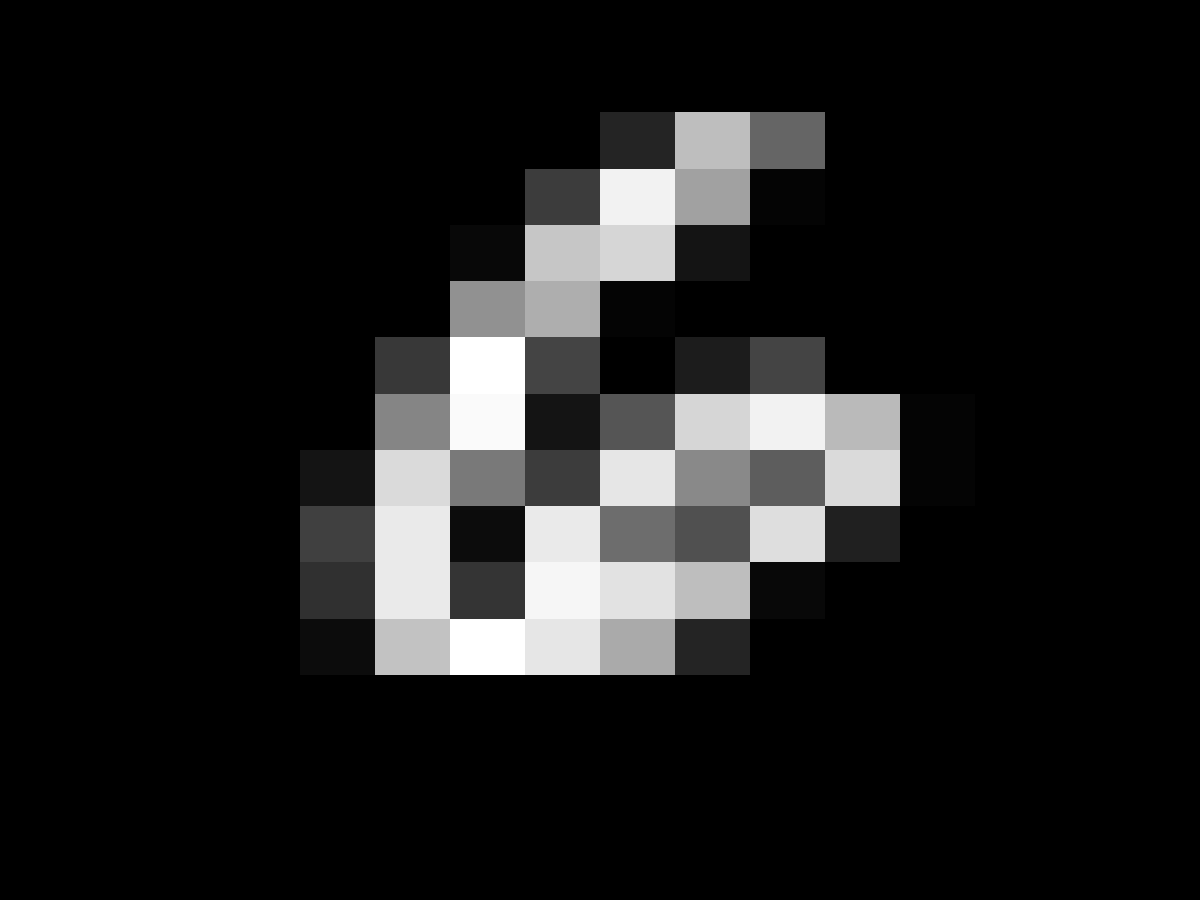}
\includegraphics[width=0.09\linewidth]
{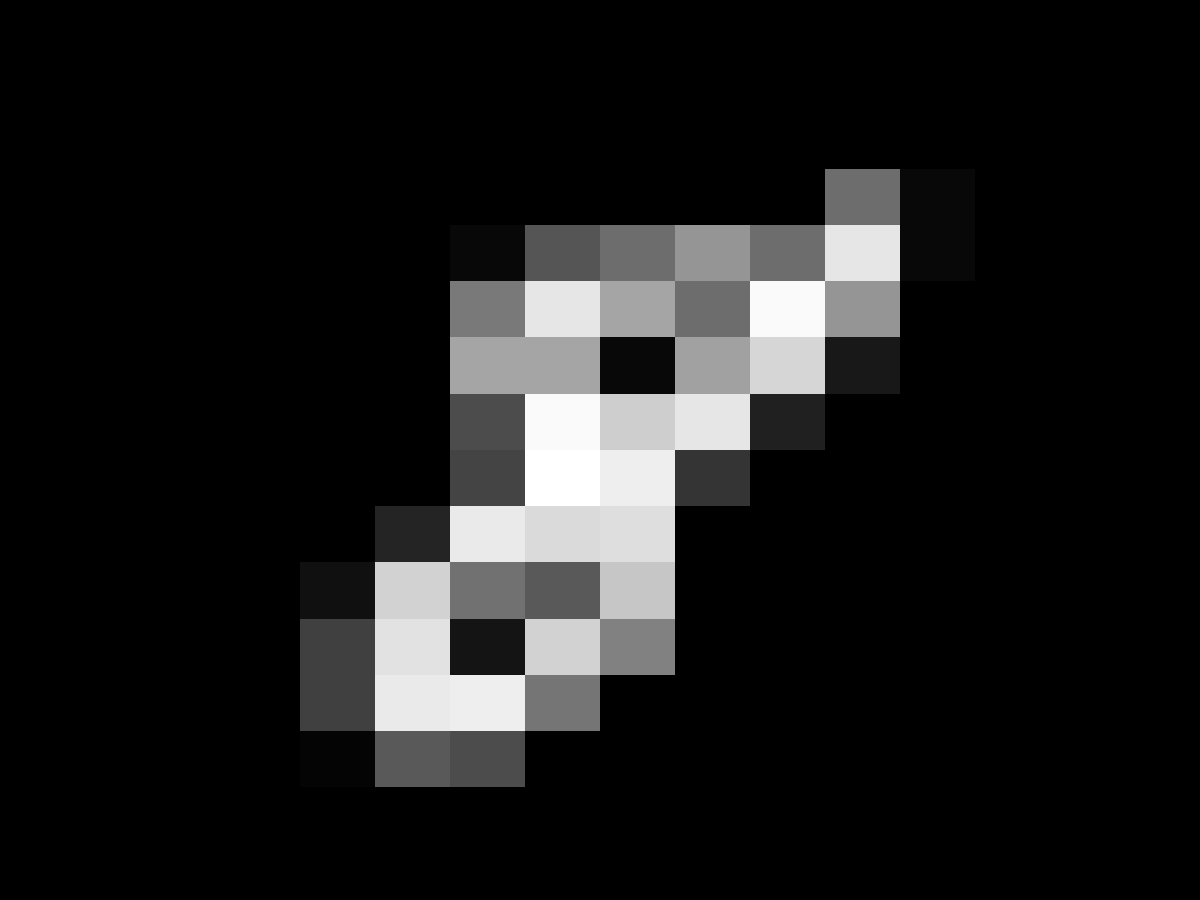}
\includegraphics[width=0.09\linewidth]
{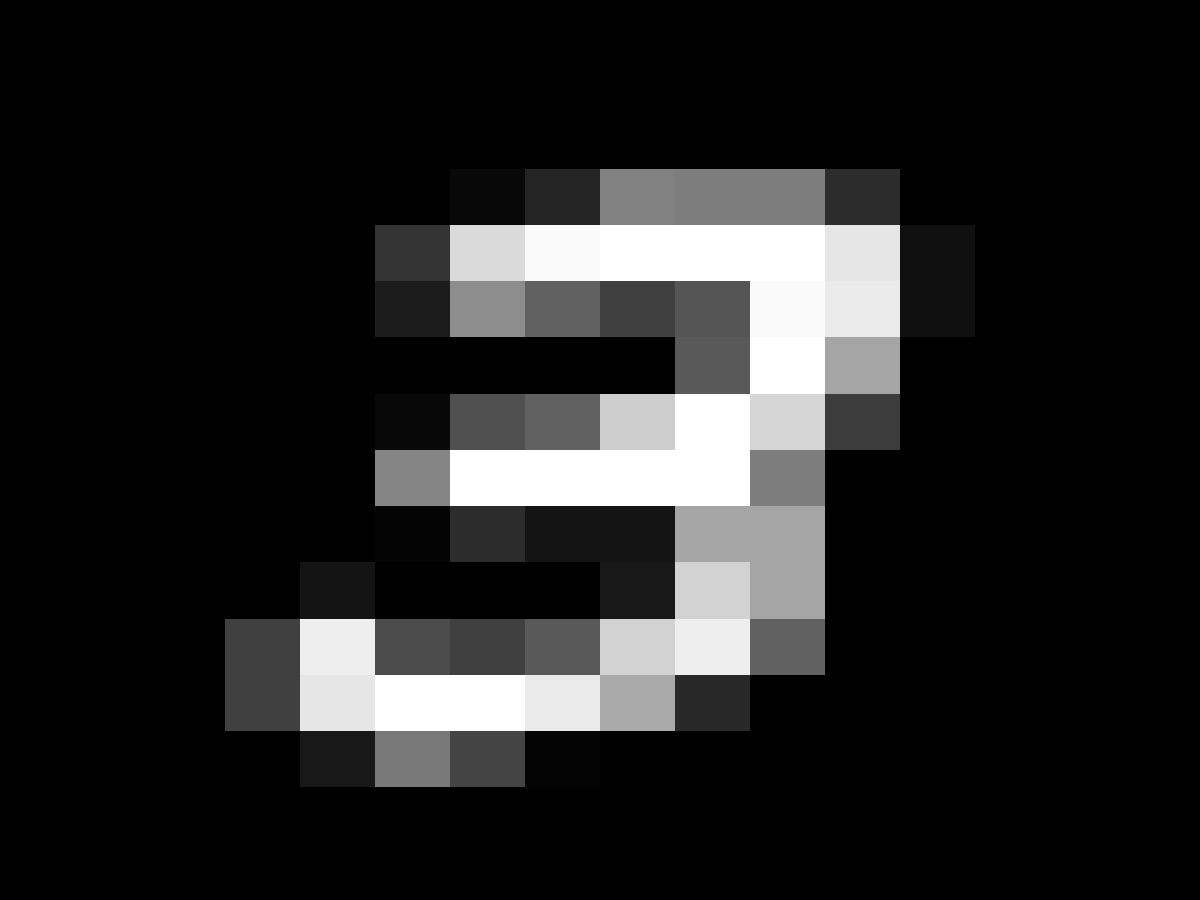}
\includegraphics[width=0.09\linewidth]
{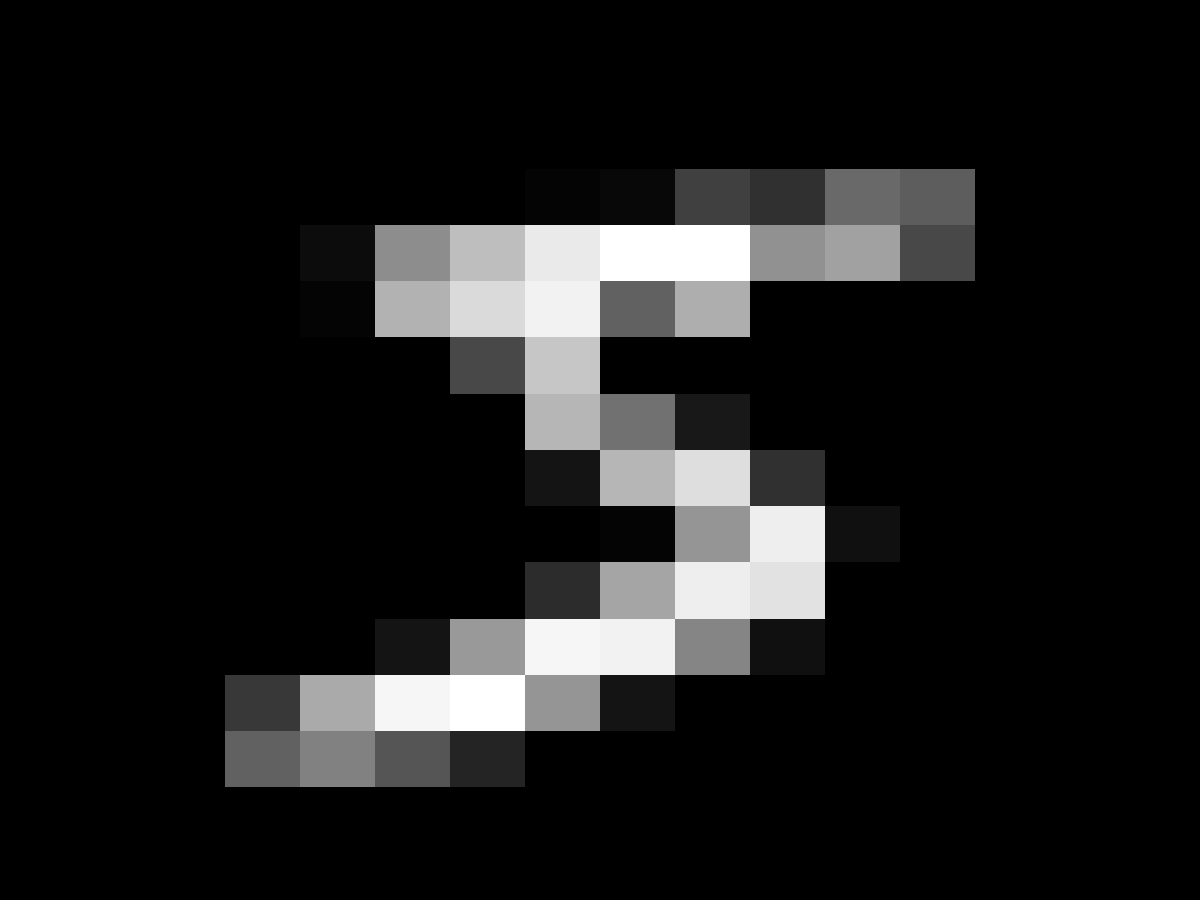}
\includegraphics[width=0.09\linewidth]
{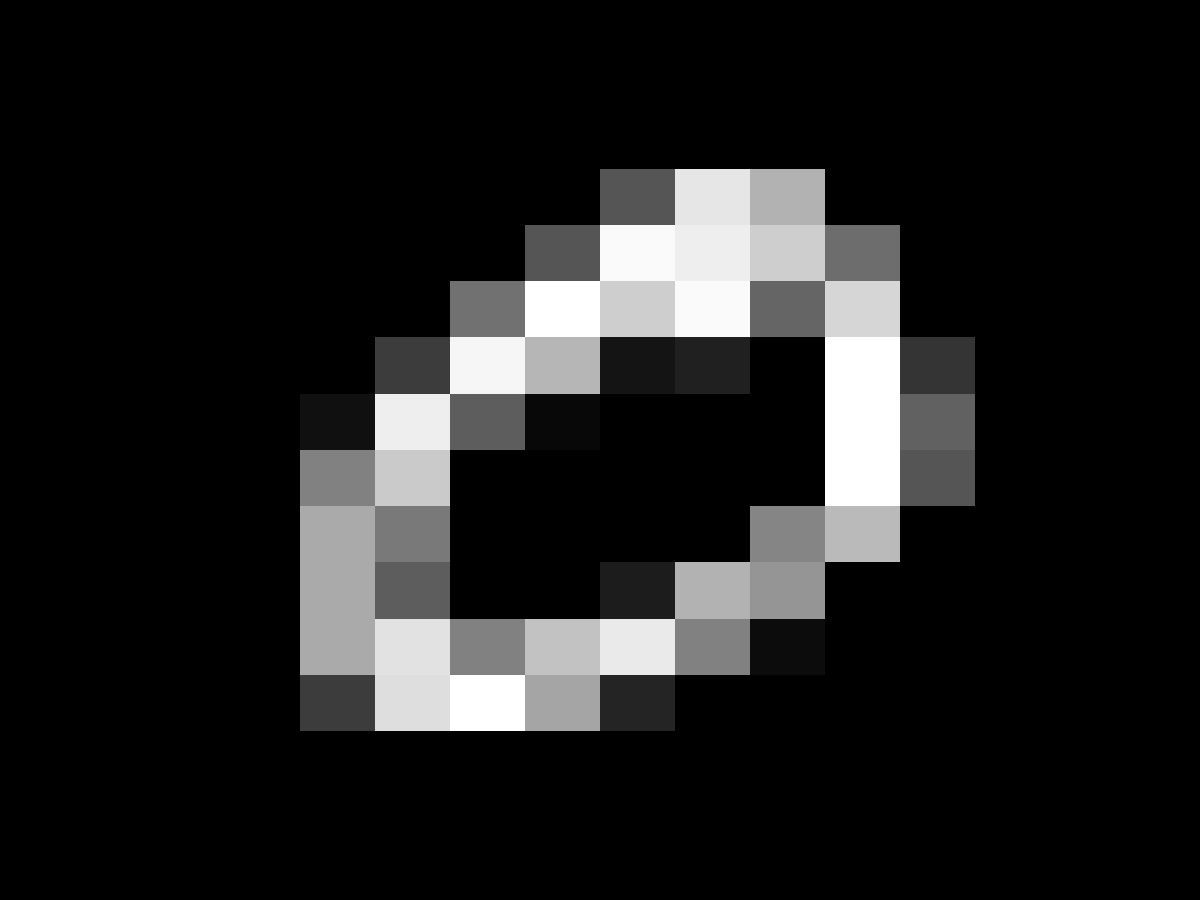}
\hspace{0.6cm}
\includegraphics[width=0.09\linewidth]
{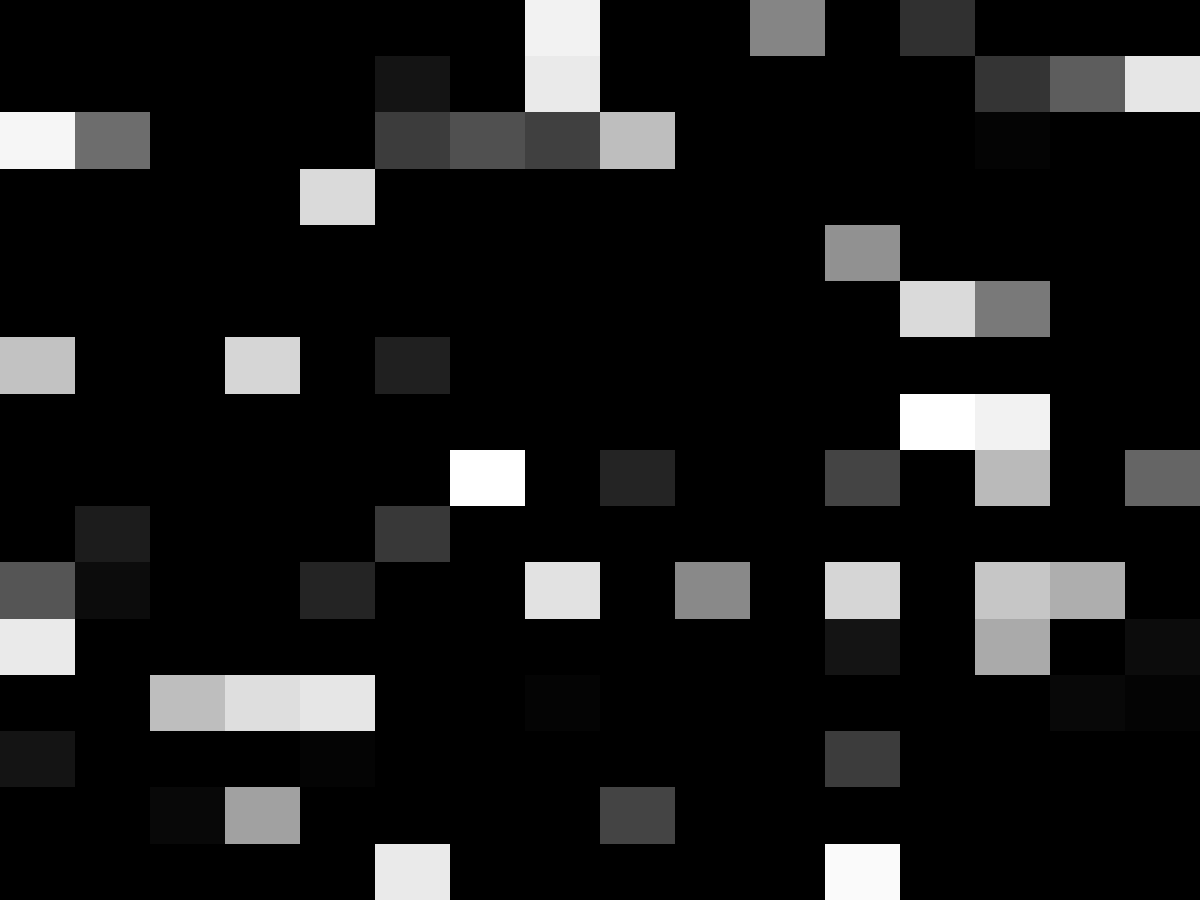}
\includegraphics[width=0.09\linewidth]
{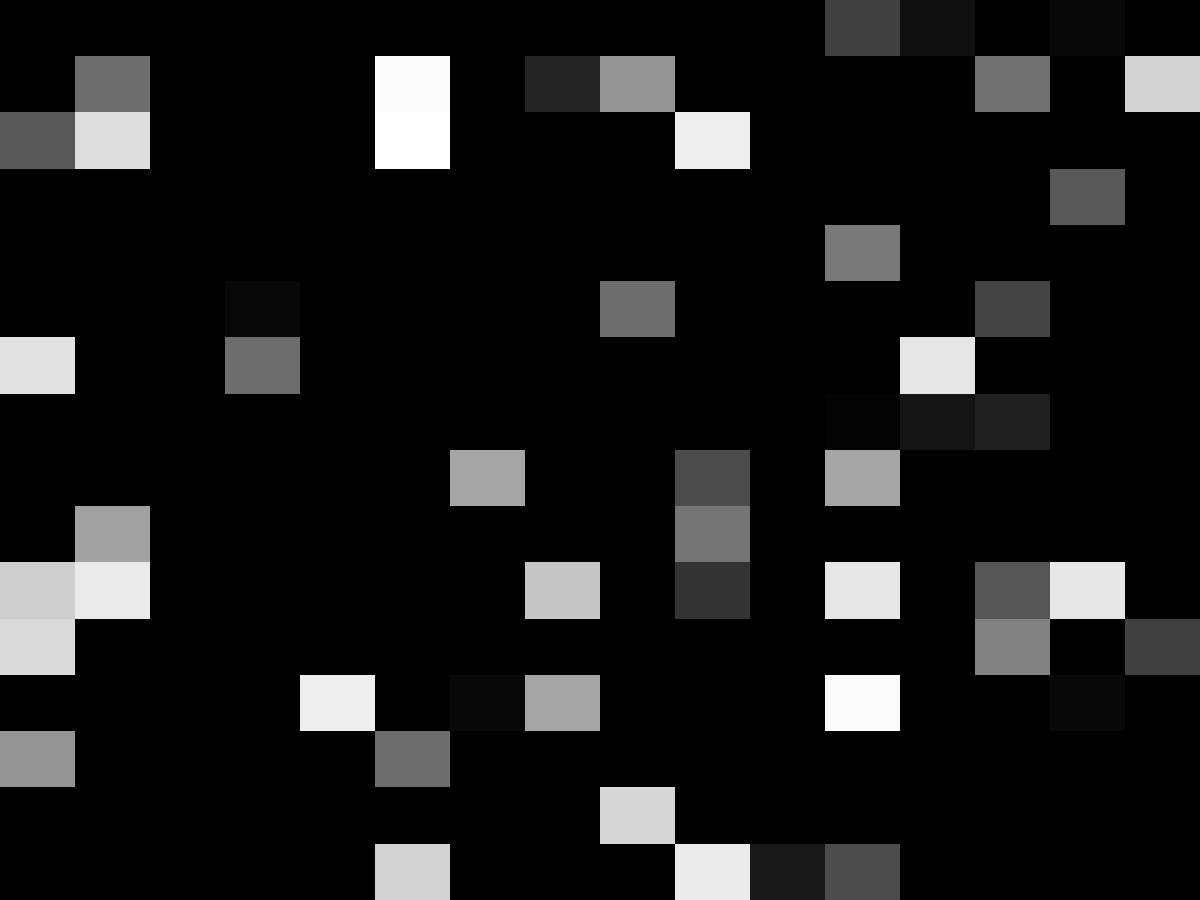}
\includegraphics[width=0.09\linewidth]
{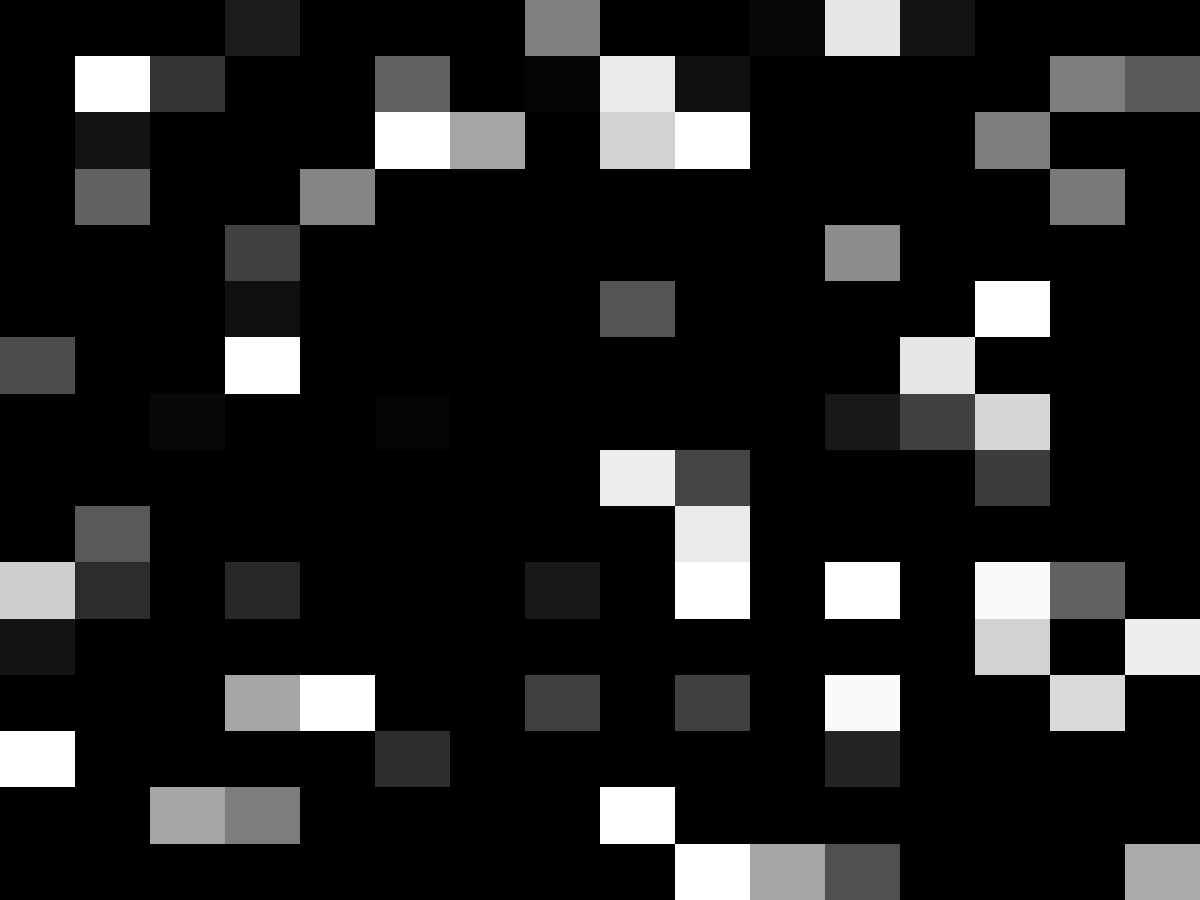}
\includegraphics[width=0.09\linewidth]
{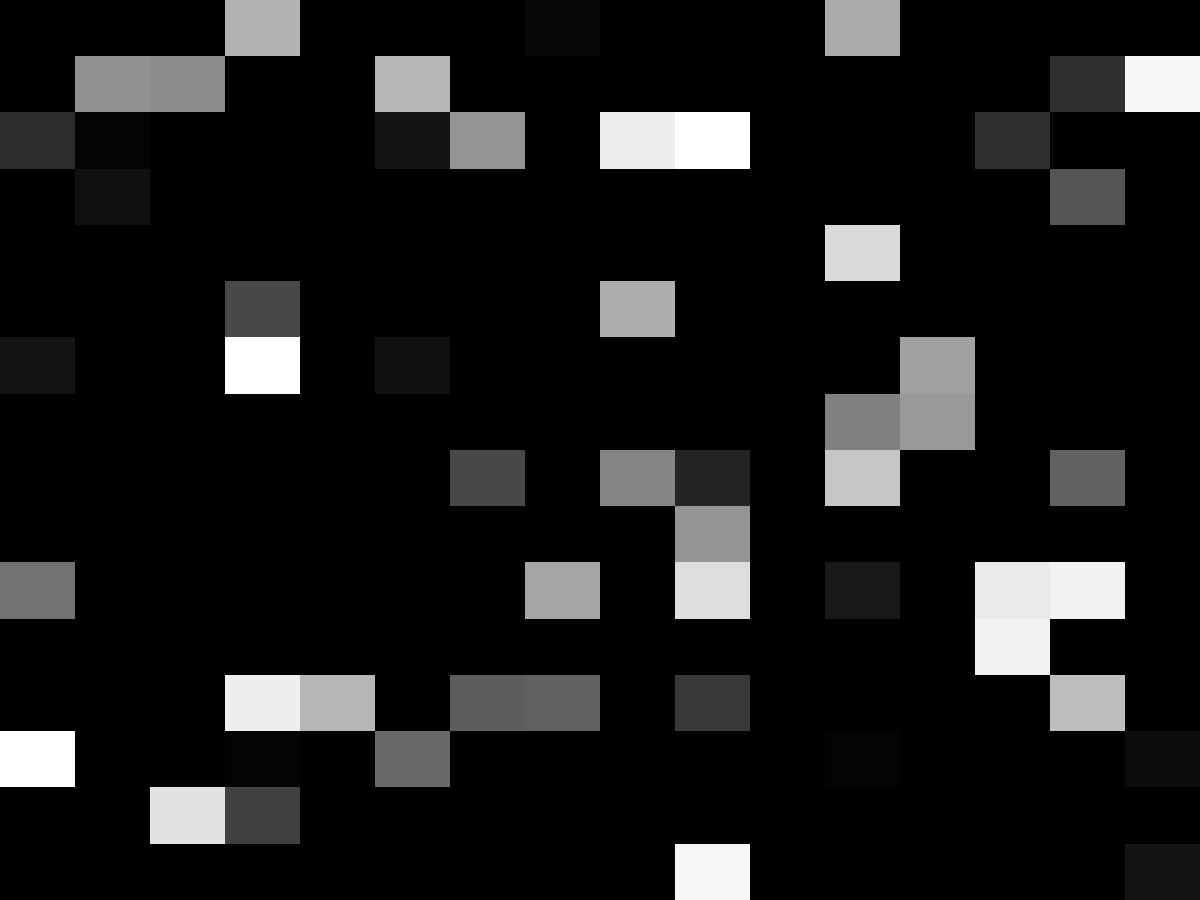}
\includegraphics[width=0.09\linewidth]
{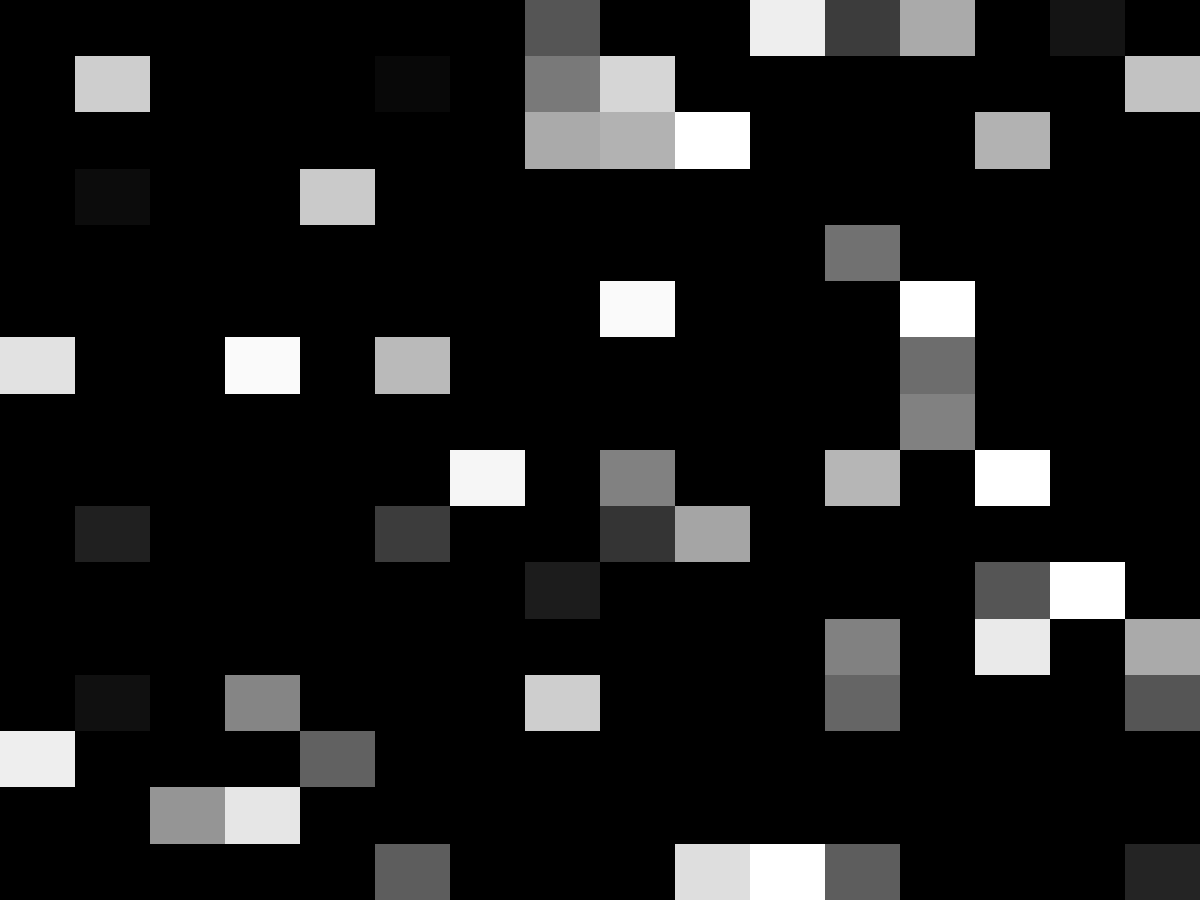}\vspace{-0.25cm}
\vspace{0.15cm}
\end{subfigure}
\caption{ 
\small MNIST images (left) and images after random pixel 
permutations (right).}
\label{fig:samplemnist}
%\vspace{-0.25cm}
\end{center}
\end{figure}

MNIST is a data basis with $6\times 10^4$ hand-written digit images of
size $d \leq 2^{10}$. There are $10$ classes (one per digit) with
$5\times 10^4$ images for
training and $10^4$ for testing. Examples of MNIST images 
are shown in Figure \ref{fig:samplemnist}. To test the classification
performances of a Haar scattering when the geometry is unknown, we scramble
all image pixels with the same unknown random permutations, as shown in 
Figure \ref{fig:samplemnist}.

\begin{figure}[h]
\begin{center}

\begin{subfigure}{0.8\linewidth}
\begin{center}
\begin{tabular}{c|c}
\hline
CNN (Supervised) \cite{LeCun} & 0.53 \\
Sparse Coding (Unsupervised) \cite{Labusch}& 0.59\\
Gabor Scattering \cite{Joan} & \textbf {0.43} \\
Structured Haar Scattering & 0.59\\
\hline
\end{tabular}
\caption{ 
\small Known Geometry}
\label{table:mnist1}
\end{center}
\end{subfigure}\\

%\hspace{1cm}
\begin{subfigure}{0.6\linewidth}
\begin{center}
\begin{tabular}{c|c}
\hline
Maxout MLP + dropout \cite{Goodfellow} & 0.94 \\
Deep convex net. \cite{Yu} & 0.83 \\
DBM + dropout \cite{Hinton} & \textbf{0.79}\\
Structured Haar Scattering & 0.90\\
\hline
\end{tabular}
\caption{ 
\small Unknown Geometry}
\label{table:mnist2}
\end{center}
\end{subfigure}

\caption{\small 
Percentage of errors for the classification of MNIST images, obtained by different algorithms.}
\label{table:mnist}
%\vspace{-0.25cm}
\end{center}

\end{figure}

When the image geometry is known, i.e. using non-scrambled images, 
the best MNIST classification results without data augmentation
are given in Table \ref{table:mnist1}. Deep convolution
networks with supervised learning reach an error of $0.53\%$ \cite{LeCun},
and unsupervised learning with sparse coding have a slightly larger
error of $0.59\%$  \cite{Labusch}. 
A wavelet scattering computed with iterated Gabor wavelet transforms
yields an error of $0.46\%$ \cite{Joan}.

For a known image grid geometry,
we compute a structured Haar scattering by pairing neighbor image
pixels. It builds hierachical square subsets $V_{j,n}$ 
illustrated in Figure \ref{fig:2}(c).
The invariance scale is $2^J = 2^{6}$, which corresponds to blocks of
$8 \times 8$ pixels. 
Random shift and rotations of these pairing define $T=64$ different
Haar scattering transforms. The supervised classifier of
Section \ref{supsec} applied to this structured Haar scattering 
yields an error of $0.59\%$. 

MNIST digit classification is a 
relatively simple problem where the main source of variability are due to
deformations of hand-written image digits. In this case, supervised
convolution networks, sparse coding, Gabor wavelet scattering and 
orthogonal Haar scattering have nearly 
the same classification performances. The fact that a Haar
scattering is only based on additions and subtractions does not affect
its efficiency.

\begin{figure}[h!]
\centering
\includegraphics[width= 0.5 \textwidth ]{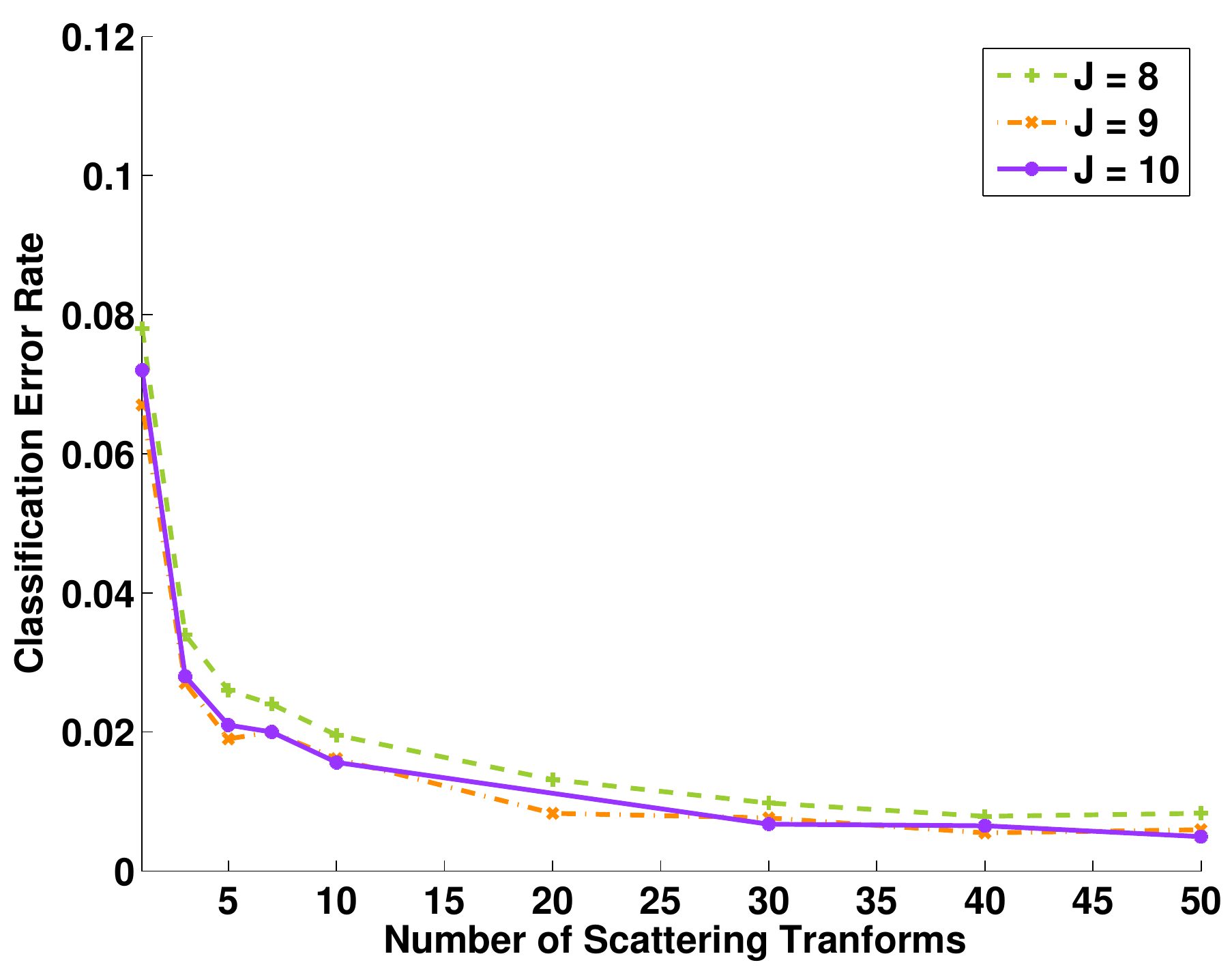}
\caption{ \small Unsupervised Haar scattering classification error for MNIST,
as a function of the number $T$ of scattering transforms, 
for networks of depth $J=8,9,10$.}
\label{fig:acc_ntree}
\end{figure}

For scrambled images, the connectivity of image pixels is unknown and needs
to be learned from data. Table \ref{table:mnist2} gives the classification
results of different learning algorithms. 
The smallest error of $0.79\%$ is
obtained with a Deep Belief  optimized with a supervised backpropagation.
Unsupervised learning of $T=50$ structured Haar scattering followed by a
feature selection and a supervised SVM classifier produces an error of $0.90\%$.
Figure \ref{fig:acc_ntree} gives
the classification error rate as a function of $T$,
for different values of maximum scale $J$. The error rates decrease slowly for $T>10$, and do not improve beyond $T=50$, which is much smaller than $2^J$.

The unsupervised learning computes connected dyadic partitions $V_{j,n}$ from scrambled images by optimizing an $\bf l^1$ norm. At scales $1 \leq 2^j \leq 2^3$, 
$100\%$ of these partitions 
are connected in the original image grid,
which proves that the geometry is well estimated at these scales. 
This is only evaluated on meaningful pixels which do not remain zero on all training images. 
For $j = 4$ and $j=5$ the percentages of connected partitions 
are respectively $85\%$ and $67\%$. The percentage of connected partitions 
decreases because long range correlations are weaker. 

A free orthogonal Haar scattering does not impose any condition on pairings.
It produces a minimum error of $1\%$ for $T=20$ Haar scattering transforms,
computed up to the depth $J=7$. This error rate is higher because the
supplement of freedom in the pairing choice increases the variance of the
estimation.

\subsection{CIFAR-10 images}\label{cifar10}

CIFAR-10 is a data basis of tiny color images of $32\times32$ pixels.
It includes $10$ classes, such as ``dogs'', ``cars'', ``ships''
with a total of $5 \times 10^4$ training examples and $10^4$ testing examples. 
There are much more intra-class variabilities than in MNIST digit images, 
as shown by Figure \ref{fig:samplecifar10}.
The $3$ color bands are represented with $Y,U,V$ channels, and scattering 
coefficients are computed independently in each channel. 

\begin{figure}[t]
\begin{center}
%\begin{subfigure}[b]{0.95\linewidth}
\includegraphics[width=0.1\linewidth, height=0.1\linewidth]
{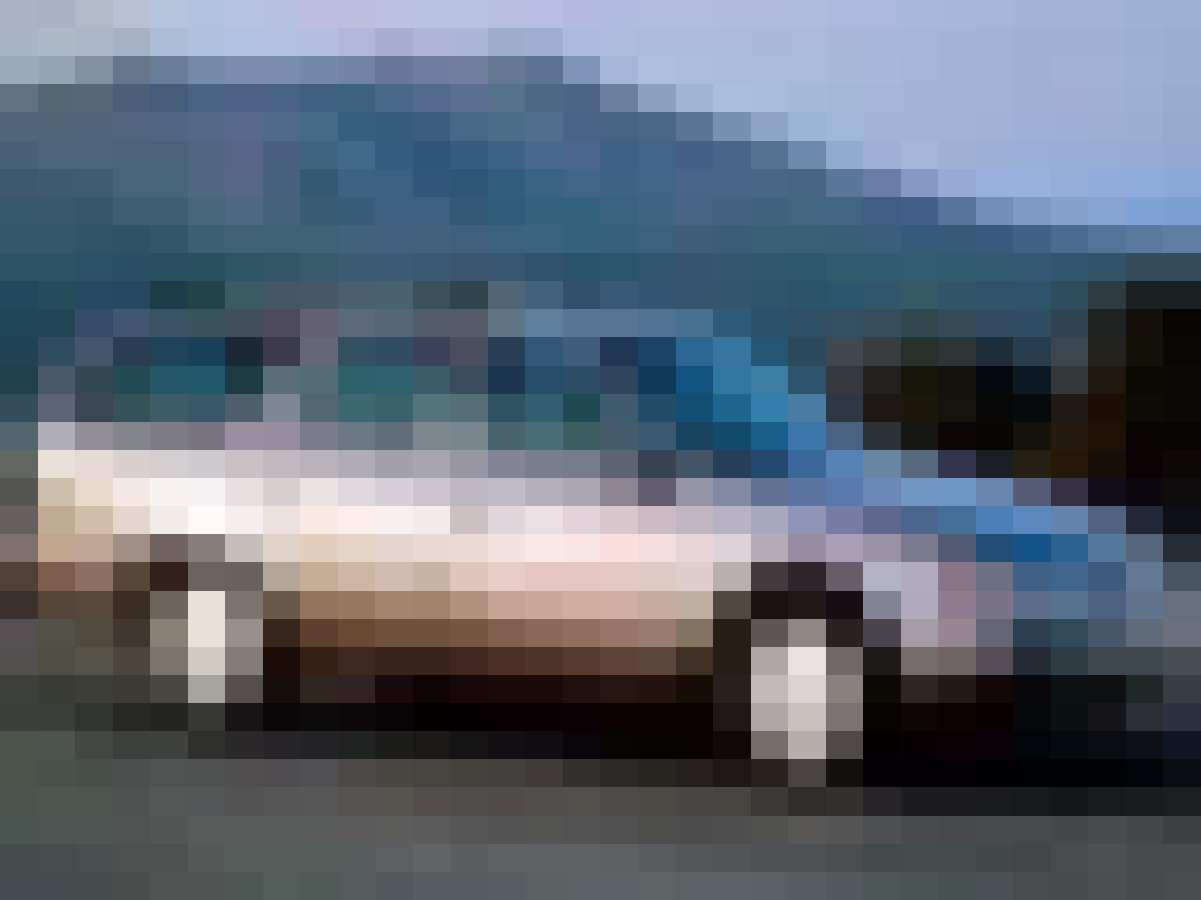}
\includegraphics[width=0.1\linewidth, height=0.1\linewidth]
{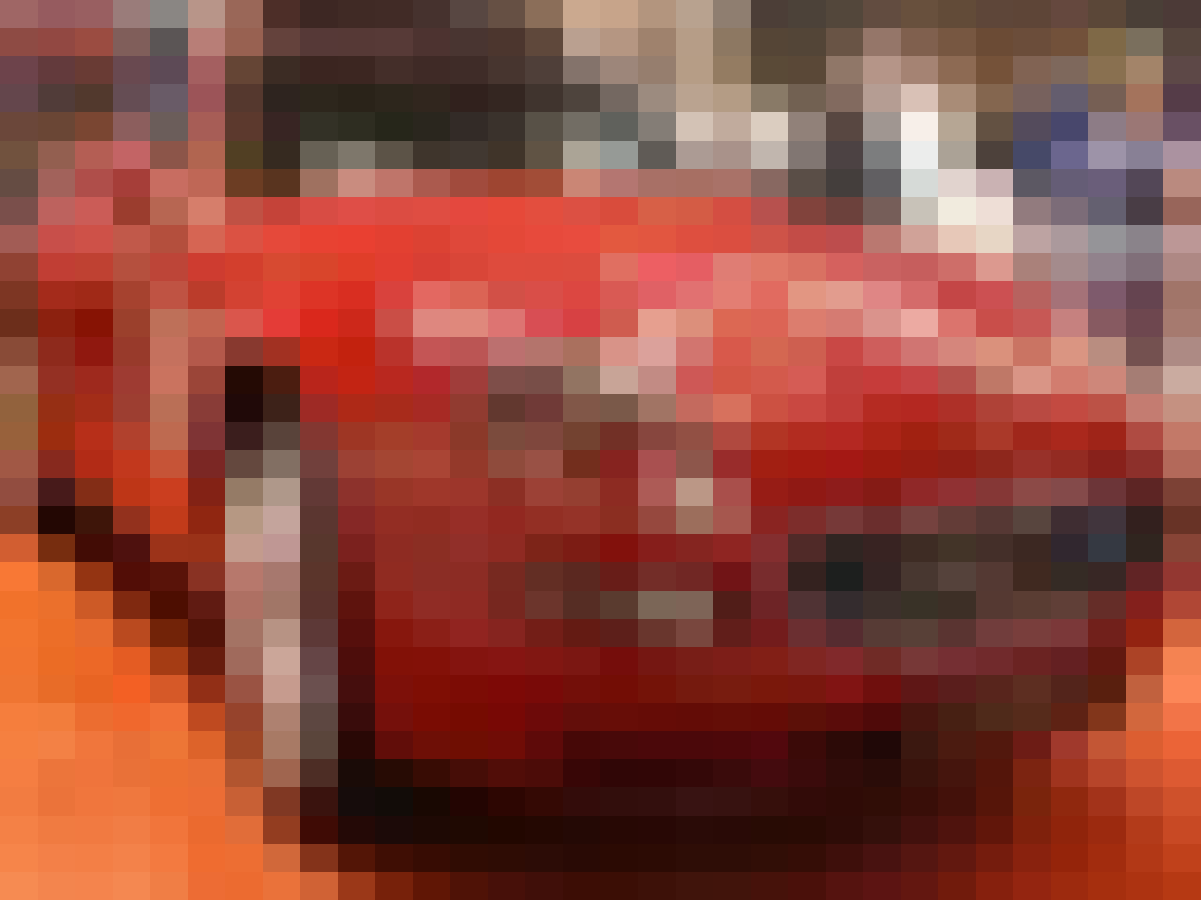}
\includegraphics[width=0.1\linewidth, height=0.1\linewidth]
{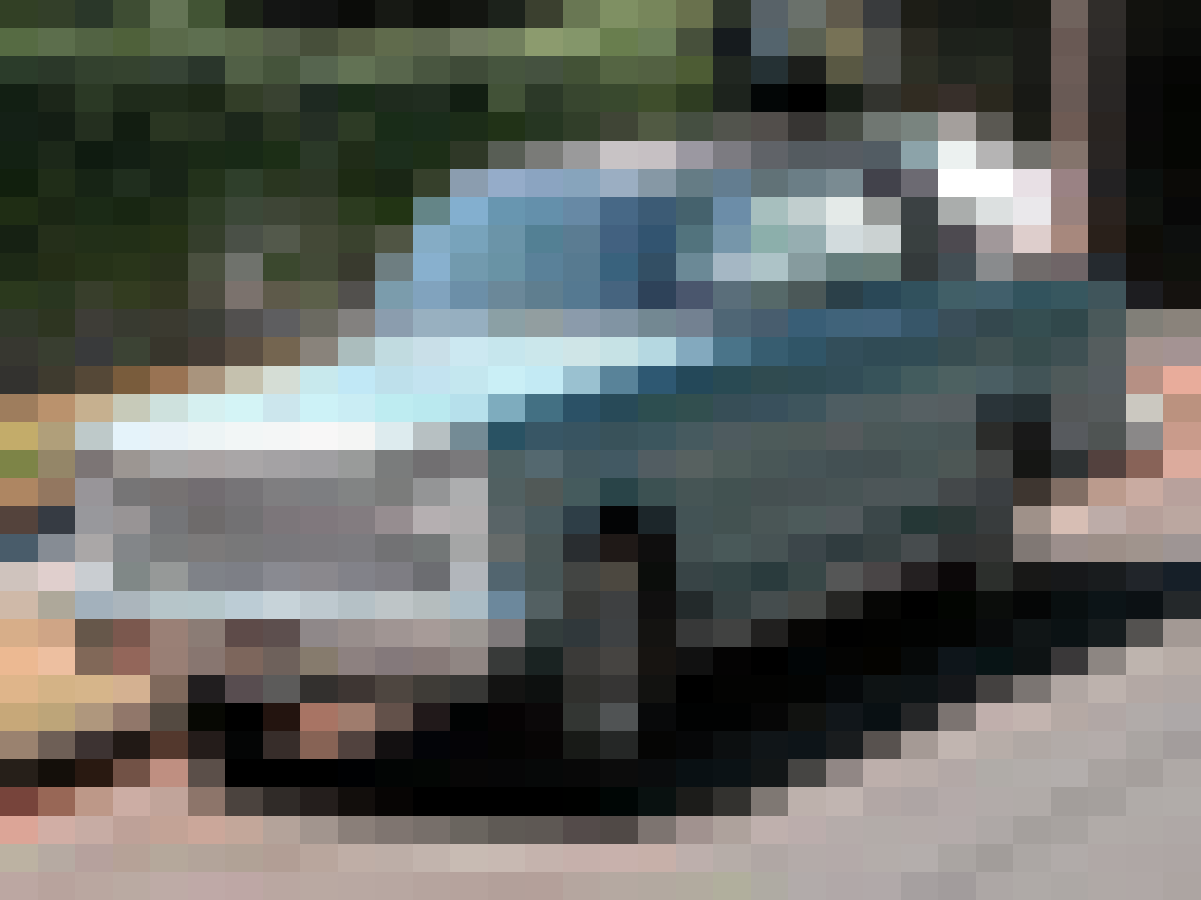}
\hspace{0.3cm}
\includegraphics[width=0.1\linewidth, height=0.1\linewidth]
{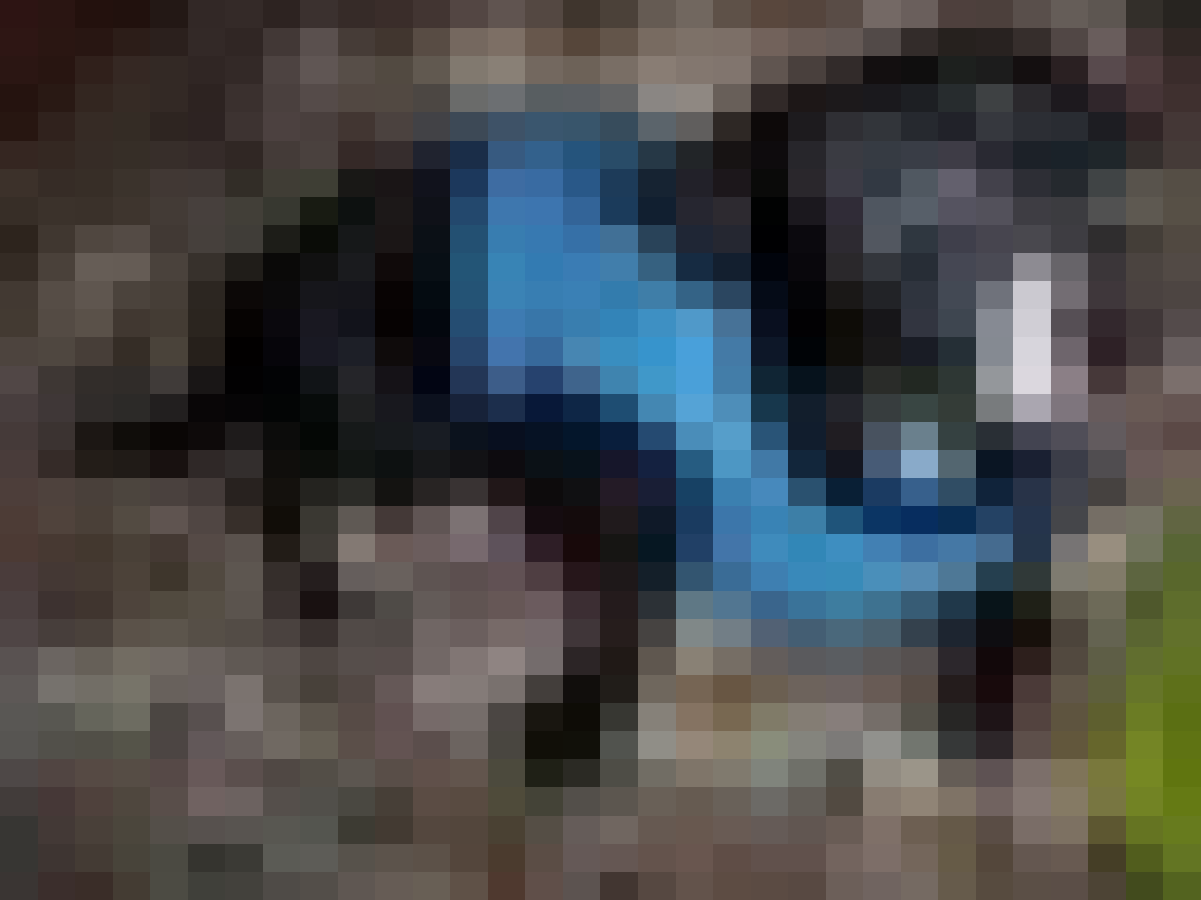}
\includegraphics[width=0.1\linewidth, height=0.1\linewidth]
{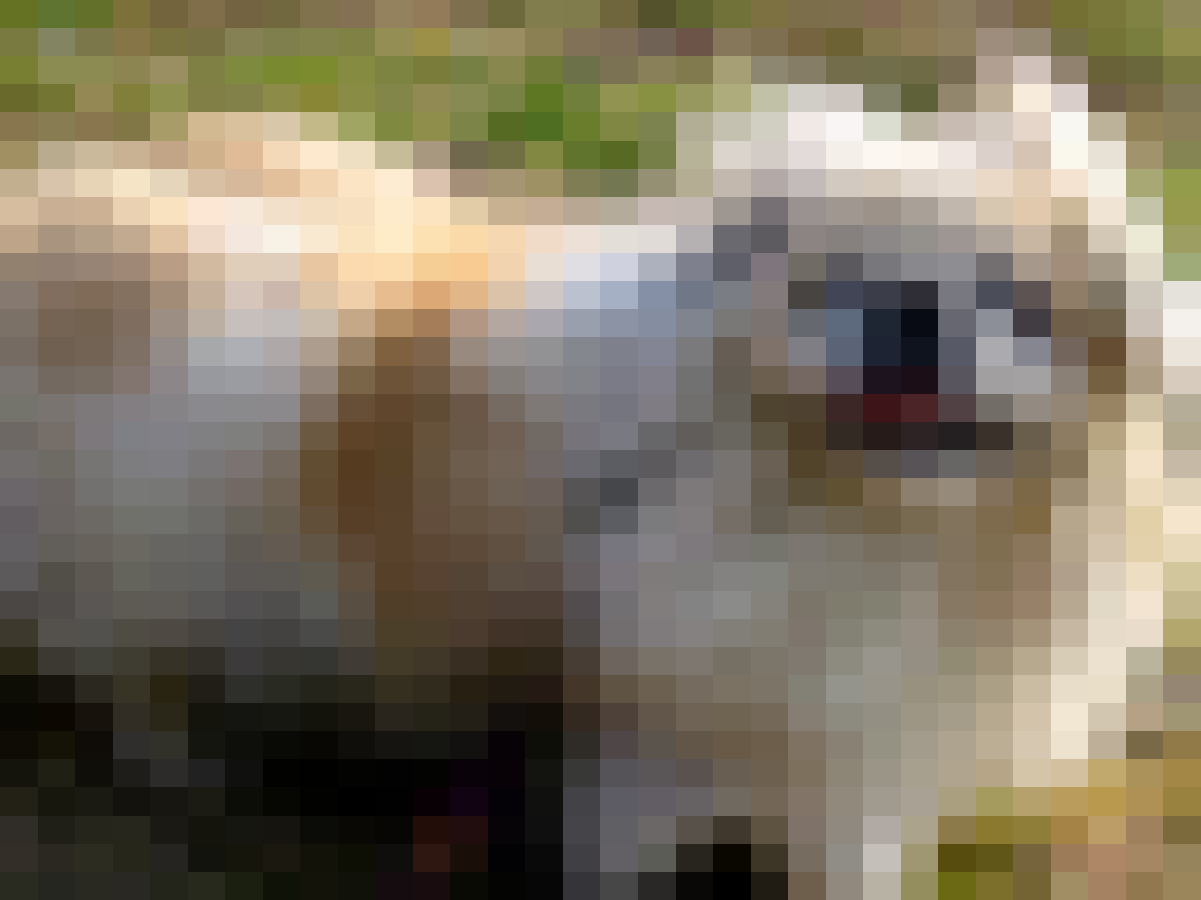}
\includegraphics[width=0.1\linewidth, height=0.1\linewidth]
{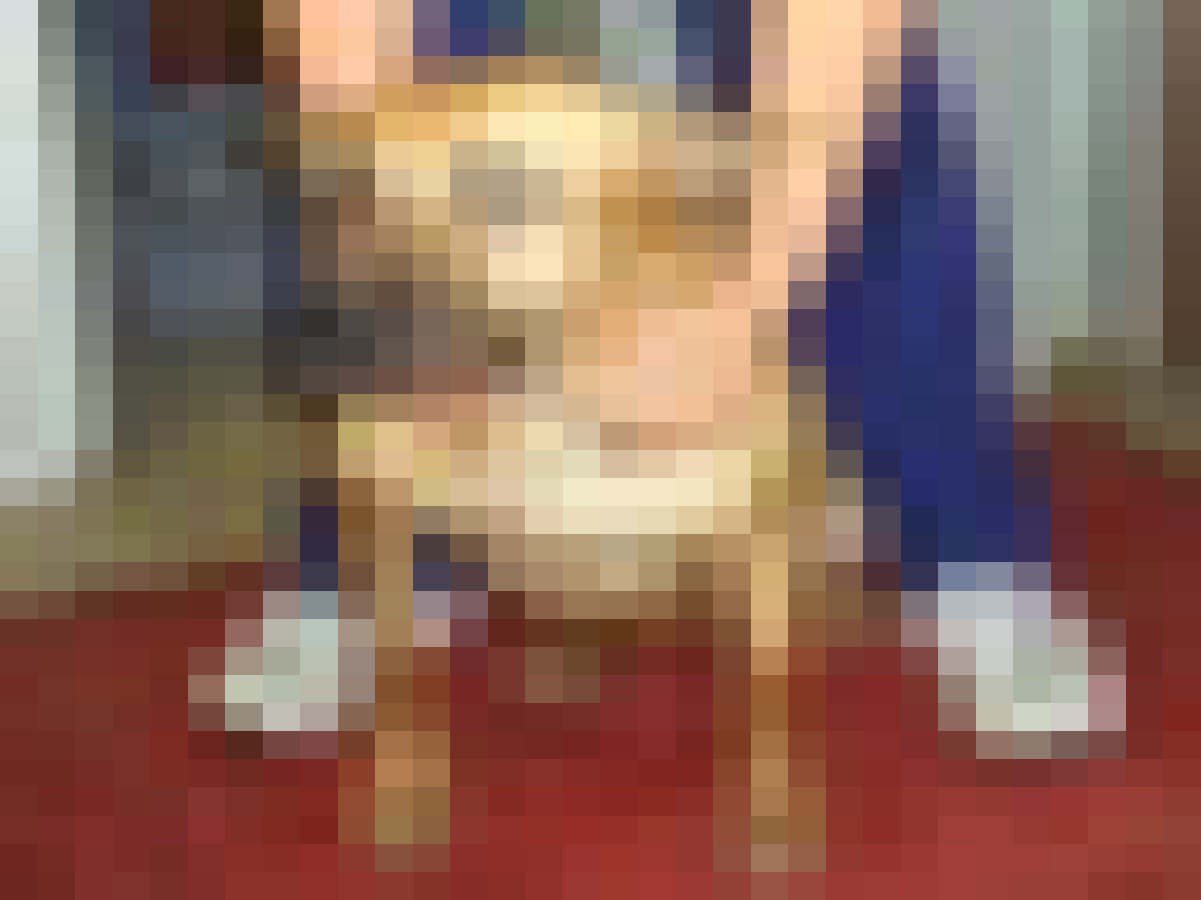}
\hspace{0.3cm}
\includegraphics[width=0.1\linewidth, height=0.1\linewidth]
{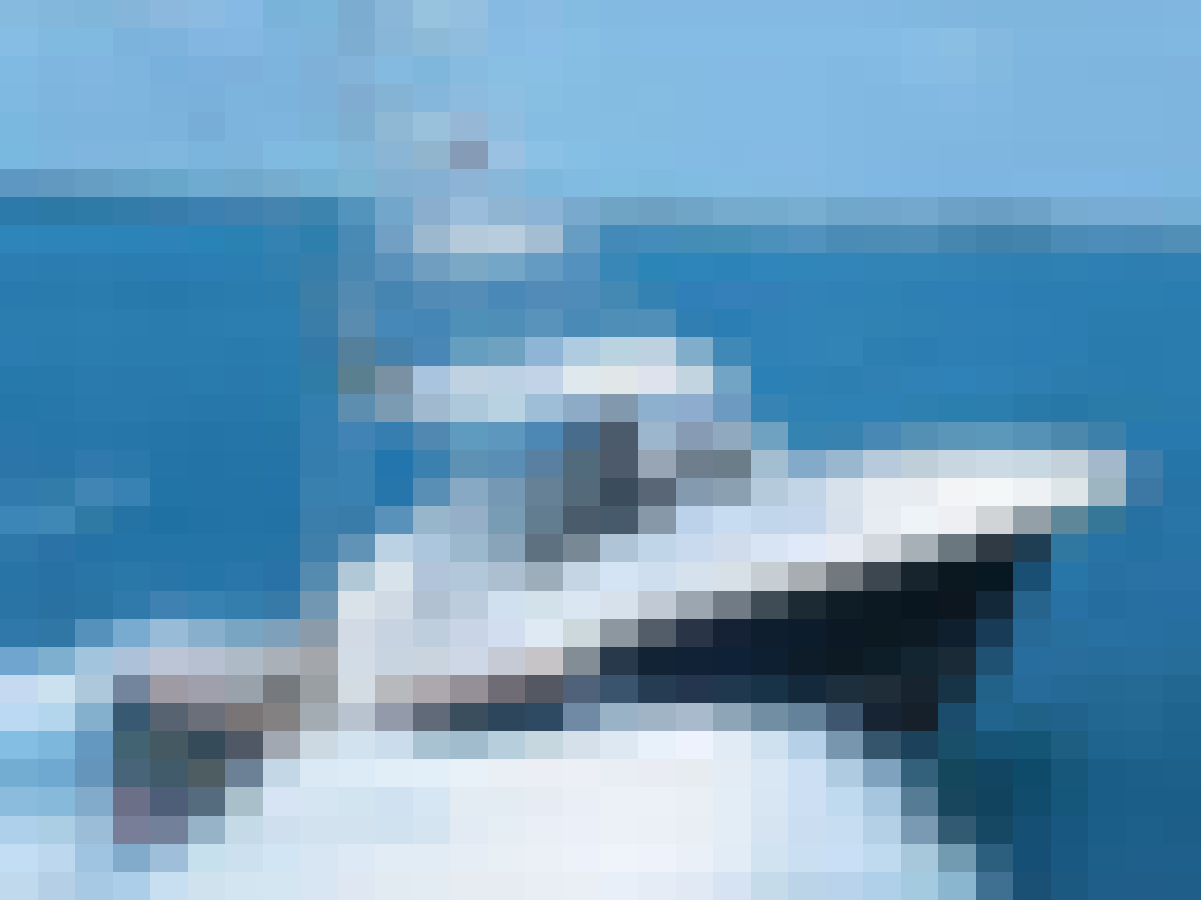}
\includegraphics[width=0.1\linewidth, height=0.1\linewidth]
{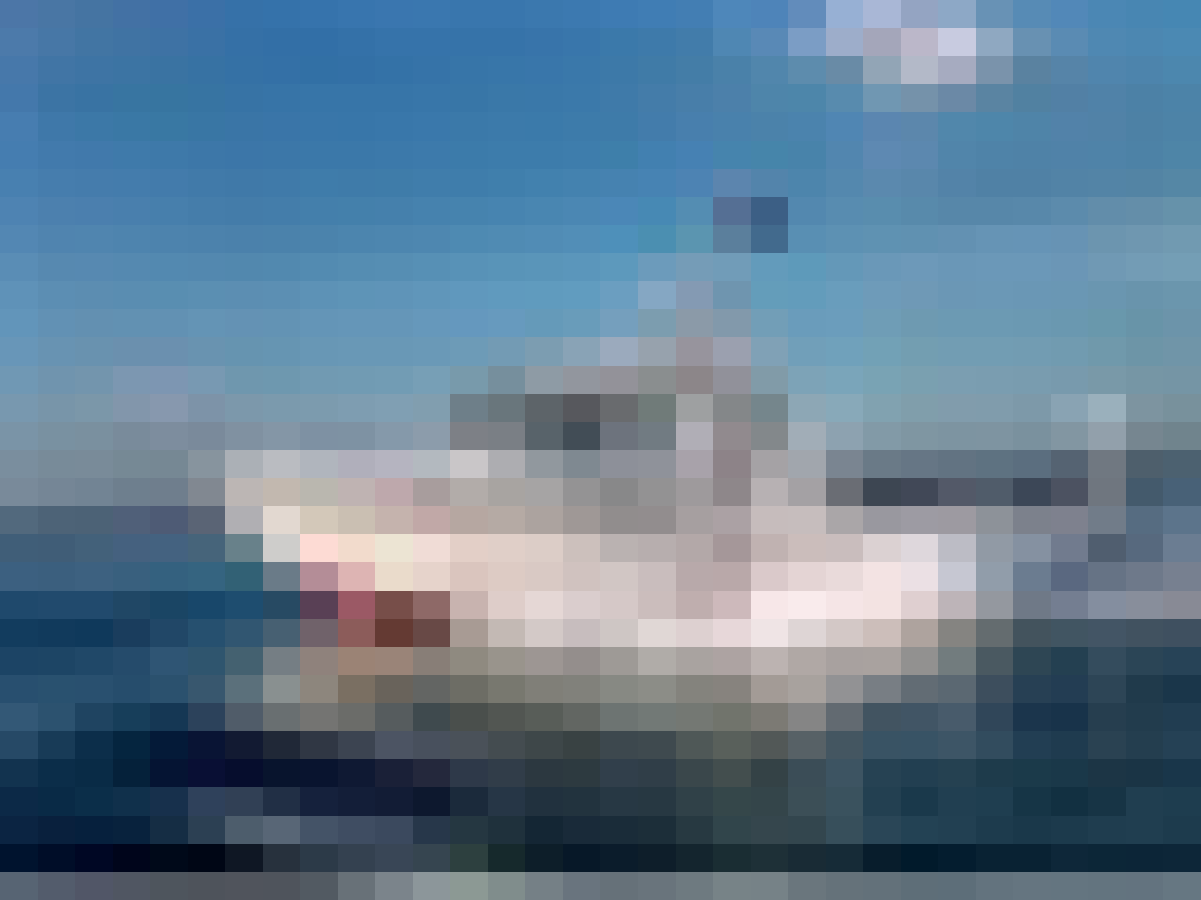}
\includegraphics[width=0.1\linewidth, height=0.1\linewidth]
{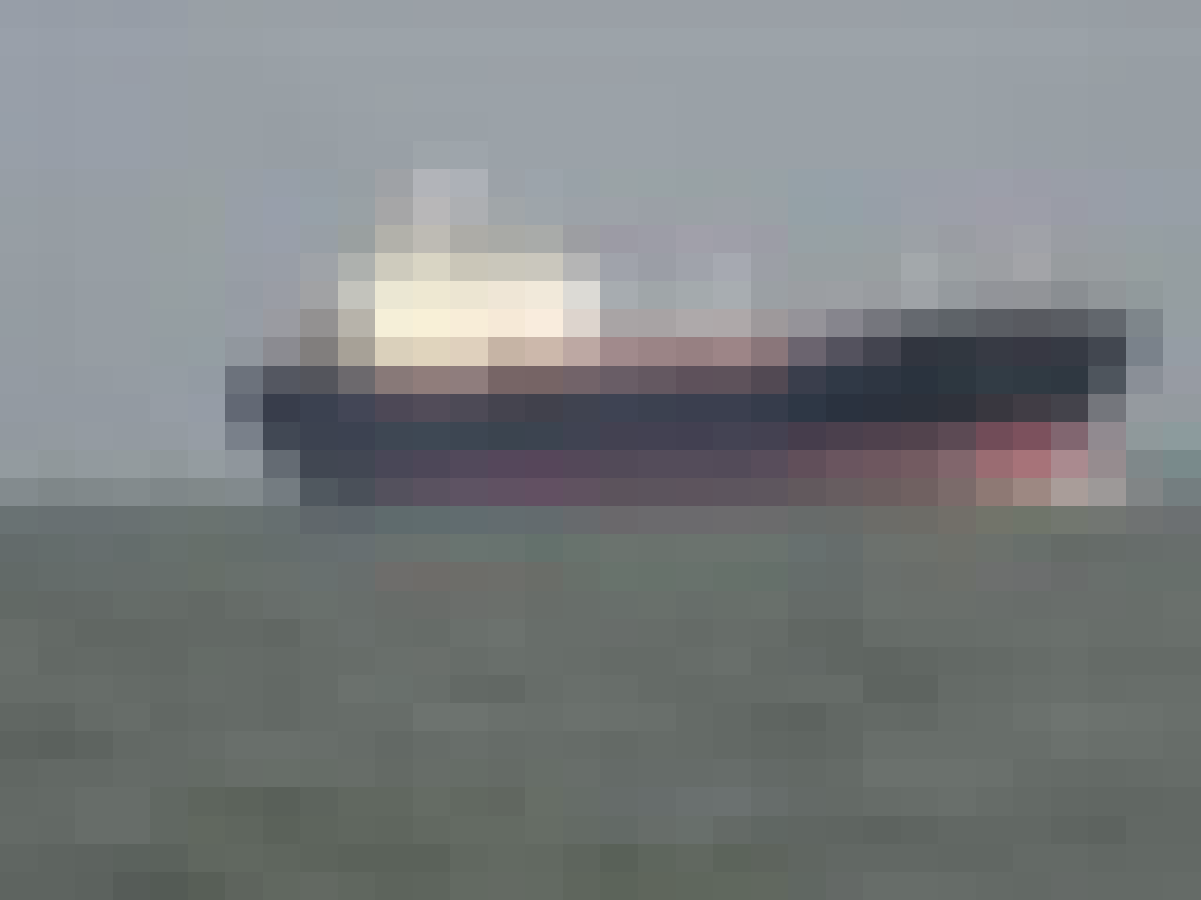}
%\vspace{-0.25cm}
\vspace{0.15cm}
%\end{subfigure}
\caption{ 
\small Examples of CIFAR-10 images in the classes of ``cars'', ``dogs'' and ``boats''.}
\label{fig:samplecifar10}
%\vspace{-0.25cm}
\end{center}
\end{figure}

When the image geometry is known,
a structured Haar scattering is computed by pairing neighbor image
pixels. The best performance is obtained at the scale $2^J = 2^6$ which is below the maximum scale $d = 2^{10}$. Similarly to MNIST, we compute $T = 64$  connected dyadic partitions for randomly translated and rotated grids. 
After dimension reduction, the classification error is $21.3\%$. 
This error is above state
of the art results 
of unsupervised learning algorithms by about $20\%$, but it involves no learning.
A minimum error rate of $16.9\%$ is
obtained by Receptive Field Learning \cite{Jia}.
The Haar scattering error is also above the $17.8\%$ error obtaiend by a 
roto-translation invariant wavelet scattering network \cite{Oyallon},
which computes wavelet transforms
along translation and rotation parameters. 
Supervised deep convolution
networks provide an important improvement over all unsupervised techniques
and reach an error of $9.8\%$. The study of these supervised networks is however beyound
the scope of this paper. 
Results are summarized in Table \ref{table:cifar1}. 

When the image grid geometry is unknown, because of random scrambling, 
Table \ref{table:cifar1} summarizes results with different algorithms.
For unsupervised learning with structured Haar scattering,
the minimum classification error is reached at the scale $2^J = 2^7$,
which maintains some localization information on scattering coefficients.
With $T = 10$ connected dyadic partitions, the error is $27.3\%$. 
Table \ref{table:cifar2} shows that 
it is $10\%$ below previously reported results on this data basis.

Nearly $100\%$ of the dyadic paritions $V_{j,n}$ computed from scrambled images are connected in the original image grid, for $1 \leq j \leq 4$, which
shows that the multiscale geometry is well estimated at these fine scales.  
For $j=5, 6$ and $7$, the proportions of connected partitions are $98\%$, $93\%$ and $83\%$ respectively. As for MNIST images, the connectivity
estimation becomes less precise at large scales. 
Similarly to  MNIST, a free Haar scattering yields a higher classification error of 29.2\%, with $T=20$ scattering transforms up to layer $J=6$.

\begin{figure}[h]
\begin{center}
\begin{subfigure}{0.6\linewidth}
\begin{center}
\begin{tabular}{c|c}
\hline
CNN (Supervised state-of-the-art) \cite{Lee} & \textbf{9.8} \\
RFL (Unsupervised state-of-the-art) \cite{Jia}& 16.9\\
Roto-Translation Scattering \cite{Oyallon} & 17.8 \\
Structured Haar Scattering & 21.3\\
\hline
\end{tabular}
\vspace{0.15cm}
\caption{ 
\small Known Geometry}
\label{table:cifar1}
\end{center}
\end{subfigure}
\hspace{2.0cm}
\begin{subfigure}{0.6\linewidth}
\begin{center}
\begin{tabular}{c|c}
\hline
Fastfood \cite{Le} & 37.6 \\
Fastfood FFT \cite{Le} & 36.9 \\
Random Kitchen Sinks \cite{Le} & 37.6 \\
Structured Haar Scattering & \textbf{27.3}\\
\hline
\end{tabular}
\caption{ 
\small Unknown Geometry}
\label{table:cifar2}
\vspace{0.15cm}
\end{center}
\end{subfigure}
\caption{\small 
Percentage of errors for the classification of CIFAR-10 images, obtained by different algorithms.}
\label{table:cifar}
%\vspace{-0.25cm}
\end{center}
\end{figure}

\subsection{CIFAR-100 images}\label{cifar100}

CIFAR-100 also contains tiny color images of the same size as CIFAR-10 images. It has 100 classes containing 600 images each, of which 500 are training images and 100 are for testing. Our tests on CIFAR-100 follows the same procedures as in Section \ref{cifar10}. The 3 color channels are processed independently. 

When the image grid geometry is known, the results of a structured Haar scattering are summarized in Table \ref{table:cifar100}. The best performance is obtained with the same parameter combination as in CIFAR-10, which is $T=64$ and $2^J = 2^6$. After dimension reduction, the classification error is 47.4\%. As in CIFAR-10, this error is about $20 \%$ 
larger than 
state of the art unsupervised methods, such as a Nonnegative OMP ($39.2\%$)\cite{Lin}.
A roto-translation wavelet scattering has an error of $43.7\%$. 
Deep convolution networks with
supervised training produce again a lower error of $34.6\%$.

For scrambled images of unknown geometry, with $T=10$ 
transforms and a depth $J = 7$, a structured Haar scattering has an error of 
$52.7\%$. 
A free Haar orthogonal scattering has a higher classification error of $56.1\%$, with $T=10$ scattering transforms up to layer $J=6$. No such result is
reported with another algorithm on this data basis.

On all tested image databases, structured Haar scattering has a consistent 7\%-10\% performance advantage over `free' Haar scattering, as shown by 
Table \ref{table:l1l2}.
For orthogonal Haar scattering, all reported errors were calculated
with an unsupervised learning which minimizes the $\bf l^1$ norm 
(\ref{l1energy}) of scattering coefficients, layer per play.
As expected, Table \ref{table:l1l2} shows that minimizing
a mixed $\bf l^1$ and $\bf l^2$
norm  (\ref{l1energy0}) yields nearly the same results on all data bases.  

\begin{figure}[h]
\begin{center}
\begin{tabular}{c|c}
\hline
CNN (Supervised state-of-the-art) \cite{Lee} & \textbf{34.6} \\
NOMP (Unsupervised state-of-the-art) \cite{Lin}& 39.2\\
Gabor Scattering \cite{Oyallon} & 43.7 \\
Structured Haar Scattering & 47.4\\
\hline
\end{tabular}
\vspace{0.15cm}
\caption{ 
\small Percentage of errors for the classification of CIFAR-100 images with known geometry, obtained by different algorithms.}
\label{table:cifar100}
\end{center}
\end{figure}

\begin{table}[h]
\begin{center}
\begin{tabular}{c|c|c|c|c}
\hline 
& Structured, $\bf l^1$ & Structured, $\bf l^1$/$\bf l^2$ &Free, $\bf l^1$  & Free, $\bf l^2$/$\bf l^1$\\
\hline
MNIST& \textbf{0.91} & 0.95 & 1.09   & 1.02 \\
\hline
CIFAR-10& 28.8  & \textbf{27.3}  & 29.2   & 29.3 \\
\hline
CIFAR-100&  \textbf{52.5} & 53.1  & 56.3 & 56.1 \\
\hline
\end{tabular}
\end{center}\vspace{-0.25cm}
\caption{\small Percentage of errors for the classification of MNIST, CIFAR-10 and CIFAR-100 images with a structured or a free Haar scattering, for unsupervised
computed by minimizing a mixed $\bf l^1$/$\bf l^2$ norm or an $\bf l^1$ norm.}
\label{table:l1l2}
\end{table}

\subsection{Images on a graph over a sphere}
\label{3D}

A data basis of irregularly sampled images on a sphere 
is provided in \cite{Joan2}. 
It is constructed by projecting the MNIST image digits on 
$d=4096$ points randomly sampled on the 3D sphere, and by randomly rotating these
images on the sphere. The random rotation is either uniformly distributed on the sphere 
or restricted with a smaller variance (small rotations) \cite{Joan2}.
The digit `9' is removed from the data set because it can not be
distinguished from a `6' after rotation. Examples sphere digits are shown in Figure \ref{fig:spheremnist}. This geometry of points on the sphere can be described
by a graph which connects points having a sufficiently small distance on the 
sphere.

\begin{figure}[t]
\begin{center}
\includegraphics[width=0.24\linewidth]
{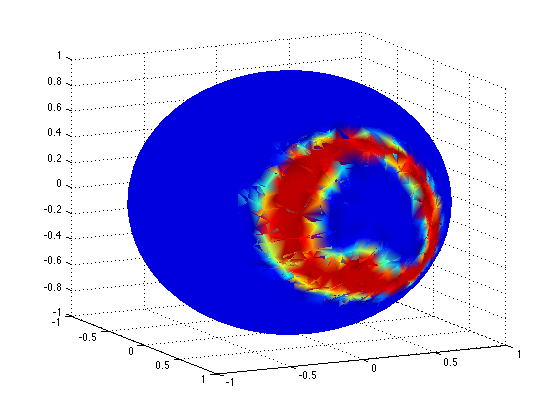}
\includegraphics[width=0.24\linewidth]
{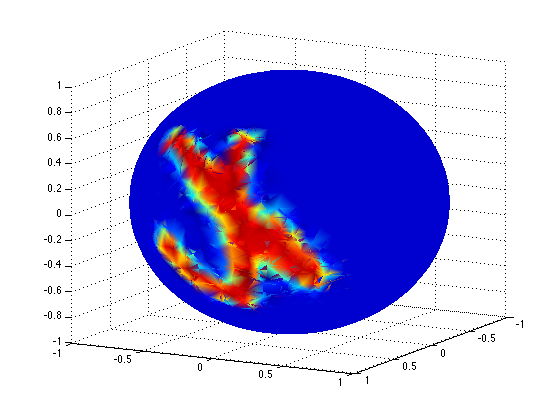}
\includegraphics[width=0.24\linewidth]
{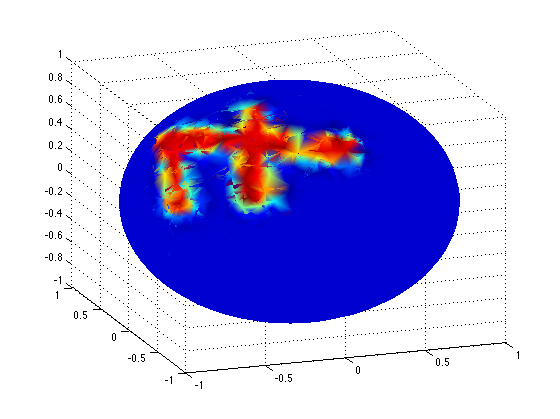}
\includegraphics[width=0.24\linewidth]
{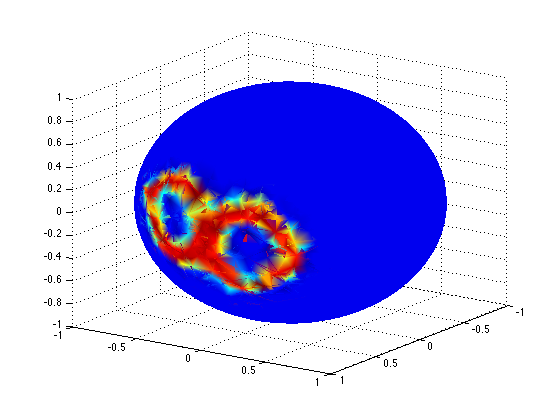}
%\vspace{-0.25cm}
\caption{\small Images of digits mapped on a sphere.}
\label{fig:spheremnist}
\end{center}
%\vspace{-0.15cm}
\end{figure}

The classification algorithms introduced in \cite{Joan2} 
take advantage of the known distribution of points on the sphere,
with a representation based on the graph Laplacian. 
Table \ref{table:smnist} gives the results reported in \cite{Joan2},
with a fully connected neural network, and with a spectral graph Laplacian network.

\begin{table}[h]
%\begin{center}
\resizebox{\textwidth}{!}{
%\begin{minipage}{0.9\textwidth}
\begin{tabular}{c|c|c|c|c|c}
%{p{0.1\linewidth}|p{0.1\linewidth}|p{0.1\linewidth}|p{0.1\linewidth}|p{0.1\linewidth}|p{0.1\linewidth}}
\hline 
&Nearest& Fully  & Spectral  & Structured Haar & Free Haar\\
&Neighbors & Connect.  & Net.\cite{Joan2} & Scattering & Scattering\\
\hline
Small rotations& 19 & 5.6 & 6 & 2.2 & \textbf{1.6} \\
\hline
Large rotations& 80 & 52 & 50 & \textbf{47.7} & 55.8 \\
\hline
\end{tabular}}
%\end{center}\vspace{-0.25cm}
\caption{\small Percentage of errors for the classification of MNIST images
rotated and sampled on a sphere
\cite{Joan2}, with a nearest neighbor classifier, a
fully connected two layer neural network, 
a spectral network \cite{Joan2}, and 
an unsupervised Haar scattering.}
\label{table:smnist}
%\vspace{-0.25cm}
%\end{minipage}}
\end{table}

As opposed to these algorithms, the unsupervised
structured Haar scattering algorithm 
does not use this geometric information and learns the graph information
by pairing. Computations are
performed on a scrambled set of signal values. 
Haar scattering transforms are calculated up to the maximum scale 
$2^J = d = 2^{12}$.  
A total of $T = 10$  connected dyadic partitions are estimated by unsupervised learning, and the classification is performed from $M = 10^3$ selected
coefficients. Although the graph geometry is unknown, the structured Haar scattering reduces the error rate both for small and large 3D random rotations. 

In this case a free orthogonal Haar scattering 
has a smaller error rate than a structured Haar scattering 
for small rotations, but a larger error for large rotations. It illustrates the tradeoff between the structural bias and the feature variance in the choice of the algorithms. For small rotation, the variability within classes is smaller and a free scattering can take advantage of more
degrees of freedom. For large rotations, the variance is too large
and dominates the problem. 

Two points of the sphere of radius $1$ are considered to be
connected if their geodesic distance is smaller than $0.1$. With this convention,
over the 4096 points,
each point has on average $8$ connected neighbors. 
The unsupervised Haar learning performs a hierachical
pairing of points on the sphere. For small and large rotations, the percentage of 
connected sets $V_{j,n}$ remains above $90\%$
for $1 \leq j \leq 4$. This is computed over $70\%$ of the points
points having a nonneglegible energy.
It shows that the multiscale geometry on the sphere is well estimated by hierachical
pairings.

\section*{Acknowledgment}

This work was supported by the ERC grant InvariantClass 320959.

% can use a bibliography generated by BibTeX as a .bbl file
% BibTeX documentation can be easily obtained at:
% http://www.ctan.org/tex-archive/biblio/bibtex/contrib/doc/

\bibliographystyle{plain}

%\bibliography{haar}

\begin{thebibliography}{10}

\bibitem{anselmi2013unsupervised}
F.~Anselmi, J.~Z. Leibo, L.~Rosasco, J.~Mutch, A.~Tacchetti, and T.~Poggio.
\newblock Unsupervised learning of invariant representations in hierarchical
  architectures.
\newblock {\em arXiv preprint arXiv:1311.4158}, 2013.

\bibitem{bengio2013representation}
Y.~Bengio, A.~Courville, and P.~Vincent.
\newblock Representation learning: A review and new perspectives.
\newblock {\em Pattern Analysis and Machine Intelligence, IEEE Transactions
  on}, 35(8):1798--1828, 2013.

\bibitem{Joan}
J.~Bruna and S.~Mallat.
\newblock Invariant scattering convolution networks.
\newblock {\em IEEE Trans. PAMI}, 35(8):1872--1886, 2013.

\bibitem{Joan2}
J.~Bruna, W.~Zaremba, A.~Szlam, and Y.~LeCun.
\newblock Spectral networks and deep locally connected networks on graphs.
\newblock {\em ICLR}, 2014.

\bibitem{chen1991orthogonal}
S.~Chen, C.~F. Cowan, and P.~M. Grant.
\newblock Orthogonal least squares learning algorithm for radial basis function
  networks.
\newblock {\em Neural Networks, IEEE Transactions on}, 2(2):302--309, 1991.

\bibitem{Coifman}
C.~Coifman, Y.~Meyer, and M.~Wickerhauser.
\newblock Wavelet analysis and signal processing.
\newblock pages 153--178, 1992.

\bibitem{gavish2010multiscale}
M.~Gavish, B.~Nadler, and R.~R. Coifman.
\newblock Multiscale wavelets on trees, graphs and high dimensional data:
  Theory and applications to semi supervised learning.
\newblock pages 367--374, 2010.

\bibitem{Goodfellow}
I.~J. Goodfellow, D.~Warde-Farley, M.~Mirza, A.~Courville, and Y.~Benjio.
\newblock Maxout networks.
\newblock {\em arXiv preprint arXiv:1302.4389}, 2013.

\bibitem{hinton2006fast}
G.~Hinton, S.~Osindero, and Y.-W. Teh.
\newblock A fast learning algorithm for deep belief nets.
\newblock {\em Neural computation}, 18(7):1527--1554, 2006.

\bibitem{Hinton}
G.~E. Hinton, N.~Srivastava, A.~Krizhevsky, I.~Sutskever, and R.~Salakhutdinov.
\newblock Improving neural networks by preventing co-adaptation of feature
  detectors.
\newblock {\em arXiv preprint arXiv:1207.0580}, 2012.

\bibitem{Jia}
Y.~Jia, C.~Huang, and T.~Darrell.
\newblock Beyond spatial pyramids: Receptive field learning for pooled image
  features.
\newblock {\em CVPR}, pages 3370--3377, 2012.

\bibitem{jones2011randomized}
P.~W. Jones, A.~Osipov, and V.~Rokhlin.
\newblock Randomized approximate nearest neighbors algorithm.
\newblock {\em Proceedings of the National Academy of Sciences},
  108(38):15679--15686, 2011.

\bibitem{Labusch}
K.~Labusch, E.~Barth, and Martinetz. T.
\newblock Simple method for highperformance digit recognition based on sparse
  coding.
\newblock {\em IEEE TNN}, 19(11):1985--1989, 2008.

\bibitem{Le}
Q.~Le, T.~Sarlos, and A.~Smola.
\newblock Fastfood - approximating kernel expansions in loglinear time.
\newblock {\em ICML}, 2013.

\bibitem{LeCun}
Y.~LeCun, K.~Kavukvuoglu, and C.~Farabet.
\newblock Convolutional networks and applications in vision.
\newblock {\em Proc. IEEE Int. Sump. Circuits and Systems}, 2010.

\bibitem{Lee}
C.-Y. Lee, S.~Xie, P.~Gallagher, Z.~Zhang, and Z.~Tu.
\newblock Deeply supervised nets.
\newblock {\em arXiv preprint arXiv:1409.5185}, 2014.

\bibitem{Lin}
T.-H. Lin and H.-T. Kung.
\newblock Stable and efficient representation learning with nonnegativity
  constraints.
\newblock {\em ICML}, 2014.

\bibitem{mallat2012group}
S.~Mallat.
\newblock Group invariant scattering.
\newblock {\em Communications on Pure and Applied Mathematics},
  65(10):1331--1398, 2012.

\bibitem{mallat2013deepscat}
S.~Mallat and I.~Waldspurger.
\newblock Deep learning by scattering.
\newblock {\em arXiv preprint arXiv:1306.5532}, 2013.

\bibitem{maurey1991}
B.~Maurey.
\newblock Some deviation inequalities.
\newblock {\em Geometric \& Functional Analysis}, 1(2):188--197, 1991.

\bibitem{ngiam2010tiled}
J.~Ngiam, Z.~Chen, D.~Chia, P.~W. Koh, Q.~V. Le, and A.~Y. Ng.
\newblock Tiled convolutional neural networks.
\newblock In {\em Advances in Neural Information Processing Systems}, pages
  1279--1287, 2010.

\bibitem{Oyallon}
E.~Oyallon and S.~Mallat.
\newblock Deep roto-translation scattering for object classification.
\newblock {\em arXiv preprint arXiv:1412.8659}, 2014.

\bibitem{pisier1985}
G.~Pisier.
\newblock Probabilistic methods in the geometry of banach spaces.
\newblock {\em Springer Lecture Notes in Math. 1206}, pages 167--241, 1985.

\bibitem{match-fast-square}
R.~Preis.
\newblock Linear time 1/2-approximation algorithm for maximum weighted matching
  in general graphs.
\newblock pages 259--269, 1999.

\bibitem{rifai2011contractive}
S.~Rifai, P.~Vincent, X.~Muller, X.~Glorot, and Y.~Bengio.
\newblock Contractive auto-encoders: Explicit invariance during feature
  extraction.
\newblock In {\em Proceedings of the 28th International Conference on Machine
  Learning (ICML-11)}, pages 833--840, 2011.

\bibitem{rustamov2013wavelets}
R.~Rustamov and L.~Guibas.
\newblock Wavelets on graphs via deep learning.
\newblock pages 998--1006, 2013.

\bibitem{sermanet2013overfeat}
P.~Sermanet, D.~Eigen, X.~Zhang, M.~Mathieu, R.~Fergus, and Y.~LeCun.
\newblock Overfeat: Integrated recognition, localization and detection using
  convolutional networks.
\newblock {\em arXiv preprint arXiv:1312.6229}, 2013.

\bibitem{shuman2013emerging}
D.~I. Shuman, S.~K. Narang, P.~Frossard, A.~Ortega, and P.~Vandergheynst.
\newblock The emerging field of signal processing on graphs: Extending
  high-dimensional data analysis to networks and other irregular domains.
\newblock {\em Signal Processing Magazine, IEEE}, 30(3):83--98, 2013.

\bibitem{Yu}
D.~Yu and L.~Deng.
\newblock Deep convex net: A scalable architecture for speech pattern
  classification.
\newblock {\em in Proc. INTERSPEECH}, pages 2285--2288, 2011.

\end{thebibliography}
%
% once the .bbl file has been generated then place the text in your article.

% To get the numbered reference style the author should use [numbib]
%as an option in the document class.  For example: \documentclass[numbib]{imaiai}

%\begin{thebibliography}{9}
%
%\bibitem{structured haar scattering}
%\textsc{Bengio, Y.}  (2009) Learning Deep Architectures for AI.
%  \emph{Foundations and Trends in Machine Learning}, \textbf{2}(1), 1--127.
%
%\bibitem{Joan}
%\textsc{Bruna, J.  {\small \&} Mallat, S.}  (2013) Invariant Scattering
%  Convolution Networks. \emph{IEEE Trans. PAMI}, \textbf{35}(8), 1872--1886.
%
%\end{thebibliography}

%%%%%%%%%%%%%%%%%%%%%%%%%%%%%%%%%%%%%%%%%%%

\clearpage

\appendix

\numberwithin{equation}{section}
\numberwithin{theorem}{section}

\section{Proof of Theorem \ref{prop:order-m-coeff}}
\label{app:1}

\begin{proof}[Proof of Theorem \ref{prop:order-m-coeff}]
We derive from the definition of a scattering transform in equations (3,4) in the text that
\begin{equation*}
\begin{split}
S_{j+1} x(n, 2 q) 
  &= S_j x(\pi_j(2n), q) + S_j x(\pi_j(2n+1), q) 
   = \lb \overline{S}_j x(\cdot, q ), 1_{V_{j+1}, n}\rb,  \\
S_{j+1} x(n, 2 q+ 1)
 & = | S_j x(\pi_j(2n), q) - S_j x(\pi_j(2n+1), q) | 
   =  | \lb \overline{S}_j x(\cdot, q ), \psi_{j+1, n} \rb|. 
\end{split}
\end{equation*}
where $V_{j+1,n} = V_{j, \pi_j(2n)} \cup  V_{j, \pi_j(2n+1)}$.
Define $\kappa = 2^{-j}q = \sum_{k=1}^m 2^{-j_k}$.
Observe that
\begin{equation*}
2^{j_{m+1}} ( \kappa + 2^{-j_{m+1}})
 =  2^{j_{m+1}} \kappa + 1
 = 2 (  2^{j_{m+1}-1} \kappa ) + 1,
\end{equation*}
thus $S_{j_{m+1}} x (n,  2^{j_{m+1}} ( \kappa + 2^{-j_{m+1}})  )$ is calculated from the coefficients $ S_{j_{m+1} -1} x ( n ,   2^{j_{m+1}-1} \kappa   )  $  of the previous layer with
\begin{equation}
\label{eqn:jm+1}
S_{j_{m+1}} x (n,   2^{j_{m+1}} ( \kappa + 2^{-j_{m+1}})   ) 
 = |\lb \overline{S}_{j_{m+1} -1} x (\cdot,   2^{j_{m+1}-1} \kappa   ), \psi_{j_{m+1}, n} \rb|.
\end{equation}
Since $2^{j+1}\kappa = 2 \cdot 2^{j}\kappa$,  the coefficient 
$S_{j_{m+1}-1} x (n,   2^{j_{m+1}-1} \kappa    )$
is calculated from $S_{j_{m}} x (n, 2^{j_m} \kappa  )$ by $(j_{m+1}-1-j_m)$ times additions, and thus
\begin{equation}
\label{eqn:fromjmtojm+1-1}
S_{j_{m+1} -1} x (n,   2^{j_{m+1}-1} \kappa   )
 = \lb \overline{S}_{j_m} x(\cdot,  2^{j_m} \kappa   ), 1_{V_{j_{m+1}-1, n}} \rb.
\end{equation}
Combining equations (\ref{eqn:fromjmtojm+1-1}) and (\ref{eqn:jm+1}) gives
\begin{equation}
\label{eqn:jm+1b}
S_{j_{m+1}} x (n,   2^{j_{m+1}} ( \kappa + 2^{-j_{m+1}}) ) 
 = |\lb \overline{S}_{j_m} x(\cdot,   2^{j_m} \kappa  ), \psi_{j_{m+1}, n} \rb|.
\end{equation}
We go from the depth $j_{m+1}$ to the depth $j \geq j_{m+1}$ by computing
\[
 S_j x (n,  2^{j} ( \kappa + 2^{-j_{m+1}})   ) 
= \lb \overline {S}_{j_{m+1}} x (\cdot,   2^{j_{m+1}} ( \kappa + 2^{-j_{m+1}})  ) , 1_{V_{j, n}} \rb .
\]
Together with (\ref{eqn:jm+1b}) it proves the equation (\ref{propsdfnsd}) of
the proposition. The summation over $p, \, V_{j_{m+1}, p}\subset V_{j, n} $ comes from the inner product $\lb 1_{V_{j_{m+1}, p}}, 1_{V_{j,n}}  \rb$.
This also proves that $\kappa + 2^{-j_{m+1}}$ is the index
of a coefficient of order $m+1$.
\end{proof}

%%%%%%%%%%%%%%%%%%%%%%%
\section{Proof of Theorem \protect \ref{thm:n>dlogd}}
\label{app2}

%The following theorem proves that $O(d\log d)$ many samples is sufficient to guarantee that the solution of $l^{1}$-pairing is a $G$-pairing with high probability.

The theorem is proved by analyzing the concentration of the  objective function 
around its expected value as the sample number $N$ increases. 
We firstly introduce the Pisier and Maurey's version of the Gaussian concentration inequality for Lipschitz functions.
\begin{proposition}[gaussian concentration for Lipschitz function \cite{pisier1985, maurey1991}] 
\label{prop:gaussian-concentration-Lipschitz}
Let $z_{1},...z_{m}$ be i.i.d $N(0,1)$ random variabls, and $f=f(z_{1},\cdots,z_{m})$
a 1-Lipschitz function, then there exists $c_{0}>0$ so that 
\[
\Pr[ f -\E f > t] <\exp\{-c_{0}t^{2} \}~
\mbox{and}~ 
\Pr[ f - \E f< - t]<\exp\{-c_{0}t^{2} \}
,\quad\forall t>0.
\]
\end{proposition}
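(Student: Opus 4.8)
The plan is to establish this as the classical Gaussian concentration bound via the Maurey--Pisier rotation-interpolation argument, which is the route suggested by the cited references and which naturally produces a constant $c_0 = 2/\pi^2$, consistent with the $\pi$-factors appearing in (\ref{theoansdf}). The Proposition only requires \emph{some} $c_0 > 0$, so any of the standard routes (log-Sobolev plus Herbst, or Gaussian isoperimetry) would suffice, but the interpolation argument is the cleanest and keeps track of the right constant. First I would reduce to the case where $f$ is smooth: a $1$-Lipschitz $f$ can be approximated by $f_\varepsilon = f * \gamma_\varepsilon$ (convolution with a narrow Gaussian), which is $C^\infty$, still $1$-Lipschitz, satisfies $|\nabla f_\varepsilon| \le 1$, and converges to $f$ with $\E f_\varepsilon \to \E f$; the tail bound for $f$ then follows by passing to the limit. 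So I may assume $f$ is smooth with $|\nabla f| \le 1$.

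The heart of the argument is the interpolation inequality asserting that, for every convex $\Phi$ and independent standard Gaussians $X,Y$ in $\R^m$,
\[
\E\big[\Phi(f(X) - \E f)\big] \le \E_{X,Y}\Big[\Phi\big(\tfrac{\pi}{2}\lb \nabla f(X), Y\rb\big)\Big].
\]
To prove it I would interpolate along the rotation $X_\theta = X\sin\theta + Y\cos\theta$, $\theta \in [0,\pi/2]$, whose derivative is $X_\theta' = X\cos\theta - Y\sin\theta$. The key observation is that $(X_\theta, X_\theta')$ is an orthogonal transform of $(X,Y)$ applied coordinatewise, hence for each fixed $\theta$ it is again a pair of independent standard Gaussians, so $\lb \nabla f(X_\theta), X_\theta'\rb$ has the same law as $\lb \nabla f(X), Y\rb$. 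Writing $f(X) - f(Y) = \int_0^{\pi/2} \lb \nabla f(X_\theta), X_\theta'\rb\, d\theta$ and applying Jensen's inequality with the normalized measure $\tfrac{2}{\pi}\,d\theta$ on $[0,\pi/2]$ gives $\Phi(f(X)-f(Y)) \le \tfrac{2}{\pi}\int_0^{\pi/2} \Phi\big(\tfrac{\pi}{2}\lb \nabla f(X_\theta), X_\theta'\rb\big)\, d\theta$. Taking $\E_{X,Y}$ and using the $\theta$-independence of the law collapses the integral to the right-hand side above, while a second application of Jensen in $Y$ (for fixed $X$) gives $\E_Y[\Phi(f(X)-f(Y))] \ge \Phi(f(X)-\E f)$ on the left.

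With the interpolation inequality in hand, I would take $\Phi(u) = e^{\lambda u}$ for $\lambda > 0$. Conditioning on $X$ and computing the Gaussian moment generating function over $Y$ gives $\E_Y\big[\exp(\tfrac{\lambda\pi}{2}\lb \nabla f(X), Y\rb)\big] = \exp(\tfrac{\lambda^2\pi^2}{8}|\nabla f(X)|^2) \le \exp(\tfrac{\lambda^2\pi^2}{8})$, using $|\nabla f| \le 1$. Hence the centered moment generating function satisfies $\E[e^{\lambda(f - \E f)}] \le \exp(\lambda^2\pi^2/8)$. A Chernoff bound then yields $\Pr[f - \E f > t] \le \exp(-\lambda t + \lambda^2\pi^2/8)$, and optimizing over $\lambda$ (at $\lambda = 4t/\pi^2$) gives $\Pr[f - \E f > t] \le \exp(-2t^2/\pi^2)$, i.e. $c_0 = 2/\pi^2$. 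The lower tail is identical applied to $-f$, which is also $1$-Lipschitz.

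The main obstacle I anticipate is the interpolation inequality itself, specifically verifying cleanly that the rotated pair $(X_\theta, X_\theta')$ has, for every $\theta$, exactly the joint law of $(X,Y)$, since this is precisely what makes the right-hand side independent of $\theta$ and allows the double Jensen argument to close. The smoothing reduction and the final Chernoff optimization are routine; the care is concentrated in the two nested applications of Jensen (one in the interpolation parameter $\theta$, one in $Y$) together with the orthogonality bookkeeping of the rotation.
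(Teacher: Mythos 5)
Your proof is correct, and it matches the paper's treatment in the only sense available: the paper does not prove this proposition but cites it, and your argument is precisely the Maurey--Pisier rotation-interpolation proof from the cited sources, recovering the constant $c_0 = 2/\pi^2$ that the paper itself attributes to \cite{pisier1985} (all steps check out: the smoothing reduction, the equality in law of $(X_\theta, X_\theta')$ with $(X,Y)$ for each fixed $\theta$, the two applications of Jensen, the conditional Gaussian moment generating function bound $\exp(\lambda^2\pi^2 |\nabla f(X)|^2/8)$, and the optimization at $\lambda = 4t/\pi^2$). The only cosmetic mismatch is that the Chernoff step yields the non-strict bound $\Pr[f - \E f > t] \le \exp\{-2t^2/\pi^2\}$ rather than the strict inequality as stated, a harmless discrepancy in the statement rather than a gap in your argument.
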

In the above proposition, the constant $c_0 =\frac{2}{\pi^{2}}$ according to \cite{pisier1985} and $1/4$ in \cite{maurey1991}.

To prove the theorem, recall that the pairing problem is
computed by minimizing the $\bf l^1$ norm (\ref{msdifnsdf2}) which 
up to a normalization amounts to compute:
\begin{equation}
\pi^{*}=\arg\min_{ \pi \in \Pi_{d}} F(\pi)~\mbox{with}~
F(\pi ) = \frac{1}{N} \sum_{i=1}^{N } \sum_{(u,v)\in \pi}|x_{i}(u)-x_{i}(v)|,
\label{eq:pairing-problem-layer1}
\end{equation}
where $\pi$ is a pairing of $d$ elements and we denote by $\Pi_d$ the set of all possible such pairings.

The following lemma proves that
$F( \pi )$ is a Lipschitz function of independent gaussian random variables,
with a Lipschitz constant equal to $\|\Sigma_d\|_{op}^{1/2}$, where
$\|\Sigma_d\|_{op}$ is the operator norm of the covariance. 
We prove it on the normalized function $f = N^{1/2} d^{-1/2} F$. 

\begin{lemma}\label{lemma:F-Lipschitz}
 Let $x_{i}=\Sigma_{d}^{1/2} z_{i}$ with $z_{i}=(z_{i}(1),...,z_{i}(d))^{T}\sim \N (0,I_{d})$ i.i.d. Given any pairing $\pi \in\Pi_{d}$, define 
\[
f (\{(z_{i}(v)\}_{1\le i\le N, 1\le v\le d})
=\frac{1}{\sqrt{d N}}\sum_{i=1}^{N}\sum_{(u,v)\in \pi} |x_{i}(u)-x_{i}(v)|,
\]
then $f$ is a Lipschitz function with constant $\sqrt{ \| \Sigma_d\|_{op} }$, which does not depend on $\pi$. 
\end{lemma}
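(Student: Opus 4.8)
The plan is to write $f$ as a composition $f = g \circ L$, where $L$ is the linear ``coloring'' map sending the white Gaussian vectors to the correlated samples and $g$ collects the normalized sums of absolute differences, and then to propagate the Lipschitz constant through each factor. Concretely, let $L:\R^{dN}\to\R^{dN}$ be the block-diagonal map $(z_1,\dots,z_N)\mapsto(\Sigma_d^{1/2}z_1,\dots,\Sigma_d^{1/2}z_N)=(x_1,\dots,x_N)$, and let
\[
g(x_1,\dots,x_N)=\frac{1}{\sqrt{dN}}\sum_{i=1}^N\sum_{(u,v)\in\pi}|x_i(u)-x_i(v)|,
\]
so that $f = g\circ L$. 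I would prove that $g$ is $1$-Lipschitz on $x$-space and that $\|L\|_{op}=\sqrt{\|\Sigma_d\|_{op}}$; chaining the two bounds then gives the claim.

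For the Lipschitz bound on $g$, rather than differentiate (the absolute value is non-smooth on the measure-zero set $\{x_i(u)=x_i(v)\}$), I would argue directly. Writing $\delta_i = x_i - x_i'$ and using $\big||a|-|b|\big|\le|a-b|$ term by term gives
\[
|g(x)-g(x')|\le\frac{1}{\sqrt{dN}}\sum_{i=1}^N\sum_{(u,v)\in\pi}|\delta_i(u)-\delta_i(v)|
\le\frac{1}{\sqrt{dN}}\sum_{i=1}^N\sum_{(u,v)\in\pi}\big(|\delta_i(u)|+|\delta_i(v)|\big).
\]
The crucial structural fact is that $\pi$ is a perfect matching, so every index $w\in\{1,\dots,d\}$ occurs in exactly one pair; hence the inner double sum equals $\|\delta_i\|_1$. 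Two applications of Cauchy--Schwarz --- first $\|\delta_i\|_1\le\sqrt{d}\,\|\delta_i\|_2$, then $\sum_i\|\delta_i\|_2\le\sqrt{N}\,(\sum_i\|\delta_i\|_2^2)^{1/2}$ --- cancel the normalization exactly, yielding $|g(x)-g(x')|\le\|x-x'\|_2$. This bound used nothing about $\pi$ beyond its being a matching, which is precisely why the constant is independent of $\pi$.

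Finally, since $L$ is block diagonal with $N$ identical blocks $\Sigma_d^{1/2}$, its operator norm equals that of a single block, $\|\Sigma_d^{1/2}\|_{op}=\|\Sigma_d\|_{op}^{1/2}$ (using that $\Sigma_d$ is symmetric positive semidefinite). Composing, $|f(z)-f(z')|=|g(Lz)-g(Lz')|\le\|Lz-Lz'\|_2\le\|\Sigma_d\|_{op}^{1/2}\,\|z-z'\|_2$, which is the asserted Lipschitz constant. I expect the only real subtlety to be the bookkeeping in the step where the matching structure reduces the paired sum to $\|\delta_i\|_1$, together with resisting a gradient argument that would otherwise have to treat the non-differentiability of $|\cdot|$; the direct triangle-inequality route sidesteps that issue cleanly.
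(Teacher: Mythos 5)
Your proof is correct, and it takes a genuinely different route from the paper's. The paper differentiates: it computes $\frac{\partial f}{\partial z_i(v')} = \frac{1}{\sqrt{dN}}(\Sigma_d^{1/2}S_i)(v')$, where $S_i = (\mathrm{Sgn}(x_i(u)-x_i(\pi(u))))_{u=1}^d$ is a $\pm 1$ sign vector, and then bounds $\|\nabla_z f\|^2 = \sum_{i=1}^N \frac{1}{dN}\|\Sigma_d^{1/2}S_i\|^2 \le \|\Sigma_d\|_{op}$ using $\|S_i\| = \sqrt{d}$. Your factorization $f = g\circ L$, with the direct estimate $|g(x)-g(x')|\le \frac{1}{\sqrt{dN}}\sum_i\|\delta_i\|_1\le\|x-x'\|_2$ via the reverse triangle inequality, the matching structure, and two applications of Cauchy--Schwarz, reaches the same constant without ever differentiating; note that the matching property plays the identical role in both arguments (for the paper it gives $\|S_i\|^2=d$, for you it collapses the paired sum to $\|\delta_i\|_1$). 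What your route buys: it is rigorous as stated, whereas the paper's gradient computation is only valid off the measure-zero set where some $x_i(u)=x_i(\pi(u))$ and implicitly relies on the standard fact that an a.e.\ gradient bound controls the Lipschitz constant of a locally Lipschitz function; your version also makes the $\pi$-independence of the constant completely transparent. What the paper's route buys: the explicit sign vector satisfies $\sum_u S_i(u)=0$, i.e.\ $S_i$ is orthogonal to the constant eigenvector, which the paper exploits immediately after the lemma to sharpen the constant to $\sqrt{\max_{k>0}|\hat{\rho}(k)|}$. Your step $|\delta_i(u)-\delta_i(v)|\le|\delta_i(u)|+|\delta_i(v)|$ discards precisely that cancellation, so to recover the refinement you would have to observe separately that $g(x)-g(x')$ depends only on paired differences, hence $g$ is invariant under adding a constant vector to each $x_i$, and then compose with $\Sigma_d^{1/2}$ restricted to the orthocomplement of the constant direction (which is invariant by circular stationarity).
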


\begin{proof} 
With slight abuse of notation, denote by $v = \pi (u)$ if two nodes $u$ and $v$ are paired by $\pi$, %$(u,v)\in \pi$, 
then we have 
\begin{eqnarray*}
\frac{\partial f}{\partial z_{i}(v')} 
 & = & \frac{1}{\sqrt{d N}}\sum_{(u,v)\in \pi} \text{Sgn}(x_{i}(u)-x_{i}(v))\frac{\partial}{\partial z_{i}(v')}(x_{i}(u)-x_{i}(v))\\
 & = & \frac{1}{\sqrt{d N}}\sum_{u=1}^{d}\text{Sgn}(x_{i}(u)-x_{i}(\pi (u)))\frac{\partial}{\partial z_{i}(v')}x_{i}(u)\\
 & = & \frac{1}{\sqrt{d N}}\sum_{u=1}^{d}\text{Sgn}(x_{i}(u)-x_{i}(\pi (u)))(\Sigma_{d}^{1/2})_{u,v'}\\
 & = & \frac{1}{\sqrt{d N}}(\Sigma_{d}^{1/2}S_{i})(v'),
\end{eqnarray*}
where $S_{i}:=(\text{Sgn}(x_{i}(u)-x_{i}( \pi (u))))_{u=1}^{d}$ is a vector of length $d$ whose entries are $\pm1$. Then
\[
\|\Sigma_{d}^{1/2}S_{i}\| \le \sqrt{ \| \Sigma_d\|_{op} d},
\]
and it follows that 
\begin{eqnarray*}
\|\nabla_{z}f\|^{2} 
 & = & \sum_{i=1}^{N} \sum_{v'=1}^{d}\left|\frac{\partial f}{\partial z_{i}(v')}\right|^{2}\\
 & = & \sum_{i=1}^{N} \frac{1}{dN}\|\Sigma_{d}^{1/2}S_{i}\|^{2}
 \le \| \Sigma_d\|_{op}.
\end{eqnarray*}
\end{proof}

Observe that  the eigenvalues of $\Sigma_d$ are the discrete Fourier transform coefficients of the periodic correlation function $\rho(u)$
\[
\hat{\rho}(k)
=\sum_{j=0}^{d-1}\rho(j)\exp\{-i2\pi\frac{jk}{d}\}
=\sum_{j=0}^{d-1}\rho(j)\cos(2\pi\frac{jk}{d}),
\quad k=0,...,d-1.
\]
Observe that  $\sum_{u=1}^{d}S_{i}(u)=0$ for each $i$, that is, $S_{i}$ is orthogonal to the eigenvector of $\hat{\rho}(0)$. So the Lipschitz constant $\sqrt{ \|\Sigma_d\|_{op}} = \sqrt{\max_{k} |\hat{\rho}(k)| }$ can be slightly improved to be $ \sqrt{\max_{k > 0} |\hat{\rho}(k)| }$.

Let us now prove the claim of Theorem \ref{thm:n>dlogd}. Since the pairing has a probability larger than $1 - \epsilon$  to be connected if $\Pr [\pi^{*}\notin\Pi_{d}^{(0)}] < \epsilon$, we need to show that under the inequality (\ref{theoansdf}) the probability $\Pr[ \pi^* \notin\Pi_{d}^{(0)} ]$ is less than $\epsilon$.
Let us denote
\begin{equation}
\label{alphadef}
\alpha_{u}=\sqrt{\frac{2}{\pi}\cdot2(1-\rho(u))},
~\mbox{and}~
\bar{\alpha}_2 = \min_{ 2 \le u \le d/2 } \alpha_u,
\end{equation}
and define 
\begin{equation}
\label{Crho}
C_{\rho}=\frac{c_{0}}{\|\Sigma_d\|_{op}}\left(\frac{1}{2}(\bar{\alpha}_{2}-\alpha_{1})\right)^{2}.
\end{equation}
Then Eqn. (\ref{theoansdf}) can be rewritten as 
\begin{equation}
C_{\rho}\frac{N}{d} > 3\log d - \log \epsilon.
\label{theoansdf-2}
\end{equation}

As a result of Proposition \ref{prop:gaussian-concentration-Lipschitz} and Lemma \ref{lemma:F-Lipschitz}, if $C=\frac{c_{0}}{\|\Sigma_d\|_{op}} \cdot N/d$ then $\forall \pi \in\Pi_{d}$,
\[
\Pr[ F(\pi ) - \E F(\pi )>\delta]<\exp\{-C\delta^{2}\}
~\mbox{and}~
\, \Pr[ F(\pi) -  \E F(\pi)  <- \delta]<\exp\{-C\delta^{2}\}
,\quad\forall\delta>0.
\]
Observe that 
\[
\Pi_{d}=\bigcup_{m=0}^{d/2}\Pi_{d}^{(m)}
\]
where $\Pi_{d}^{(m)}$ are the set of pairings which have $m$ non-neighbor pairs. $\Pi_{d}^{(0)}$ is the set of pairings which only pair connected nodes in the graph, and for the ring graph $\Pi_{d}^{(0)}=\{ \pi_{0}^{(0)}, \pi_{1}^{(0)} \}$ the two of which interlace. 
For any $ \pi \in \Pi_{d}^{(m)}$, suppose that there are $m_{l}$ pairs in $\pi $ so that the distance between the two paired nodes is $l$, $m_{1}=d/2-m$, $m_{2}+\cdots+m_{d/2}=m$.

Recalling the definition of $\alpha_k$ in Eq. (\ref{alphadef}), we verify that
\[
\E F( \pi )
= \alpha_{1}(\frac{d}{2}-m)+\alpha_{2}m_{2}+\cdots+
   \alpha_{d/2}m_{d/2}
\ge \alpha_{1}(\frac{d}{2}-m)+\bar{\alpha}_{2}m
\]
when $m\ge1$, and 
\[
\E F( \pi_{0}^{(0)}) =\E F( \pi_{1}^{(0)}) =\alpha_{1}\frac{d}{2}.
\]
Thus when $m\ge1$, 
\[
\E F( \pi )- \E F( \pi_{0}^{(0)})  \ge ( \bar{\alpha}_{2}-\alpha_{1})m, \quad\forall \pi \in\Pi_{d}^{(m)}.
\]
Define 
\[
\delta_{m}= \frac{1}{2}( \bar{\alpha}_{2}-\alpha_{1})m,\quad m=1,...,d/2,
\]
and we have that 
\begin{eqnarray}
\Pr [ \pi^{*} \notin\Pi_{d}^{(0)}] 
 & = & \Pr[\exists \pi \in\bigcup_{m=1}^{d/2}\Pi_{d}^{(m)},\,   F( \pi )<\min\{F(\pi_{0}^{(0)}),F( \pi_{1}^{(0)})\}]\nonumber \\
 & \le & \Pr [F(\pi_{0}^{(0)})>\E F(\pi_{0}^{(0)})+\delta_{1}]\nonumber \\
 &  &  +\Pr [F(\pi_{0}^{(0)})<\E F(\pi_{0}^{(0)})+\delta_{1},\,
	\exists \pi \in \bigcup_{m=1}^{d/2}\Pi_{d}^{(m)},\, F( \pi )<F(\pi_{0}^{(0)})]\nonumber \\
 & \le & \Pr [F(\pi_{0}^{(0)})> \E F(\pi_{0}^{(0)})+\delta_{1}] \nonumber \\
 &  & + \Pr[\exists \pi \in\bigcup_{m=1}^{d/2}\Pi_{d}^{(m)},\, F(\pi )<\E F(\pi )-\delta_{m}],\,
        (\text{by that }(\bar{\alpha}_{2}-\alpha_{1})m-\delta_{1}\ge\delta_{m})\nonumber \\
 & \le & \exp\{-C\delta_{1}^{2}\}+\sum_{m=1}^{d/2}|\Pi_{d}^{(m)}|\exp\{-C\delta_{m}^{2}\}\nonumber \\
 & = & \exp\{-C_{\rho}\frac{N}{d}\}
         +\sum_{m=1}^{d/2}|\Pi_{d}^{(m)}|\exp\{-C_{\rho}\frac{N}{d}m^{2}\},
\label{eq:prob-fail1}
\end{eqnarray}
where $C_\rho$ is as in Eq. (\ref{Crho}).

One can verify the following upper bound for the cardinal number of $\Pi_{d}^{(m)}$: 
\[
|\Pi_{d}^{(m)}|\le\frac{d^{2m}}{(2m)!}.
\]
With the crude bound $(2m)!\ge1$, the above inequality 
inserted in (\ref{eq:prob-fail1}) gives
\begin{equation}
\Pr [ \pi^{*}\notin\Pi_{d}^{(0)}]
\le\exp\{-C_{\rho}\frac{N}{d}\}+\sum_{m=1}^{d/2}d^{2m}\exp\{-C_{\rho}\frac{N}{d}m^{2}\}.
\label{eq:prob-fail2}
\end{equation}
If we keep the factor $(2m)! $, the upper bound for the summation over $m$ can be improved to be
\[ 
d^2\exp\{ -C_{\rho} N/d \}\sum_{m=1}^{d/2} ((2m)!)^{-1} 
\le c \cdot d^2\exp\{ -C_{\rho} N/d \} 
\] 
 where $c=(e-1)/2$ is an absolute constant. By applying this in 
the final bound in the theorem, the constant in front of $\log d$ is 2 instead of 3. The constant of the theorem is not tight, while the $O(d\log d)$ is believed to be the tight order as $d$ increases.

To proceed, define the function 
\[
g(x)=-C_{\rho}\frac{N}{d}\cdot x^{2}+(2\log d)\cdot x,\quad1\le x\le\frac{d}{2},
\]
and observe that $\max_{1\le x\le d/2}g(x)= g(1)$ whenever 
\[
\frac{\log d}{C_{\rho} N/d}<1,
\]
which holds as long as Eq. (\ref{theoansdf-2}) is satisfied. Thus we have
\[
\sum_{m=1}^{d/2}d^{2m}\exp\{-C_{\rho}\frac{N}{d}m^{2}\}
\le \sum_{m=1}^{d/2}d^{2}\exp\{-C_{\rho}\frac{N}{d}\}
  =  \frac{d^{3}}{2}\exp\{-C_{\rho}\frac{N}{d}\},
\]
then the inequality (\ref{eq:prob-fail2}) becomes
\[
\Pr [\pi^{*}\notin\Pi_{d}^{(0)}]
 \le\left(\frac{d^{3}}{2}+1\right)\exp\{-C_{\rho}\frac{N}{d}\}
\le\exp\{-C_{\rho}\frac{N}{d}+3\log d\}.
\]
To have $\Pr [\pi^{*}\notin\Pi_{d}^{(0)}] < \epsilon$, a sufficient condition is therefore
\begin{equation}
\label{oisdfw}
\exp\{-C_{\rho}\frac{N}{d}+3\log d\} < \epsilon,
\end{equation}
which is reduced to Eq. (\ref{theoansdf-2}) and equivalently Eq. (\ref{theoansdf}).

%%%%%

\end{document}